\documentclass{article}




\usepackage[final, nonatbib]{neurips_2019}


\usepackage[utf8]{inputenc} 
\usepackage[T1]{fontenc}    
\usepackage{hyperref}       
\usepackage{url}            
\usepackage{booktabs}       
\usepackage{amsfonts}       
\usepackage{nicefrac}       
\usepackage{microtype}      
\usepackage{algorithm}
\usepackage{algorithmic}
\usepackage{graphicx}
\usepackage{subfigure}
\usepackage{caption}
\usepackage{pdfpages}
\usepackage{amsthm}
\usepackage{url}
\usepackage{comment}
\usepackage{amsmath}
\usepackage{amssymb}

\usepackage{listings}
\usepackage{xcolor}

\definecolor{codegreen}{rgb}{0,0.6,0}
\definecolor{codegray}{rgb}{0.5,0.5,0.5}
\definecolor{codepurple}{rgb}{0.58,0,0.82}
\definecolor{backcolour}{rgb}{0.95,0.95,0.92}

\lstdefinestyle{mystyle}{
    backgroundcolor=\color{backcolour},
    commentstyle=\color{codegreen},
    keywordstyle=\color{magenta},
    numberstyle=\tiny\color{codegray},
    stringstyle=\color{codepurple},
    basicstyle=\ttfamily\tiny,
    breakatwhitespace=false,
    breaklines=true,
    captionpos=b,
    keepspaces=true,
    numbers=left,
    numbersep=5pt,
    showspaces=false,
    showstringspaces=false,
    showtabs=false,
    tabsize=2
}

\lstset{style=mystyle}

\usepackage{color}
\newcount\Comments  
\Comments=1 
\definecolor{darkgreen}{rgb}{0,0.5,0}
\definecolor{darkred}{rgb}{0.7,0,0}
\definecolor{teal}{rgb}{0.3,0.8,0.8}
\newcommand{\kibitz}[2]{\ifnum\Comments=1\textcolor{#1}{#2}\fi}

\title{Explicit Explore-Exploit Algorithms in Continuous State Spaces}

%

\author{
  Mikael Henaff \\
  Microsoft Research \\
  \texttt{mihenaff@microsoft.com}
}

\newtheorem{theorem}{Theorem}
\newtheorem{lemma}{Lemma}
\newtheorem{proposition}{Proposition}

\newtheorem{techlemma}{Technical Lemma}

\newtheorem{definition}{Definition}

\newcommand{\rulesep}{\unskip\ \vrule\ }

\begin{document}

\maketitle

\begin{abstract}
  We present a new model-based algorithm for reinforcement learning (RL) which consists of explicit exploration and exploitation phases, and is applicable in large or infinite state spaces.
  The algorithm maintains a set of dynamics models consistent with current experience and explores by finding policies which induce high disagreement between their state predictions. It then exploits using the refined set of models or experience gathered during exploration.
  We show that under realizability and optimal planning assumptions, our algorithm provably finds a near-optimal policy with a number of samples that is polynomial in a structural complexity measure which we show to be low in several natural settings. We then give a practical approximation using neural networks and demonstrate its performance and sample efficiency in practice.
\end{abstract}

\section{Introduction}

What is a good algorithm for systematically exploring an environment for the purpose of reinforcement learning?
A good answer could make the application of deep RL to complex problems \cite{mnih2015humanlevel, mnih16, Lillicrap2016ContinuousCW, Hessel2018RainbowCI} much more sample efficient.
In tabular Markov Decision Processes (MDPs) with a small number of discrete states, model-based algorithms which perform exploration in a provably sample-efficient manner have existed for over a decade \cite{Kearns2002, Brafman2003, Strehl2005}.
The first of these, known as the Explicit Explore-Exploit ($E^3$) algorithm \cite{Kearns2002}, progressively builds a model of the environment's dynamics.
At each step, the agent uses this model to plan, either to \textit{explore} and reach an unknown state, or to \textit{exploit} and maximize its reward within the states it knows well.
By actively seeking out unknown states, the algorithm provably learns a near-optimal policy using a number of samples which is at most polynomial in the number of states.
Many problems of interest, however, have a set of states which is infinite or extremely large (for example, all images represented with finite precision), and in these settings, tabular algorithms are no longer applicable.

In this work, we propose a new $E^3$-style algorithm which operates in large or continuous state spaces.
The algorithm maintains a set of dynamics models which are consistent with the agent's current experience, and explores the environment by executing policies designed to induce high disagreement between their predictions.
We show that under realizability and optimal planning assumptions, our algorithm provably finds a near-optimal policy using a number of samples from the environment which is independent of the number of states, and is instead polynomial in the rank of the \textit{model misfit matrix}, a structural complexity measure which we show to be low in natural settings such as small tabular MDPs, large MDPs with factored transition dynamics \cite{Kearns1999} and (potentially infinite) low rank MDPs.
We then present a practical version of the algorithm using neural networks, and demonstrate its performance and sample efficiency empirically on several problems with large or continuous state spaces.


\section{Algorithm}

We consider an episodic, finite-horizon MDP setting defined by a tuple $(\mathcal{S}, \mathcal{A}, M^\star, R^\star, H)$. Here $\mathcal{S}$ is a set of states (which could be large or infinite), $\mathcal{A}$ is a discrete set of actions, $M^\star$ is the true (unknown) transition model mapping state-action pairs to distributions over next states, $R^\star$ is the true function mapping states to rewards in $[0, 1]$, and $H$ is the horizon length.
For simplicity we assume rewards are part of the state and the agent has access to $R^\star$, so the task of predicting future rewards is included in that of predicting future states.
A state $s \in \mathcal{S}$ at time step $h$ will be denoted by $s_h$.

\begin{algorithm}[t]
  \begin{algorithmic}[1]
    \STATE \textbf{Inputs} Initial model set $\mathcal{M}$, policy class $\Pi$, number of trajectories $n$, tolerance $\epsilon$, model error $\phi$.
    \STATE $\mathcal{M}_1 \leftarrow \mathcal{M}$
    \STATE Initialize replay buffer $\mathcal{R} \leftarrow \emptyset$.
    \FOR{$t=1, 2, ...$}
    \STATE $\pi_\mathrm{explore}^t = \mbox{argmax}_{\pi \in \Pi} \Big[ v_{\mathrm{explore}}(\pi, \mathcal{M}_{t}) \Big]$
    \IF{$v_{\mathrm{explore}}(\pi_{\mathrm{explore}}^t, \mathcal{M}_{t}) > \frac{\epsilon}{|\mathcal{A}|}$}
    \STATE Collect dataset of $n$ trajectories following $\pi_{\mathrm{explore}}^t$, add to replay buffer $\mathcal{R}$
    \STATE $\mathcal{M}_{t+1} \leftarrow \texttt{UpdateModelSet}(\mathcal{M}_{t}, \mathcal{R}, \phi)$
    \ELSE
    \STATE Choose any $\tilde{M} \in \mathcal{M}_{t}$
    \STATE $\pi_{\mathrm{exploit}} = \mbox{argmax}_{\pi \in \Pi} \Big[ v_{\mathrm{exploit}}(\pi, \tilde{M}) \Big]$
    \STATE Halt and return $\pi_{\mathrm{exploit}}$
    \ENDIF
    \ENDFOR
\end{algorithmic}
\caption{$(\mathcal{M}, \Pi, n, \epsilon, \phi)$}
\label{alg:main}
\end{algorithm}

The general form of our algorithm is given by Algorithm \ref{alg:main}.
At each epoch $t$, the algorithm maintains a set of dynamics models $\mathcal{M}_{t}$ which are consistent with the experience accumulated so far,
and searches for an exploration policy which will induce high disagreement between their predictions.
If such a policy is found, it is executed and the set of models is updated to reflect the new experience.
Otherwise, the algorithm switches to its exploit phase and searches for a policy which will maximize its predicted future rewards.

Let $P^{\pi, h}_M(\cdot)$ denote the distribution over states at time step $h$ induced by sampling actions from policy $\pi$ and transitions from model $M$, and let $\mathcal{D}(\pi, M, M', h) = \delta(P^{\pi, h}_M(\cdot), P^{\pi, h}_{M'}(\cdot))$, where $\delta$ denotes a distance measure between probability distributions such as KL divergence or total variation.
The quantity which the exploration policy seeks to maximize at epoch $t$ is given by:

\begin{align*}
  v_{\mathrm{explore}}(\pi, \mathcal{M}_{t}) = \max_{M, M' \in \mathcal{M}_{t}} \sum_{h=1}^H \mathcal{D}(\pi, M, M', h)
\end{align*}

Maximizing this quantity can be viewed as solving a fictitious \textit{exploration MDP}, whose state space is the concatenation of $|\mathcal{M}_{t}|$ state vectors in the original MDP, whose transition matrix consists of a block-diagonal matrix whose blocks are the transition matrices of the models in $\mathcal{M}_{t}$, and whose reward function is the distance measured using $\delta$ between the pairs of components of the state vector corresponding to different models.
Importantly, searching for an exploration policy can be done internally by the agent and does not require any environment interaction, which will be key to the algorithm's sample efficiency.

Once the agent can no longer find a policy which induces sufficient disagreement between its candidate models in $\mathcal{M}_t$, it chooses a model and computes an exploitation policy using the model's predicted reward:

\begin{align*}
  v_{\mathrm{exploit}}(\pi, M) = \sum_{h=1}^H \sum_{s_h}P^{\pi, h}_M(s_h) R^\star(s_h)
\end{align*}

\section{Sample Complexity Analysis}

\subsection{Algorithm Instantiation}

We first give an instantiation of Algorithm \ref{alg:main}, called DREEM (DisagReement-led Elimination of Environment Models), for which we will prove sample complexity results. All proofs can be found in Appendix \ref{appendix-proofs}.
The algorithm starts with a large set of candidate models $\mathcal{M}$, which is assumed to contain the true model, and iteratively eliminates models which are not consistent with the experience gathered through the exploration policy.
We will show that the number of samples needed to find a near-optimal policy is independent of the number of states, and is instead polynomial in $|\mathcal{A}|, H, \log{|\mathcal{M}|}, \log{|\Pi|}$, and the rank of the $\textit{model misfit matrix}$, a quantity which we define below and which is low in natural settings.
DREEM is identical to Algorithm \ref{alg:main}, with the \texttt{UpdateModelSet} subroutine instantiated as follows:

\begin{algorithm}[ht!]
  \begin{algorithmic}[1]
    \STATE For each $M \in \mathcal{M}_t, h \leq H$, compute $\mathcal{\widetilde{W}}(\pi_\mathrm{explore}^t, M, h)$ using data from $\mathcal{R}$ collected using the last exploration policy $\pi_\mathrm{explore}^t$
    \STATE $\mathcal{M}_{t+1} \leftarrow \{M \in \mathcal{M}_{t}: \widetilde{\mathcal{W}}(\pi_\mathrm{explore}^t, M, h) \leq \phi \mbox{ for all } h \leq H\}$
    \STATE Return $\mathcal{M}_{t+1}$
\end{algorithmic}
\caption{$\texttt{UpdateModelSet}(\mathcal{M}_{t}, \mathcal{R}, \phi)$}
\label{alg:update-version-space}
\end{algorithm}

The quantity $\mathcal{W}(\pi, M, h)$ can be thought of as the error of model $M$ in parts of the state space visited by $\pi$ at time step $h$, and is formally defined below.

\begin{definition}
  The misfit of model $M$ discovered by policy $\pi$ at time step $h$ is given by:
    \begin{equation*}
      \mathcal{W}(\pi, M, h) = \mathbb{E}_{s_{h-1} \sim P_{M^\star}^{\pi, h-1}, a_{h-1} \sim U(\mathcal{A})}\big[ \|P_M(\cdot | s_{h-1}, a_{h-1}) - P_{M^\star}(\cdot | s_{h-1}, a_{h-1}) \|_{TV} \big]
    \end{equation*}

    The empirical misfit estimated using a dataset collected by following $\pi$ is denoted $\widetilde{\mathcal{W}}(\pi, M, h)$.
\end{definition}

See Appendix \ref{empirical-misfit} for details on computing $\widetilde{\mathcal{W}}$.
We will make use of the following two assumptions:

\textbf{Assumption 1.} $\mathcal{M}$ contains the true model $M^\star$ and $\Pi$ contains optimal policies for all models in $\mathcal{M}$.

\textbf{Assumption 2.} The policy optimizations in Algorithm \ref{alg:main} are performed exactly.

The first is a standard realizability assumption. The second assumes access to an optimal planner and has been used in several previous works \cite{Kearns2002, Kearns1999, Kakade2003, Brafman2003}. This does not mean that the planning problem is trivial, but is meant to separate the difficulty of planning from that of exploration.

We note that DREEM will not be computationally feasible for many problems since the sets $\mathcal{M}_t$ will often be large, and the algorithm requires iterating over them during both the elimination and planning steps. However, it distills the key ideas of Algorithm \ref{alg:main} and demonstrates its sample efficiency when optimizations can be performed exactly. We will later give a practical instantiation of Algorithm \ref{alg:main} and demonstrate its sample efficiency empirically.

\subsection{Structural Complexity Measure}
Since we are considering settings where $\mathcal{S}$ is large or infinite, it is not meaningful to give sample complexity results in terms of the number of states, as is often done for tabular algorithms.
We instead use a structural complexity measure which is independent of the number of states, and depends on the maximum rank over a set of error matrices, which we define next.
\footnote{We use a generalized notion of rank with a condition on the row norms of the factorization: for an $m \times n$ matrix $B$, denote $rank(B, \beta)$ to be the smallest integer $k$ such that $B=UV^\top$ with $U \in \mathbb{R}^{m \times k}, V \in \mathbb{R}^{n \times k}$ and for every pair of rows $u_i, v_j$ we have $\|u_i\|_2 \cdot \|v_j \|_2 \leq \beta$. $\beta$ appears in Lemma 3 and Theorem 1.}

\begin{definition}
  (Model Misfit Matrices)
  Let $\mathcal{M}$ be a model class and $\Pi$ a policy class.
  Define the set of matrices $A_1, ..., A_H \in \mathbb{R}^{|\Pi| \times |\mathcal{M}|}$ by $A_h(\pi, M) = \mathcal{W}(\pi, M, h)$ for all $\pi \in \Pi$ and $M \in \mathcal{M}$.
\end{definition}

Using the ranks of error matrices as complexity measures of RL environments was proposed in \cite{jiang2017, Sun2018}.
Although the model misfit matrices $A_h$ may themselves be very large, we show next that their ranks are in fact small in several natural settings.

\begin{proposition}
Assume $|\mathcal{S}|$ is finite and let $A_h$ be the matrix defined above. Then $rank(A_h) \leq |\mathcal{S}|$.
\end{proposition}

\begin{proposition}
  Let $\Gamma$ denote the true transition matrix of size $|\mathcal{S}| \times |\mathcal{S} \times \mathcal{A}|$, with $\Gamma(s', (s, a)) = P_{M^\star}(s'|s, a)$.
  Assume that there exist two matrices $\Gamma_1, \Gamma_2$ of sizes $|\mathcal{S}| \times K$ and $K \times |\mathcal{S} \times \mathcal{A}|$ such that $\Gamma=\Gamma_1\Gamma_2$.
  Then $rank(A_h) \leq K$.
\end{proposition}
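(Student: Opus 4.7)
The plan is to exhibit an explicit rank-$K$ factorization $A_h = U_h V_h^\top$ with $U_h \in \mathbb{R}^{|\Pi| \times K}$ and $V_h \in \mathbb{R}^{|\mathcal{M}| \times K}$. The key observation is that $\mathcal{W}(\pi, M, h)$ depends on $\pi$ only through the state occupancy $P_{M^\star}^{\pi, h-1}$. Introducing the model-error function
\[
g_M(s) \;=\; \mathbb{E}_{a \sim U(\mathcal{A})}\bigl[\|P_M(\cdot \mid s,a) - P_{M^\star}(\cdot \mid s,a)\|_{TV}\bigr],
\]
which depends on $s$ and $M$ but not on $\pi$, I can rewrite
\[
A_h(\pi, M) \;=\; \sum_{s \in \mathcal{S}} P_{M^\star}^{\pi, h-1}(s)\, g_M(s).
\]
So $A_h$ is already the product of a $|\Pi| \times |\mathcal{S}|$ matrix of occupancy vectors with a $|\mathcal{S}| \times |\mathcal{M}|$ matrix of error vectors. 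It remains to show that, under the assumption $\Gamma = \Gamma_1 \Gamma_2$, the occupancy rows live in a $K$-dimensional subspace.

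The next step is an induction on $h' \geq 1$ showing that every occupancy $P_{M^\star}^{\pi, h'}$ lies in the column span of $\Gamma_1 \in \mathbb{R}^{|\mathcal{S}| \times K}$. Writing $P_{M^\star}^{\pi, h'}(s') = \sum_{s,a} q_{\pi,h'-1}(s,a)\,\Gamma(s',(s,a)) = (\Gamma q_{\pi,h'-1})(s')$ with $q_{\pi, h'-1}(s,a) = P_{M^\star}^{\pi, h'-1}(s)\,\pi_{h'-1}(a \mid s)$, the factorization gives $\Gamma q_{\pi, h'-1} = \Gamma_1(\Gamma_2 q_{\pi, h'-1})$, so $P_{M^\star}^{\pi, h'} = \Gamma_1 v_{\pi, h'}$ where $v_{\pi, h'} := \Gamma_2 q_{\pi, h'-1} \in \mathbb{R}^K$. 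The base case $h'=1$ is identical with $q_{\pi, 0}(s,a) = \mu(s)\pi_0(a\mid s)$ for the initial distribution $\mu$.

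Plugging $P_{M^\star}^{\pi, h-1}(s) = \sum_{k=1}^K \Gamma_1(s,k)\, v_{\pi, h-1}(k)$ back into the expression for $A_h$ and swapping the sums yields
\[
A_h(\pi, M) \;=\; \sum_{k=1}^K v_{\pi, h-1}(k) \cdot \underbrace{\Bigl(\sum_{s} \Gamma_1(s,k)\, g_M(s)\Bigr)}_{=: \, w_{h,M}(k)},
\]
which is exactly a rank-$K$ factorization with $U_h(\pi, k) = v_{\pi, h-1}(k)$ and $V_h(M, k) = w_{h,M}(k)$. This gives $\mathrm{rank}(A_h) \leq K$ as claimed. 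For the edge case $h=1$, the occupancy $P_{M^\star}^{\pi, 0} = \mu$ is independent of $\pi$, so $A_1$ has identical rows and $\mathrm{rank}(A_1) \leq 1 \leq K$ trivially.

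The argument is essentially linear-algebraic bookkeeping; the only conceptual step is recognizing that the left factor $\Gamma_1$ provides a state-indexed basis which every reachable occupancy inherits, while the right factor $\Gamma_2$ absorbs all $\pi$-dependence into a $K$-dimensional coefficient vector. No significant obstacle arises. If one later needs a norm-controlled rank (the $\beta$-rank referenced in the footnote), a further step would bound $\|v_{\pi, h-1}\|_2$ and $\|w_{h,M}\|_2$ using the row norms of $\Gamma_1, \Gamma_2$ together with the fact that $g_M(s) \in [0,1]$, but this refinement is not required for the statement as written.
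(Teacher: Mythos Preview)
Your proposal is correct and follows essentially the same route as the paper: both construct the identical factorization $A_h = U_hV_h^\top$ with $U_h(\pi,k)$ equal to the $K$-dimensional coefficient vector $\Gamma_2 q_{\pi,h-2}$ (the paper calls this $z^{\pi,h-1}$) and $V_h(M,k)=\sum_s \Gamma_1(s,k)\,g_M(s)$. Your write-up is slightly cleaner in that you name $g_M$ explicitly and address the edge case $h=1$, which the paper skips, but the mathematical content is the same.
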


The next proposition, which is a straightforward adaptation of a result from \cite{Sun2018} \footnote{Appendix E.2, Proposition 2}, shows that the ranks of the model misfit matrices are also low in factored MDPs \cite{Kearns1999}.

\begin{proposition}
  Consider a factored MDP setting where the state space is given by $\mathcal{S} = \mathcal{O}^d$ where $d \in \mathbb{N}$ and $\mathcal{O}$ is a small finite set, and the transition matrix has a factored structure with $L$ parameters. Then $rank(A_h) \leq L$.
\end{proposition}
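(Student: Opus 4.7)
The plan is to leverage the product structure of the transition to decompose the misfit into per-factor contributions, bound each of them as a low-rank matrix, and combine via rank subadditivity.

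First, I would apply the elementary product-difference identity pointwise at every $(s,s',a)$: for reals $a_1,\dots,a_d,b_1,\dots,b_d$,
$$\prod_{i=1}^{d} a_i - \prod_{i=1}^{d} b_i \;=\; \sum_{i=1}^{d}\Bigl(\prod_{j<i} a_j\Bigr)(a_i-b_i)\Bigl(\prod_{j>i} b_j\Bigr),$$
with $a_i = P^i_M(o'^i\mid s[\mathrm{pa}(i)],a)$ and $b_i = P^i_{M^\star}(o'^i\mid s[\mathrm{pa}(i)],a)$. Summing absolute values over $s'\in\mathcal{O}^d$ and marginalizing the complementary factors using $\sum_{o'^j}P^j_M(o'^j\mid\cdot)=\sum_{o'^j}P^j_{M^\star}(o'^j\mid\cdot)=1$ yields, at the level of each $(s,a)$,
$$\|P_M(\cdot\mid s,a) - P_{M^\star}(\cdot\mid s,a)\|_{TV} \;\le\; \sum_{i=1}^{d}\|P^i_M(\cdot\mid s[\mathrm{pa}(i)],a)-P^i_{M^\star}(\cdot\mid s[\mathrm{pa}(i)],a)\|_{TV}.$$

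Second, I would introduce per-factor matrices $A^i_h(\pi,M):=\mathbb{E}_{s\sim P^{\pi,h-1}_{M^\star},\,a\sim U(\mathcal{A})}\bigl[\|P^i_M-P^i_{M^\star}\|_{TV}\bigr]$, and show that $\mathrm{rank}(A^i_h)\le L_i$, where $L_i$ is the parameter count of factor $i$. Since factor $i$ depends on $s$ only through $s[\mathrm{pa}(i)]$, I can expand
$$A^i_h(\pi,M) = \!\!\sum_{(u,a,o)\in\mathcal{O}^{|\mathrm{pa}(i)|}\times\mathcal{A}\times\mathcal{O}}\!\! \underbrace{\tfrac{1}{2|\mathcal{A}|}\Pr_{\pi,M^\star}\!\bigl[s_{h-1}[\mathrm{pa}(i)]=u\bigr]}_{\phi^i_{u,a,o}(\pi)}\cdot\underbrace{|P^i_M(o\mid u,a)-P^i_{M^\star}(o\mid u,a)|}_{\psi^i_{u,a,o}(M)}.$$
Each summand is a product of a function of $\pi$ alone with a function of $M$ alone, hence a rank-one matrix on $|\Pi|\times|\mathcal{M}|$. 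The number of triples equals $L_i$, giving $\mathrm{rank}(A^i_h)\le L_i$. Concatenating the $d$ per-factor factorizations yields a rank-$L$ factorization of $\sum_i A^i_h$ with $L=\sum_i L_i$.

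The main obstacle is closing the gap between $\sum_i A^i_h$ and $A_h$: Step 1 only gives entry-wise domination $A_h\le\sum_i A^i_h$, because the triangle inequality for $\|\cdot\|_{TV}$ is strict whenever the telescoped terms $T_i(s')=(\prod_{j<i}P^j_M)(P^i_M-P^i_{M^\star})(\prod_{j>i}P^j_{M^\star})$ disagree in sign across $i$ for some $s'$. Concluding $\mathrm{rank}(A_h)\le L$ for the exact TV-based definition therefore requires either (i) extracting a bilinear identity—rather than an inequality—for $\|P_M-P_{M^\star}\|_{TV}$ in the factored regime, which would require exploiting cancellations beyond the triangle step; or (ii) working in the generalized $\beta$-rank of the excerpt's footnote, where one can apply the rank-$L$ factorization of $\sum_i A^i_h$ to $A_h$ itself by padding with sign-corrected features that restore equality. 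Route (ii), paralleling the adaptation of [Sun2018, App.\ E.2, Prop.\ 2], is the one I would pursue, carefully tracking the row-norm constant $\beta$ so that the polynomial dependence on $\beta$ in Theorem~1 is not degraded.
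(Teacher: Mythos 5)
Your Steps 1--2 are the right skeleton and essentially reproduce the argument the paper is pointing to (it gives no proof of this proposition itself, deferring to Sun et al., Appendix E.2, Proposition 2): telescope the product difference, marginalize the complementary factors, and factor each per-factor misfit $A^i_h$ through the $L_i$ triples $(u,a,o)$ indexing that factor's parameters, so that $\sum_i A^i_h = \Phi\Psi^\top$ with $\Phi(\pi,\cdot)$ built from the roll-in marginals $\Pr_{\pi,M^\star}[s_{h-1}[\mathrm{pa}(i)]=u]$ and $\Psi(M,\cdot)$ from $|P^i_M(o\mid u,a)-P^i_{M^\star}(o\mid u,a)|$. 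The obstacle you flag is also genuine and is the crux: with the paper's definition of $\mathcal{W}$ via the exact total variation distance on the product space $\mathcal{O}^d$, the telescoping step yields only the entrywise domination $A_h \le \sum_i A^i_h$, and an entrywise upper bound by a rank-$L$ matrix says nothing about $\mathrm{rank}(A_h)$. Neither of your proposed repairs closes this. Route (i) fails because the absolute value inside the TV norm destroys bilinearity, and the slack $\sum_i \|P^i_M - P^i_{M^\star}\|_{TV} - \|P_M - P_{M^\star}\|_{TV}$ is not, in general, a sum of $O(L)$ products of a $\pi$-function with an $M$-function. Route (ii) rests on a misreading of the footnote: $\mathrm{rank}(B,\beta)$ still demands an \emph{exact} factorization $B=UV^\top$ (the $\beta$ condition only constrains row norms), so the generalized rank is no more forgiving of an inequality than the ordinary one.

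The way this is actually resolved in the cited source is to change the definition of the misfit in the factored setting: one takes $\mathcal{W}$ to be the integral probability metric over the additive test-function class $\{f(s,a,s')=\sum_i f_i(s,a,s'[i]) : \|f_i\|_\infty \le 1\}$ rather than over all bounded functions. Because the $i$-th marginal of $P_M(\cdot\mid s,a)$ is exactly $P^i_M(\cdot\mid s[\mathrm{pa}(i)],a)$, the supremum then splits across factors and the misfit \emph{equals} $\sum_i A^i_h$ up to normalization, at which point your Step-2 factorization is a complete proof of $\mathrm{rank}(A_h)\le L$ with $\beta$ controlled by $\|\Phi(\pi,\cdot)\|_2\cdot\|\Psi(M,\cdot)\|_2 \le \sqrt{L}\cdot\sqrt{L}$ (and in fact much better, since the $\phi$-entries for each factor sum to one over $u$). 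This surrogate dominates the TV misfit and vanishes at $M^\star$, so Lemma~1, the elimination rule, and the concentration argument all go through unchanged. So the correct conclusion is not that your decomposition is wrong, but that the proposition as literally stated --- with the TV-based $\mathcal{W}$ --- is not established by this argument; the "straightforward adaptation" implicitly carries over Sun et al.'s factored test-function class along with their rank bound, and your write-up should make that substitution explicit rather than attempt to patch the inequality.
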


\subsection{Sample Complexity}

Now that we have defined our structural complexity measure, we prove sample complexity results for DREEM. We will use a slightly different definition of $\mathcal{D}(\pi, M, M')$ than the one in Section 2, in that the last action is sampled uniformly:

\begin{definition}
  (Predicted Model Disagreement Induced by Policy)

  \begin{align*}
    \mathcal{D}(\pi, M, M', h) &= \sum_{s_{h-1}} \sum_{a_{h-1}}\sum_{s_h} \Big| P_M(s_h | s_{h-1}, a_{h-1}) P_M^{\pi, h-1}(s_{h-1}) U(a_h) \\
    & \phantom{=========} - P_{M'}(s_h | s_{h-1}, a_{h-1}) P_{M'}^{\pi, h-1}(s_{h-1}) U(a_h)  \Big| \\
  \end{align*}

\end{definition}

We begin by proving a lemma which, intuitively, states that if a policy induces disagreement between two models of the environment, then it will also induce disagreement between at least one of these models and the true model. This means that by searching for and then executing a policy which induces disagreement between at least two models, the agent will collect experience from the environment which will enable it to invalidate at least one of them.

\begin{lemma}
  \label{disagreement-error}
  Let $\mathcal{M}$ be a set of models and $\Pi$ a set of policies.
  If there exist $M, M' \in \mathcal{M}, \pi \in \Pi$ and $h \leq H$ such that $\mathcal{D}(\pi, M, M', h) > \alpha$, then there exists $h' \leq h$ such that $\mathcal{W}(\pi, M, h') > \frac{\alpha}{4|\mathcal{A}|\cdot H}$ or $\mathcal{W}(\pi, M', h') > \frac{\alpha}{4|\mathcal{A}|\cdot H}$ (or both).
\end{lemma}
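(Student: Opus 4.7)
The plan is to reduce the $M$-vs-$M'$ disagreement under $\pi$ to $M$-vs-$M^\star$ misfit via the triangle inequality, decompose one step of the resulting $L_1$ distance into a state-distribution-drift term and a one-step model-error term, telescope the drift term by a simulation lemma, and finally pigeonhole across time steps.

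\textbf{Step 1 (triangle + one-step decomposition).} I view $\mathcal{D}(\pi, \cdot, \cdot, h)$ as an $L_1$ distance between the joint distributions over $(s_{h-1}, a_{h-1}, s_h)$ induced by each model (with $a_{h-1}$ drawn uniformly), so the triangle inequality gives
\begin{align*}
\mathcal{D}(\pi, M, M', h) \leq \mathcal{D}(\pi, M, M^\star, h) + \mathcal{D}(\pi, M^\star, M', h),
\end{align*}
and at least one of the two summands exceeds $\alpha/2$; without loss of generality the first. Inside the absolute value in the definition of $\mathcal{D}(\pi, M, M^\star, h)$, I add and subtract $U(a_{h-1}) P_{M^\star}(s_h \mid s_{h-1}, a_{h-1}) P_M^{\pi, h-1}(s_{h-1})$ and apply the triangle inequality pointwise. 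Summing out $s_h$ against the kernel of $M$ (which sums to $1$) collapses the first piece to $\|P_M^{\pi,h-1} - P_{M^\star}^{\pi,h-1}\|_1$, and summing $|P_M(\cdot|s,a) - P_{M^\star}(\cdot|s,a)|$ over $s_h$ against $U(a_{h-1})P_{M^\star}^{\pi,h-1}(s_{h-1})$ produces exactly $2\mathcal{W}(\pi, M, h)$ (the factor $2$ absorbs the TV-vs-$L_1$ convention in $\mathcal{W}$).

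\textbf{Step 2 (simulation lemma and change of measure, main obstacle).} The remaining drift term $\|P_M^{\pi,h-1} - P_{M^\star}^{\pi,h-1}\|_1$ is bounded by the standard simulation-lemma telescoping,
\begin{align*}
\|P_M^{\pi,h-1} - P_{M^\star}^{\pi,h-1}\|_1 \leq 2 \sum_{h'=1}^{h-1} \mathbb{E}_{s \sim P_{M^\star}^{\pi, h'-1},\, a \sim \pi(\cdot|s)} \bigl\|P_M(\cdot|s,a) - P_{M^\star}(\cdot|s,a)\bigr\|_{TV},
\end{align*}
which is proved by induction on $h'$ using exactly the one-step add/subtract trick of Step 1. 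The principal obstacle is that these inner expectations are over $a \sim \pi$ while the misfit $\mathcal{W}$ in the conclusion is defined over $a \sim U(\mathcal{A})$. I close this gap by the importance-weighting bound $\pi(a|s) \leq 1 = |\mathcal{A}| \cdot U(a)$, which inflates each term by a factor of $|\mathcal{A}|$ and turns it into $|\mathcal{A}|\, \mathcal{W}(\pi, M, h')$. This is also the source of the $|\mathcal{A}|$ in the stated bound and retroactively justifies the design choice of making $a_{h-1}$ uniform in $\mathcal{D}$. Combining with Step 1 yields
\begin{align*}
\mathcal{D}(\pi, M, M^\star, h) \leq 2|\mathcal{A}| \sum_{h'=1}^{h} \mathcal{W}(\pi, M, h').
\end{align*}

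\textbf{Step 3 (pigeonhole).} From $\mathcal{D}(\pi, M, M^\star, h) > \alpha/2$ and the preceding inequality, $\sum_{h'=1}^{h} \mathcal{W}(\pi, M, h') > \alpha/(4|\mathcal{A}|)$. Since this is a sum of at most $h \leq H$ nonnegative terms, some index $h' \leq h$ must satisfy $\mathcal{W}(\pi, M, h') > \alpha/(4|\mathcal{A}| H)$, matching the claim (with $M'$ replacing $M$ in the symmetric case). The only delicate ingredient is the change-of-measure step; the rest is bookkeeping of TV/$L_1$ constants and recursion over $h$.
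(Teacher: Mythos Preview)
Your proof is correct and follows essentially the same route as the paper's: decompose into a state-distribution drift term plus a one-step model-error term, telescope the drift via a simulation lemma with an $|\mathcal{A}|$-factor change of measure from $\pi(\cdot\mid s)$ to $U(\mathcal{A})$, and finish by averaging over $h' \leq h$. The only cosmetic difference is that you first triangle-inequality through $M^\star$ and then pigeonhole directly on a single model, whereas the paper decomposes $\mathcal{D}(\pi, M, M', h)$ into four terms (treating $M$ and $M'$ simultaneously) and argues by cases on whether some earlier misfit is already large.
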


Next, we give a lemma which states that at any time step, the agent has either found an exploration policy which will lead it to collect experience which allows it to reduce its set of candidate models, or has found an exploitation policy which is close to optimal. Here $v_\pi$ is the value of $\pi$ in the true MDP.
\begin{lemma}
  \label{terminate-explore}
  (Explore or Exploit)
  Suppose the true model $M^\star$ is never eliminated.
  At iteration $t$, one of the following two conditions must hold: either there exists $M \in \mathcal{M}_{t}, h_t \leq H$ such that $\mathcal{W}(\pi_{\mathrm{explore}}^t, M, h_t) > \frac{\epsilon}{4H^2|\mathcal{A}|^2}$, or the algorithm returns $\pi_\mathrm{exploit}$ such that $v_{\pi_{\mathrm{exploit}}} > v_{\pi^\star} - \epsilon$.

\end{lemma}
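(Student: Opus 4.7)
The plan is to split on the test in Algorithm \ref{alg:main}. At iteration $t$, either the exploration value exceeds $\epsilon/|\mathcal{A}|$, in which case I will extract a model with high misfit, or it does not, in which case I will show that the returned exploitation policy is near-optimal.

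\textbf{Explore branch.} Suppose $v_{\mathrm{explore}}(\pi_{\mathrm{explore}}^t, \mathcal{M}_t) > \epsilon/|\mathcal{A}|$. By definition of $v_{\mathrm{explore}}$ there exist $M, M' \in \mathcal{M}_t$ with $\sum_{h=1}^H \mathcal{D}(\pi_{\mathrm{explore}}^t, M, M', h) > \epsilon/|\mathcal{A}|$, so by pigeonhole some $h \leq H$ satisfies $\mathcal{D}(\pi_{\mathrm{explore}}^t, M, M', h) > \epsilon/(H|\mathcal{A}|)$. Applying Lemma \ref{disagreement-error} with $\alpha = \epsilon/(H|\mathcal{A}|)$ produces $h' \leq h$ and a choice of one of $M, M'$, both of which lie in $\mathcal{M}_t$, whose misfit at step $h'$ exceeds $\alpha/(4|\mathcal{A}| H) = \epsilon/(4 H^2 |\mathcal{A}|^2)$, as desired.

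\textbf{Exploit branch.} Suppose $v_{\mathrm{explore}}(\pi_{\mathrm{explore}}^t, \mathcal{M}_t) \leq \epsilon/|\mathcal{A}|$, so the algorithm halts and returns $\pi_{\mathrm{exploit}}$, the $\Pi$-optimal policy for some $\tilde{M} \in \mathcal{M}_t$. The crux is a simulation-style inequality: for every $\pi \in \Pi$ and $M, M' \in \mathcal{M}_t$,
\begin{equation*}
|v_{\mathrm{exploit}}(\pi, M) - v_{\mathrm{exploit}}(\pi, M')| \;\leq\; \tfrac{|\mathcal{A}|}{2}\sum_{h=1}^H \mathcal{D}(\pi, M, M', h).
\end{equation*}
To establish it, I decompose $P^{\pi, h}_M(s_h) - P^{\pi, h}_{M'}(s_h)$ into a sum over $(s_{h-1}, a_{h-1})$ of joint differences, apply the triangle inequality, and bound $\pi(a \mid s) \leq 1 = |\mathcal{A}| \cdot U(a)$; this yields $\|P^{\pi, h}_M - P^{\pi, h}_{M'}\|_1 \leq |\mathcal{A}| \cdot \mathcal{D}(\pi, M, M', h)$, and combining with $R^\star \in [0,1]$ and $\|p - q\|_{TV} = \tfrac{1}{2}\|p - q\|_1$ gives the claim.

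With this in hand, Assumption 1 gives $M^\star \in \mathcal{M}_t$ (never eliminated) and $\pi^\star \in \Pi$; and since $\pi_{\mathrm{explore}}^t$ maximizes $v_{\mathrm{explore}}$, for every $\pi \in \Pi$ we have $\sum_h \mathcal{D}(\pi, M^\star, \tilde{M}, h) \leq v_{\mathrm{explore}}(\pi_{\mathrm{explore}}^t, \mathcal{M}_t) \leq \epsilon/|\mathcal{A}|$, hence $|v_{\mathrm{exploit}}(\pi, M^\star) - v_{\mathrm{exploit}}(\pi, \tilde{M})| \leq \epsilon/2$ uniformly in $\pi \in \Pi$. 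The standard chain
\begin{equation*}
v_{\pi_{\mathrm{exploit}}} \;\geq\; v_{\mathrm{exploit}}(\pi_{\mathrm{exploit}}, \tilde{M}) - \tfrac{\epsilon}{2} \;\geq\; v_{\mathrm{exploit}}(\pi^\star, \tilde{M}) - \tfrac{\epsilon}{2} \;\geq\; v_{\pi^\star} - \epsilon,
\end{equation*}
where the middle inequality uses optimality of $\pi_{\mathrm{exploit}}$ in $\tilde{M}$ together with $\pi^\star \in \Pi$, completes the case.

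\textbf{Main obstacle.} The nontrivial step is the simulation lemma, where one must correctly account for the conversion from the uniform-action measure appearing in $\mathcal{D}$ back to the $\pi$-action measure appearing in the value (costing the factor $|\mathcal{A}|$) while also retaining the $\tfrac{1}{2}$ from the TV--$L_1$ conversion, so that the final bound meshes exactly with the $\epsilon/|\mathcal{A}|$ threshold used by the algorithm.
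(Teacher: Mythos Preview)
Your proof is correct and follows the same two-branch structure as the paper's: the explore case is handled identically via pigeonhole and Lemma~\ref{disagreement-error}, and the exploit case via a simulation inequality relating value differences to $\sum_h \mathcal{D}(\pi,\tilde M,M^\star,h)$, then bounded by $v_{\mathrm{explore}}(\pi_{\mathrm{explore}}^t,\mathcal{M}_t)\le \epsilon/|\mathcal{A}|$ using optimality of $\pi_{\mathrm{explore}}^t$.

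The one genuine improvement in your version is the bookkeeping in the exploit branch. The paper bounds $|v_{\tilde M}^{\pi^\star}-v_{M^\star}^{\pi^\star}|$ by $|\mathcal{A}|\sum_h \mathcal{D}(\pi^\star,\tilde M,M^\star,h)\le \epsilon$ (dropping the $\tfrac12$ from $\|p-q\|_{TV}=\tfrac12\|p-q\|_1$), and then concludes somewhat abruptly without writing out the symmetric bound for $\pi_{\mathrm{exploit}}$; taken literally, that would yield only $v_{\pi_{\mathrm{exploit}}}\ge v_{\pi^\star}-2\epsilon$. By retaining the factor $\tfrac12$ you obtain $\epsilon/2$ slack on each of the two transfers $M^\star\!\leftrightarrow\!\tilde M$, so the three-step chain lands exactly at $\epsilon$. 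This is a tightening of the constants rather than a different argument, but it does make the stated bound go through cleanly.
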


The above two lemmas state that at any time step, the agent either reduces its set of candidate models or finds an exploitation policy which is close to optimal. However, since the initial set of candidate models may be very large, we need to ensure that many models are discarded at each exploration step. Our next lemma bounds the number of iterations of the algorithm by showing that the set of candidate models is reduced by a constant factor at every step.

\begin{lemma}
  \label{iteration-complexity}
  (Iteration Complexity)
  Let $d=\max_{1 \leq h \leq H} rank(A_h)$ and $\phi=\frac{\epsilon}{24H^2|\mathcal{A}|^2\sqrt{d}}$. Suppose that $|\widetilde{\mathcal{W}}(\pi_\mathrm{explore}^t, M, h) - \mathcal{W}(\pi_\mathrm{explore}^t, M, h)| \leq \phi$ holds for all $t$, $h \leq H$ and $M \in \mathcal{M}$. Then the number of rounds of Algorithm \ref{alg:main} with the \texttt{UpdateModelSet} routine given by Algorithm \ref{alg:update-version-space} is at most $Hd \log(\frac{\beta}{2\phi}) / \log(5/3)$.
\end{lemma}

The proof operates by representing each matrix $A_h$ in factored form, which induces an embedding of each model in $\mathcal{M}_t$ in a $d$-dimensional space.
Minimum volume ellipsoids are then constructed around these embeddings.
A geometric argument shows that the volume of these ellipsoids shrinks by a constant factor from one iteration of the algorithm to the next, leading to a number of updates linear in $d$.
Combining the previous lemmas with a concentration argument, we get our main result:
\begin{theorem}
  \label{sample-complexity}
  Assuming that $M^\star \in \mathcal{M}$, for any $\epsilon, \delta \in (0, 1]$ set $\phi=\frac{\epsilon}{24H^2|\mathcal{A}|^2\sqrt{d}}$ and denote $T=Hd \log(\frac{\beta}{2\phi}) / \log(5/3)$. Run Algorithm \ref{alg:main} with inputs $(\mathcal{M}, n, \phi)$ where $n=\Theta(H^4|\mathcal{A}|^4d \log(T|\mathcal{M}||\Pi|/\delta)/\epsilon^2)$, and the \texttt{UpdateModelSet} routine is given by Algorithm \ref{alg:update-version-space}. Then with probability at least $1-\delta$, Algorithm \ref{alg:main} outputs a policy $\pi_{\mathrm{exploit}}$ such that $v_{\pi_{\mathrm{exploit}}} \geq v_{\pi^\star} - \epsilon$.
    The number of trajectories collected is at most $\tilde{O}\Big( \frac{H^5d^2|\mathcal{A}|^4}{\epsilon^2} \log\Big(\frac{T|\mathcal{M}||\Pi|}{\delta} \Big) \Big)$.
\end{theorem}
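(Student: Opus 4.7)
The plan is to build a single high-probability event on which the empirical misfits concentrate around their population counterparts for every round, model, and horizon step, and then chain Lemmas \ref{disagreement-error}--\ref{iteration-complexity} together.

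First I would invoke Lemma \ref{iteration-complexity}: once the uniform concentration condition $|\widetilde{\mathcal{W}}(\pi_\mathrm{explore}^t, M, h) - \mathcal{W}(\pi_\mathrm{explore}^t, M, h)| \leq \phi$ holds, the algorithm runs for at most $T = Hd\log(\beta/(2\phi))/\log(5/3)$ rounds with $\phi=\epsilon/(24H^2|\mathcal{A}|^2\sqrt{d})$. So it suffices to guarantee this concentration simultaneously for all $t \leq T$, $M \in \mathcal{M}$, and $h \leq H$. At round $t$ the policy $\pi_\mathrm{explore}^t$ is chosen before the new batch is collected, so conditional on $\pi_\mathrm{explore}^t$ the estimator $\widetilde{\mathcal{W}}$ is the mean of $n$ i.i.d. quantities bounded by $1$ (since $\|\cdot\|_{TV}\leq 1$). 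A Hoeffding bound combined with a union bound over $t \leq T$, $M \in \mathcal{M}$, $h \leq H$, and $\pi \in \Pi$ (to absorb the data-dependent choice of $\pi_\mathrm{explore}^t$) then gives the required $\phi$-accuracy provided $n = \Theta(\log(TH|\mathcal{M}||\Pi|/\delta)/\phi^2)$. Plugging in $\phi$ yields $n = \Theta(H^4|\mathcal{A}|^4 d \log(T|\mathcal{M}||\Pi|/\delta)/\epsilon^2)$, matching the theorem statement.

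Next, on this good event, $\mathcal{W}(\pi, M^\star, h) = 0$ for every $\pi$ and $h$, so $\widetilde{\mathcal{W}}(\pi_\mathrm{explore}^t, M^\star, h) \leq \phi$ always, and the true model is never eliminated by \texttt{UpdateModelSet}. The premise of Lemma \ref{terminate-explore} thus holds at every round. At each iteration, the algorithm either (i) triggers exploration, in which case $v_\mathrm{explore}(\pi_\mathrm{explore}^t, \mathcal{M}_t) > \epsilon/|\mathcal{A}|$ so Lemma \ref{disagreement-error} forces some surviving model to have true misfit exceeding $2\phi$ at some step $h'$, meaning its empirical misfit exceeds $\phi$ and it is eliminated by the update rule; or (ii) triggers exploitation, in which case Lemma \ref{terminate-explore} delivers $v_{\pi_\mathrm{exploit}} \geq v_{\pi^\star}-\epsilon$. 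Lemma \ref{iteration-complexity} bounds the number of (i)-rounds by $T$, so the algorithm must exit through (ii) within $T$ iterations and return an $\epsilon$-optimal policy. Since each round collects $n$ trajectories, the total sample count is at most $nT = \tilde{O}\bigl(H^5 d^2 |\mathcal{A}|^4 \log(T|\mathcal{M}||\Pi|/\delta)/\epsilon^2\bigr)$, as claimed.

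The main obstacle is calibrating $\phi$ and $n$ consistently: $\phi$ must be small enough that crossing the $\epsilon/(4H^2|\mathcal{A}|^2)$ misfit threshold from Lemma \ref{terminate-explore} survives the $\sqrt{d}$ slack that Lemma \ref{iteration-complexity} spends to control ellipsoid volumes, while $n$ must be just large enough to make $\phi$ achievable with high probability after the union bound. A subtler point is justifying that the Hoeffding inequality really applies to $\widetilde{\mathcal{W}}$ as defined in Appendix \ref{empirical-misfit} --- one must check that each trajectory contributes an independent bounded summand and that the union over $\Pi$ (which gives the $\log|\Pi|$ factor) correctly handles the fact that $\pi_\mathrm{explore}^t$ is itself chosen as a function of the replay buffer.
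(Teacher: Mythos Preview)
Your proposal is correct and follows essentially the same skeleton as the paper: condition on uniform concentration $|\widetilde{\mathcal{W}}-\mathcal{W}|\le\phi$, invoke Lemma~\ref{iteration-complexity} for the round bound $T$, Lemma~\ref{terminate-explore} for $\epsilon$-optimality at termination, and multiply $n\cdot T$ for the trajectory count.

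The one substantive difference is in the concentration step and the origin of the $\log|\Pi|$ factor. You treat $\widetilde{\mathcal{W}}$ as a plain sample average and union-bound over $\pi\in\Pi$ to cover the adaptive choice of $\pi_\mathrm{explore}^t$. The paper instead uses the IPM representation of the misfit (Technical Lemma~1): $\widetilde{\mathcal{W}}$ is a \emph{maximum} over a test-function class $\widetilde{\mathcal{F}}$ of sample averages, so a direct Hoeffding on the estimator does not apply. The paper applies Bernstein to each $(M,f)$ pair and then union-bounds over $\mathcal{M}\times\widetilde{\mathcal{F}}$; since $|\widetilde{\mathcal{F}}|\le |\Pi|\cdot|\mathcal{M}|\cdot H$, the $\log|\Pi|$ term enters through the test-function count, not through the policy choice (the union over rounds $t\le T$ already handles adaptivity, because $\pi_\mathrm{explore}^t$ is fixed before the round-$t$ batch). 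Your final paragraph correctly anticipates exactly this subtlety; resolving it the paper's way gives the same rate you state, so the two arguments land in the same place.
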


Note that the above result requires knowledge of $d$ to set the $\phi$ and $n$ parameters. If this quantity is unknown, it can be estimated using a doubling trick which does not affect the algorithm's asymptotic sample complexity. Details can be found in Appendix \ref{doubling-trick}.

\section{Neural-E$^3$: A Practical Instantiation}

The above analysis shows that Algorithm 1 is sample efficient given an idealized instantiation, which may not be computationally practical for large model classes.
Here we give a computationally efficient instantiation called Neural-E$^3$, which requires implementing the \texttt{UpdateModelSet} routine and the planning routines.

\subsection{Model Updates}

We represent $\mathcal{M}_t$ as an ensemble of action-conditional dynamics models $\{M_1, ..., M_E\}$, parameterized by neural networks, which are trained to model the next-state distribution $P_{M^\star}(s_{h+1} | s_h, a)$ using the data from the replay buffer $\mathcal{R}$. The models are trained to minimize the following loss:

\begin{align*}
\mathcal{L}(M, \mathcal{R}) = \mathbb{E}_{(s_{h+1}, a_h, s_h) \sim \mathcal{R}} [-\log P_M(s_{h+1} | s_h, a_h)]
\end{align*}

The models in $\mathcal{M}_1$ are initialized with random weights and the subroutine \texttt{UpdateModelSet} in Algorithm \ref{alg:main} takes as input $\mathcal{M}_{t}$, performs $N_{\mathrm{update}}$ gradient updates to each of the models using different minibatches sampled from $\mathcal{R}$, and returns the updated set of models $\mathcal{M}_{t+1}$.
The dynamics models can be deterministic or stochastic (for example, Mixture Density Networks \cite{Bishop94mixturedensity} or Variational Autoencoders \cite{VAE}).

\subsection{Planning}

The exploration and exploitation phases require computing a policy to optimize $v_{\mathrm{explore}}$ or $v_{\mathrm{exploit}}$ and executing it in the environment. If the environment is deterministic, policies can be represented as action sequences, in which case we use a generalized version of breadth-first search applicable in continuous state spaces. This uses a priority queue, where expanded states are assigned a priority based on their minimum distance to other states in the currently expanded search tree. Details can be found in Appendix \ref{planning-deterministic}.
For stochastic environments, we used implicit policies obtained using Monte-Carlo Tree Search (MCTS) \cite{MCTS}, where each node in the tree consists of empirical distributions predicted by the different models conditioned on the action sequence leading to the node. The agent only executes the first action of the sequence returned by the planning procedure, and replans at every step to account for the stochasticity of the environment. See Appendix \ref{planning-stochastic} for details.

\subsection{Exploitation with Off-Policy RL}

For some problems with sparse rewards, it may be computationally impractical to use planning during the exploitation phase, even with a perfect model.
Note that much of the exploration phase, which uses model disagreement as a fictitious reward, can be seen as an MDP with dense rewards, while the rewards in the true MDP may be sparse.
In these settings, we use an alternative approach where a parameterized value function such as a DQN \cite{mnih2015humanlevel} is trained using the experience collected in the replay buffer during exploration. 
This can be done offline without collecting additional samples from the environment.
We also found this useful for problems with antishaped rewards, where the MCTS procedure can be biased away from the optimal actions if they temporarily lead to lower reward than suboptimal ones.

\subsection{Relationship between Idealized and Practical Algorithms}


For both the idealized and practical algorithms, $\mathcal{M}_t$ represents a set of models with low error on the current replay buffer. In the idealized algorithm, models with high error are eliminated explicitly in Algorithm 2, while in the practical algorithm, models with high error are avoided by the optimization procedure. The main difference between the two algorithms is that the idealized version maintains \textit{all} models in the model class which have low error (which includes the true model), whereas the practical version only maintains a subset due to time and memory constraints. A potential failure mode of the practical algorithm would be if all the models wrongly agree in their predictions in some unexplored part of the state-action space which leads to high reward. However, in practice we found that using different initializations and minibatches was sufficient to obtain a diverse set of models, and that using even a relatively small ensemble ($4$ to $8$ models) led to successful exploration.



\section{Related Work}
Theoretical guarantees for a number of model-based RL algorithms exist in the tabular setting \cite{Kearns2002, Brafman2003, Strehl2005, Sorg2010} and in the continuous setting when the dynamics are assumed to be linear \cite{Sutton2008, abbasi-yadkori11, dean2017}.
The Metric-$E^3$ algorithm \cite{Kakade2003} operates in general state spaces, but its sample complexity depends on the covering number which may be exponential in dimension.
The algorithm of \cite{luo2018} addresses general model classes and optimizes lower bounds on the value function, and provably converges to a \textit{locally} optimal policy with a number of samples polynomial in the dimension of the state space. It also admits an approximate instantiation which was shown to work well in continuous control tasks. The work of \cite{Sun2018} provides an algorithm which provably recovers a \textit{globally} near-optimal policy with polynomial sample complexity using a structural complexity measure which we adapt for our analysis, but does not investigate practical approximations.
The algorithm we analyze is fundamentally different from both of these approaches, as it uses disagreement over predicted states rather than optimism to drive exploration.

Our practical approximation is closely related to the MAX algorithm \cite{Shyam2018}, which also uses disagreement between different models in an ensemble to drive exploration.
Our version differs in a few ways i) we use maximal disagreement rather than variance to measure uncertainty, as this reflects our theoretical analysis ii) we define the exploration MDP differently, by propagating the state predictions of the different models rather than sampling at each step iii) we explicitly address the exploitation step, whereas they focused primarily on exploration.
The work of \cite{pathak19disagreement} also used disagreement between single-step predictions to train an exploration policy.

Several works have empirically demonstrated the sample efficiency of model-based RL in continuous settings \cite{Atkeson97, Pilco, UPN, Nagabandi2018, Chua2018}, including with high-dimensional images \cite{Planet, henaff2018, WorldModels}.
These have primarily focused on settings with dense rewards where simple exploration was sufficient, or where rich observational data was available.

Other approaches to exploration include augmenting rewards with exploration bonuses, such as inverse counts in the tabular setting \cite{Strehl2008, Kolter2009}, pseudo-counts derived from density models over the state space \cite{Bellemare16, Ostrovski2017}, prediction errors of either a dynamics model \cite{Pathak2017} or a randomly initialized network \cite{RND}, or randomizing value functions \cite{BootDQN, Osband2017}. These have primarily focused on model-free methods, which have been known to have high sample complexity despite yielding good final performance.




\section{Experiments}

We now give empirical results for the Neural-E$^3$ algorithm described in Section 4. See Appendix \ref{appendix-experiments} for experimental details and \url{https://github.com/mbhenaff/neural-e3} for source code.

\begin{figure}[t]
  \centering
  \subfigure[]{\fbox{\includegraphics[width=0.42\textwidth, height=0.1\textwidth]{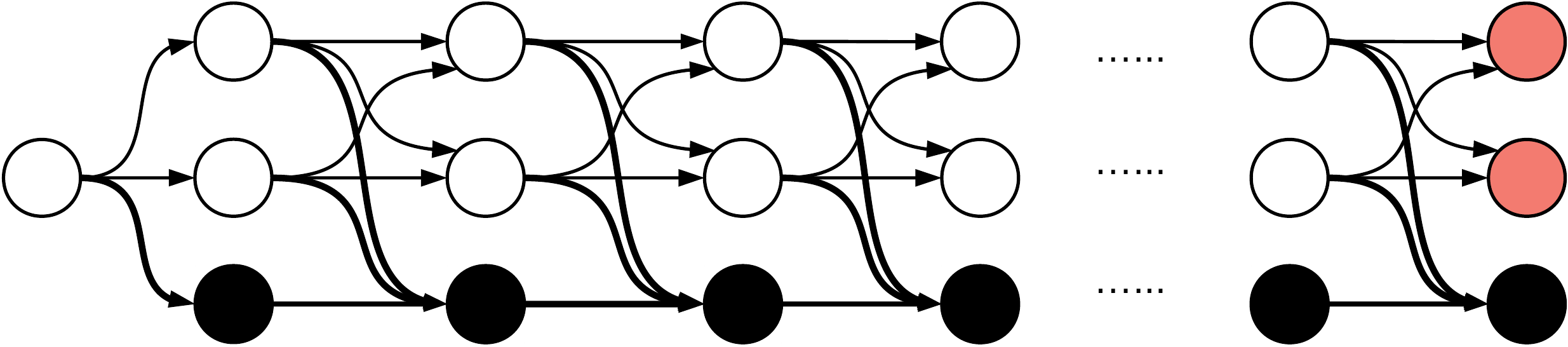}}\label{fig:lock}}
\subfigure[]{\fbox{\includegraphics[width=0.11\textwidth, height=0.1\textwidth]{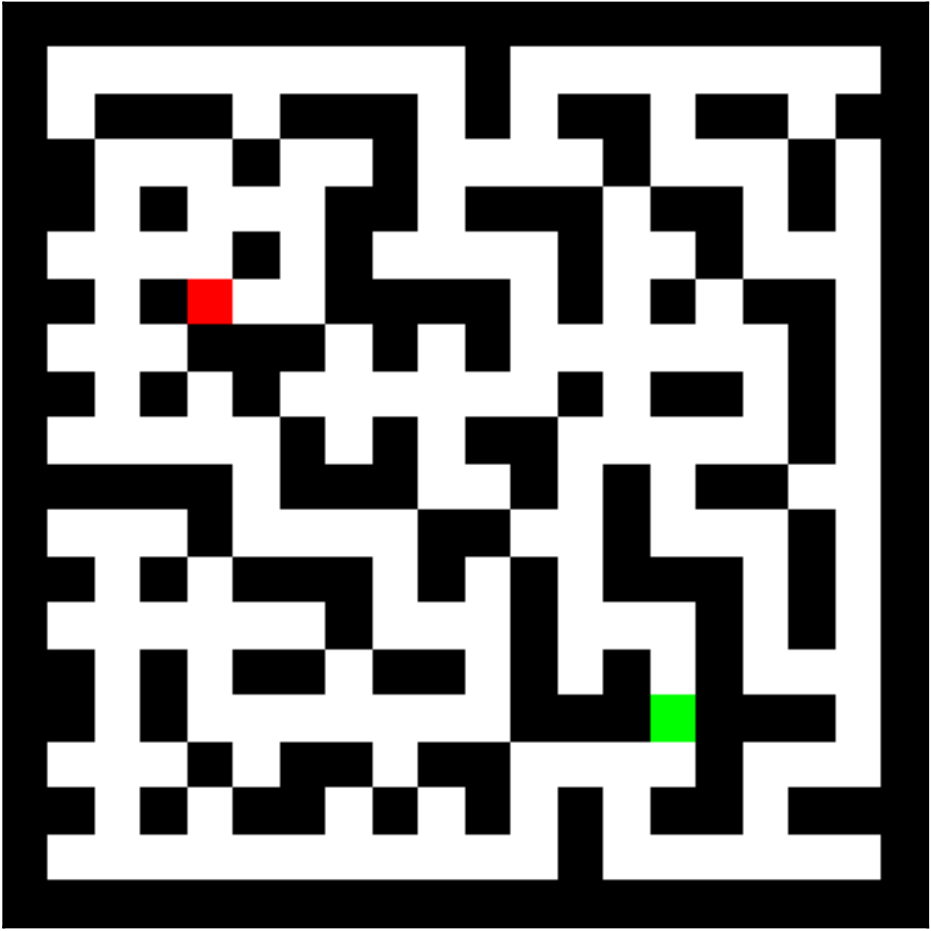}
\includegraphics[width=0.11\textwidth, height=0.1\textwidth]{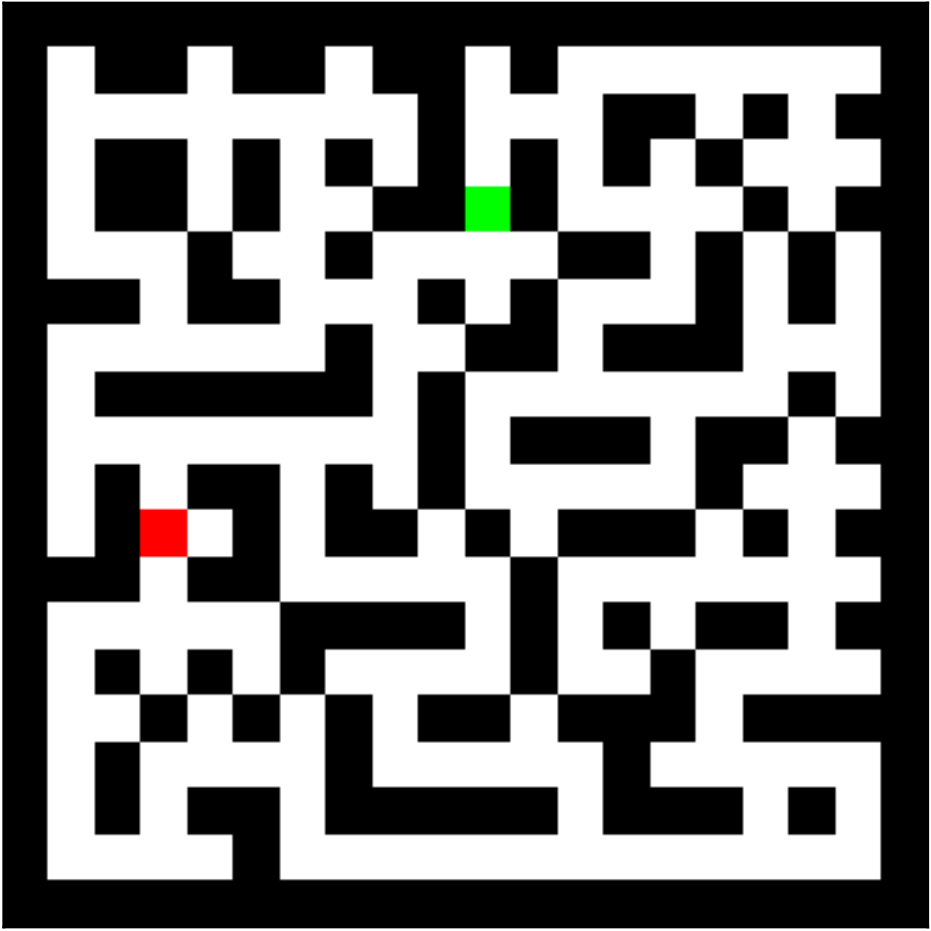}}\label{fig:maze}}
\subfigure[]{\fbox{\includegraphics[width=0.12\textwidth, height=0.1\textwidth]{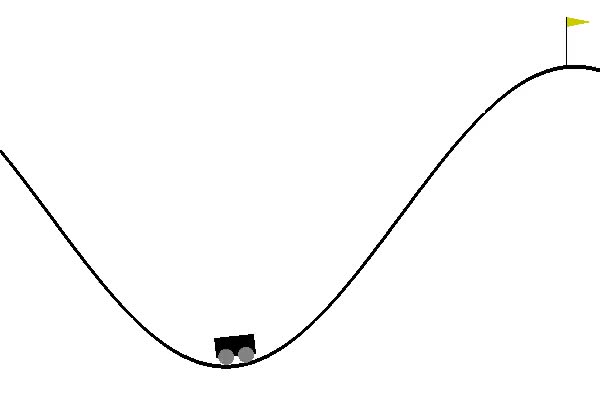}}
\fbox{\includegraphics[width=0.12\textwidth, height=0.1\textwidth]{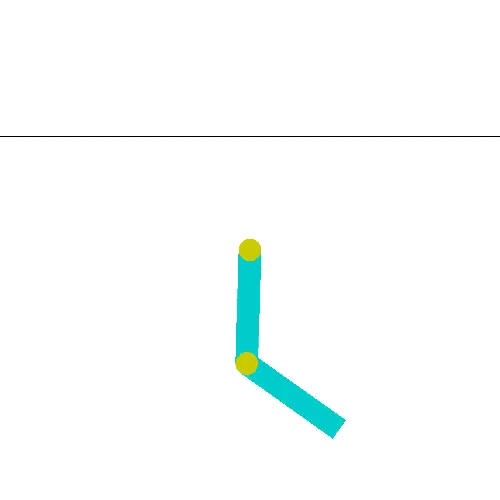}}\label{fig:control}}
\caption{Environments tested. a) \textbf{Stochastic combination lock:} The agent must reach the red states to collect high reward while avoiding the dead states (black) from which it cannot recover. b) \textbf{Mazes:} The agent (green) must navigate through the maze to reach the goal (red). Different mazes are generated each episode, which requires generalizing across mazes (colors are changed here for readability) c) \textbf{Continuous Control}: Classic control tasks requiring non-trivial exploration.}
\label{fig:envs}
\end{figure}

\subsection{Stochastic Combination Lock}

We begin with a set of experiments on the stochastic combination lock environment described in \cite{PCID} and shown in Figure \ref{fig:lock}. These environments consist of $H$ levels with 3 states per level and 4 actions. Two of the states lead to high reward and the third is a dead state from which it is impossible to recover. The effect of actions are flipped with probability 0.1, and the one-hot state encodings are appended with random Bernoulli noise to increase the number of possible observations. We experimented with two task variants: a first where the rewards are zero everywhere except for the red states, and a second where small, antishaped rewards encourage the agent to transition to the dead states (see Appendix \ref{appendix:combolock} for details). This tests an algorithm's robustness to poor local optima.

We compare against three other methods: a double DQN \cite{DDQN} with prioritized experience replay \cite{PER} using the OpenAI Baselines implementation \cite{OpenAIBaselines}, a Proximal Policy Optimization (PPO) agent \cite{PPO}, and a PPO agent with a Random Network Distillation (RND) exploration bonus \cite{RND}.
For Neural-E$^3$, we used stochastic dynamics models outputting the parameters of a multivariate Bernoulli distribution, and the MCTS procedure described in Appendix \ref{planning-stochastic} during the exploration phase. We used the DQN-based method described in Section 4.3 for the exploit phase.

Figure \ref{fig:combolock-results} shows performance across 5 random seeds for the first variant of the task. For all horizons, Neural-E$^3$ achieves the optimal reward across most seeds. The DQN also performs well, although it often requires more samples than Neural-E$^3$. For longer horizons, PPO never collects rewards, while PPO+RND eventually succeeds given a large number of episodes (see Appendix \ref{appendix:combolock}).

Figure \ref{fig:combolock-results-antishaped} shows results for the task variant with antishaped rewards. For longer horizons, Neural-E$^3$ is the only method to achieve the globally optimal reward, whereas none of the other methods get past the poor local optimum induced by the misleading rewards. Note that Neural-E$^3$ actually obtains \textit{less} reward than the other methods during its exploration phase, but this pays off during exploitation since it enables the agent to eventually discover states with much higher reward.

\subsection{Maze Environment}

We next evaluated our approach on a maze environment, which is a modified version of the Collect domain \cite{VPN}, shown in Figure \ref{fig:maze}. States consist of RGB images where the three channels represent the walls, the agent and the goal respectively. The agent receives a reward of $2.0$ for reaching the goal, $-0.5$ for hitting a wall and $-0.2$ otherwise.
Mazes are generated randomly for each episode, thus the number of states is extremely large and the agent must learn to generalize across mazes.
Our dynamics models are action-conditional convolutional networks taking as input an image and action and predicting the next image and reward. We used the deterministic search procedure described in Section \ref{planning-deterministic} for planning.

We compared to two other approaches. The first was a double DQN with prioritized experience replay as before. The second was a model-based agent identical to ours, except that it uses a uniform exploration policy during the explore phase. This is similar to the PETS algorithm \cite{Chua2018} applied to discrete action spaces, as it optimizes rewards over an ensemble of dynamics models. We call this UE$^2$, for Uniform Explore Exploit.

Performance measured by reward across 3 random seeds is shown in Figure \ref{fig:maze-results} for different maze sizes.
The DQN agent is able to solve the smallest $5 \times 5$ mazes after a large number of episodes, but is not able to learn meaningful behavior for larger mazes.
The UE$^2$ and Neural-E$^3$ agents both perform similarly for the $5 \times 5$ mazes, but the relative performance of Neural-E$^3$ improves as the size of the maze becomes larger.
Note also that the Neural-E$^3$ agent collects more reward during its exploration phase, even though it is not explicitly optimizing for reward but rather for model disagreement.
Figure \ref{fig:predictions-crop} in Appendix \ref{appendix-maze} shows the model predictions for an action sequence executed by the Neural-E$^3$ agent during the exploration phase. The predictions of the different models agree until the reward is reached, which is a rare event.



\begin{figure}
  \centering


  \subfigure[Stochastic Combination Lock]{
    \includegraphics[width=0.19\textwidth]{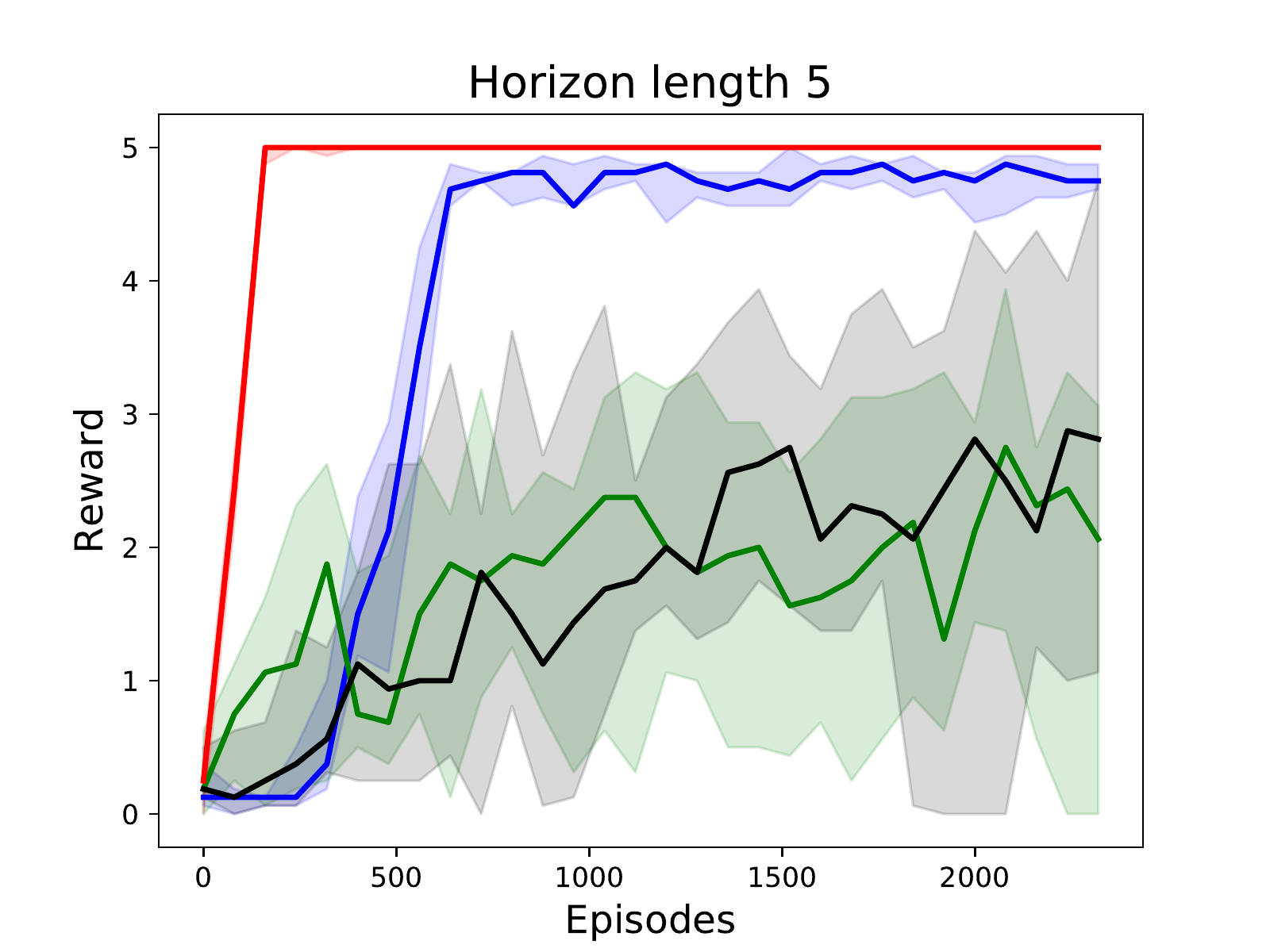}
    \includegraphics[width=0.19\textwidth]{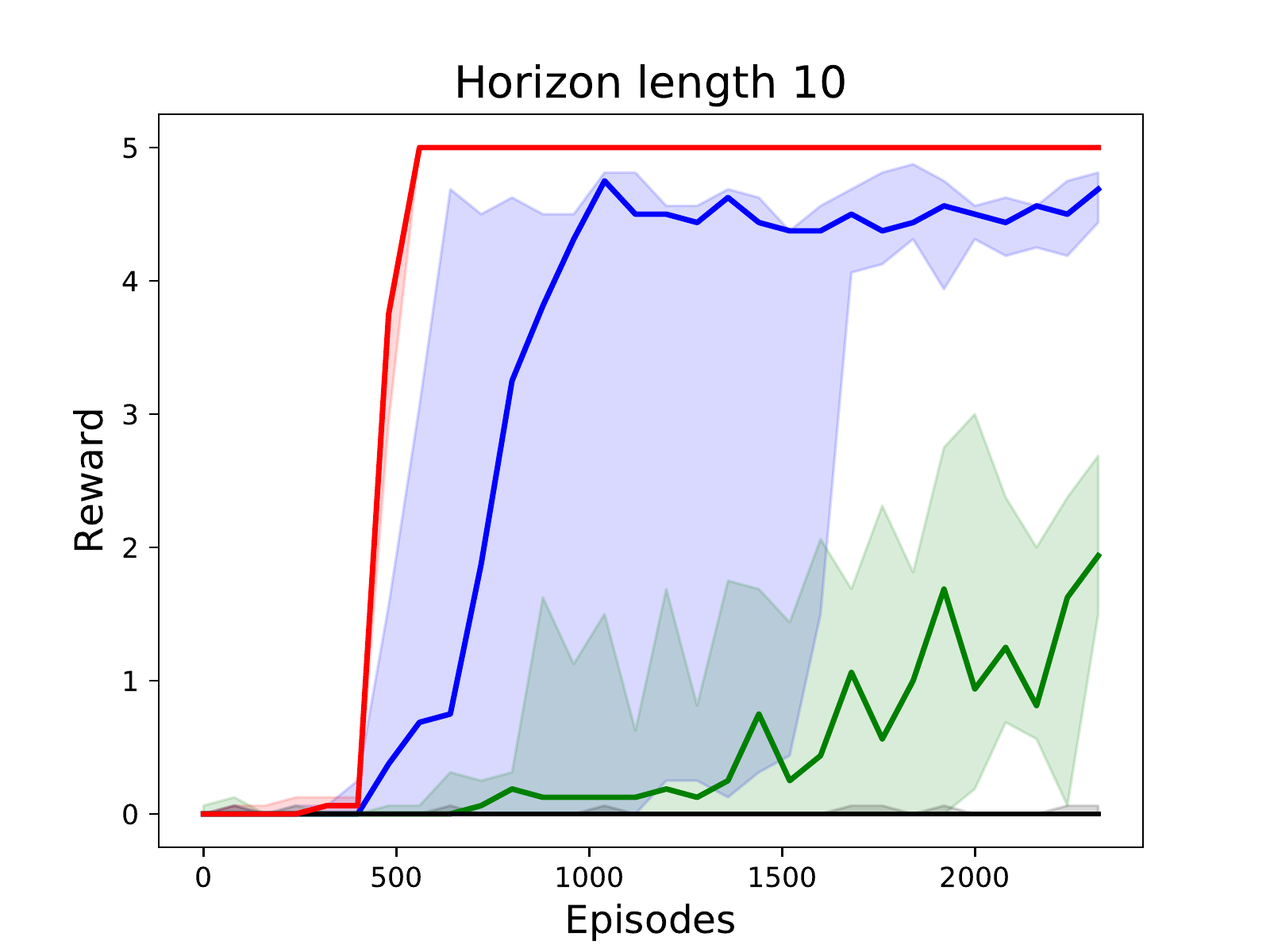}
    \includegraphics[width=0.19\textwidth]{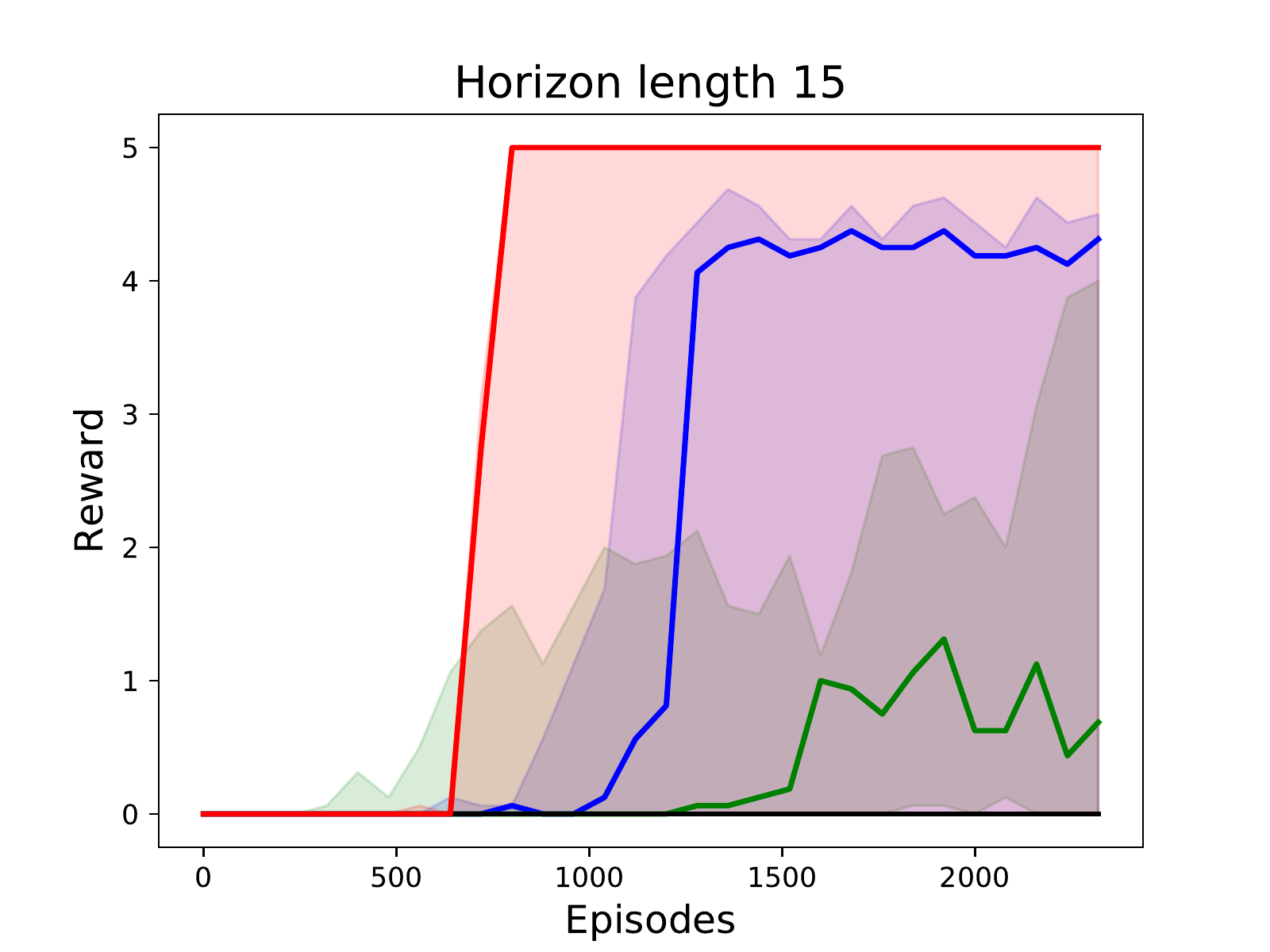}
    \includegraphics[width=0.19\textwidth]{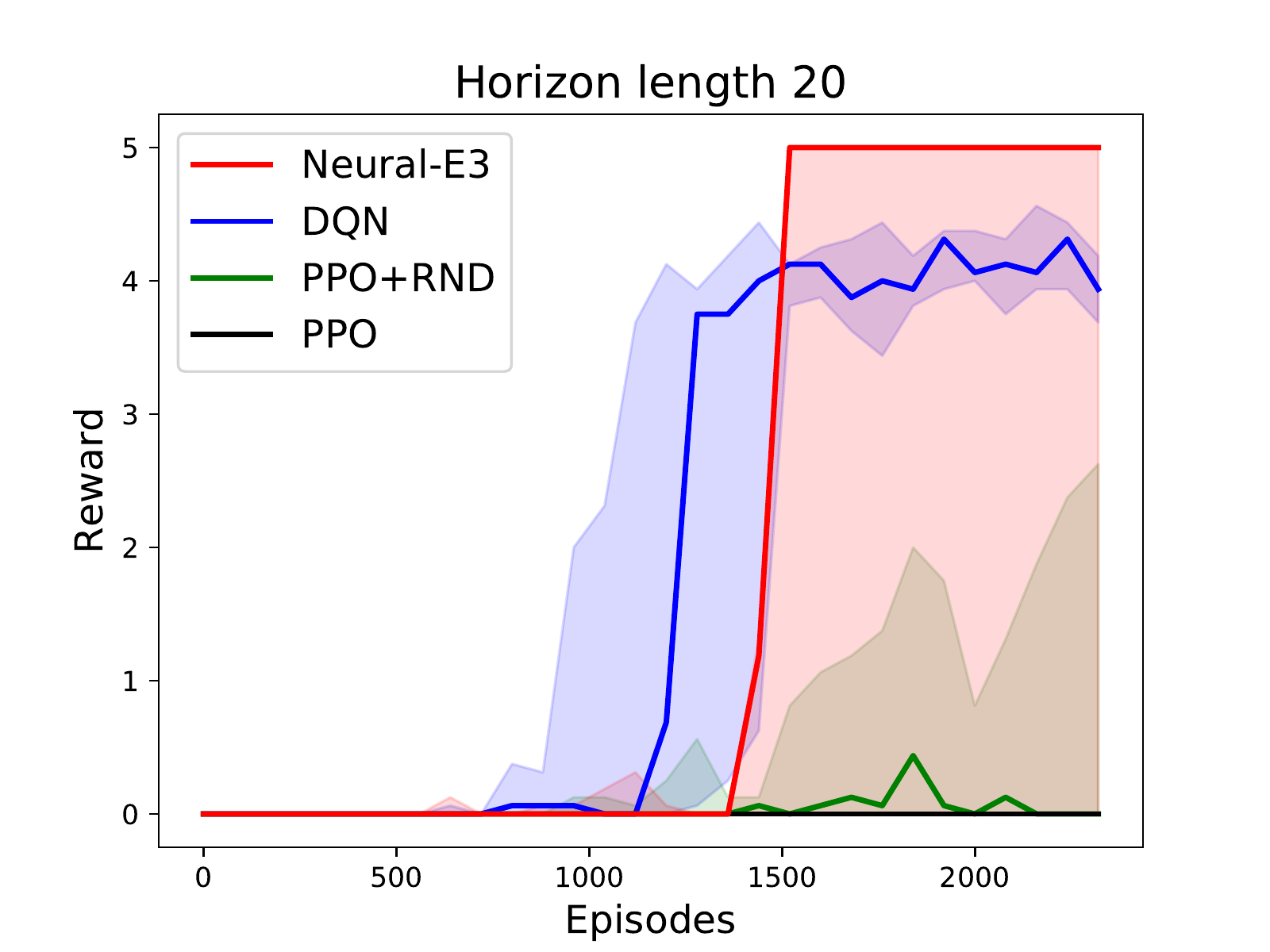}\label{fig:combolock-results}} \\

  \subfigure[Stochastic Combination Lock with antishaped rewards]{
    \includegraphics[width=0.19\textwidth]{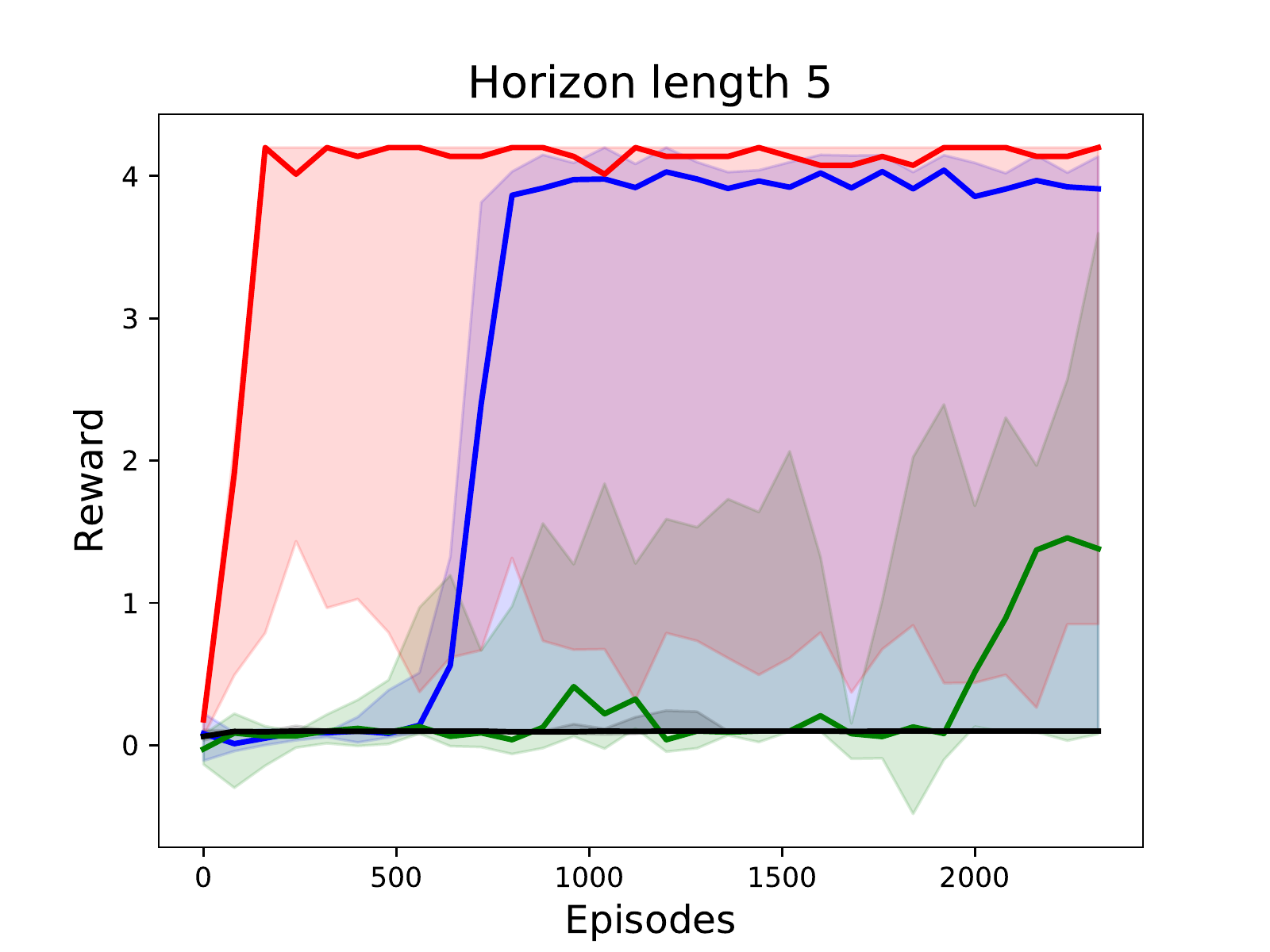}
    \includegraphics[width=0.19\textwidth]{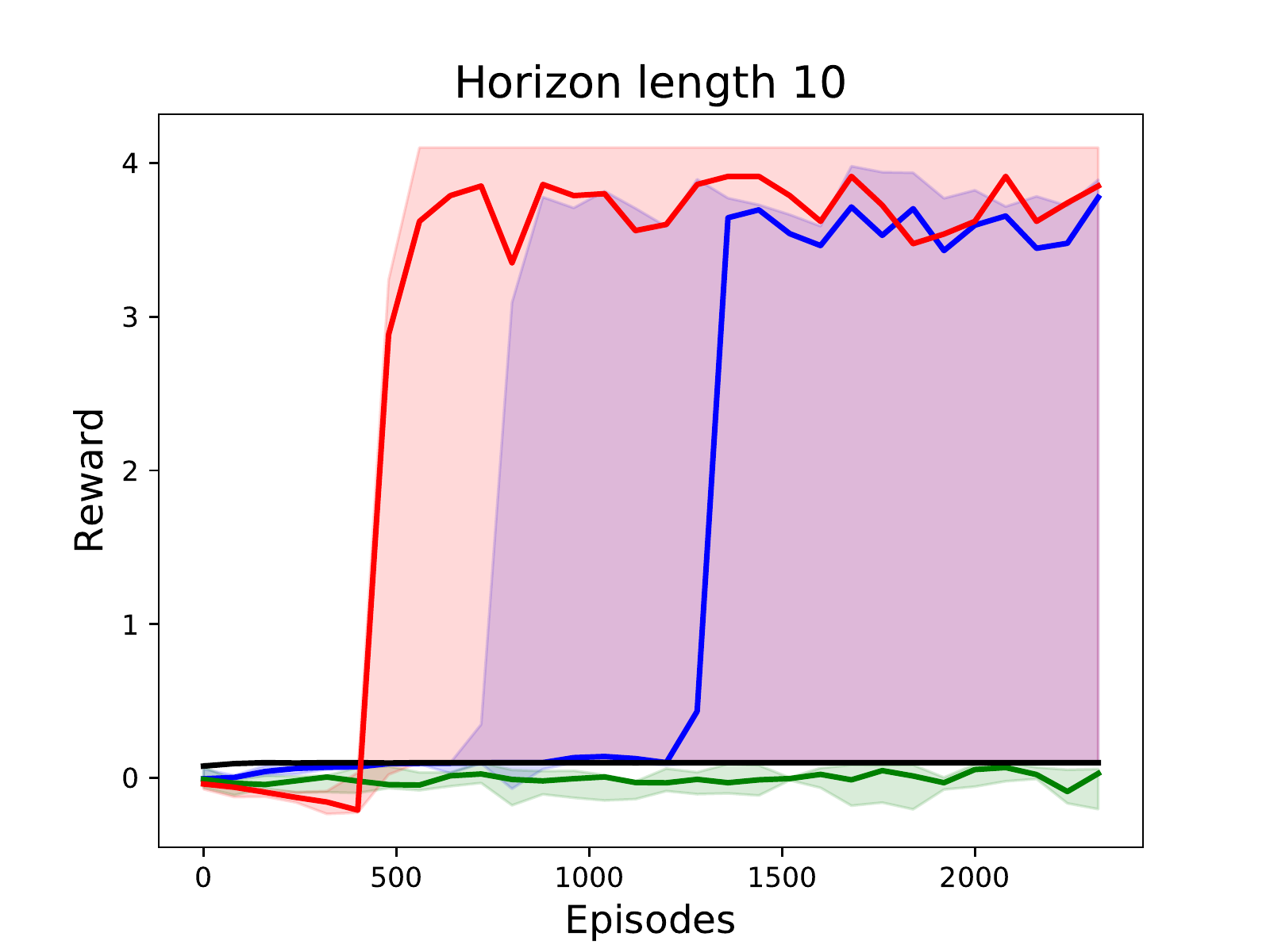}
    \includegraphics[width=0.19\textwidth]{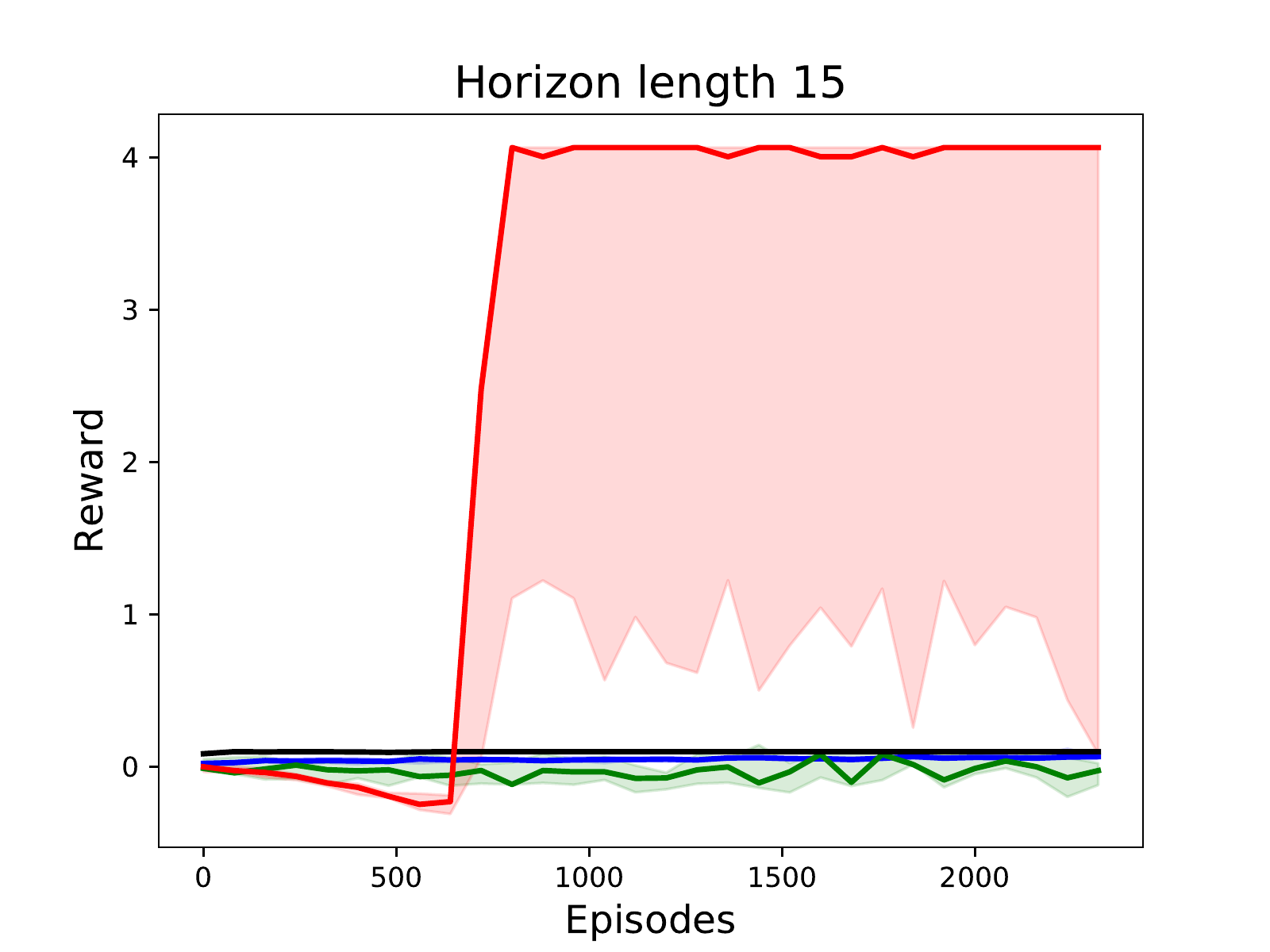}
    \includegraphics[width=0.19\textwidth]{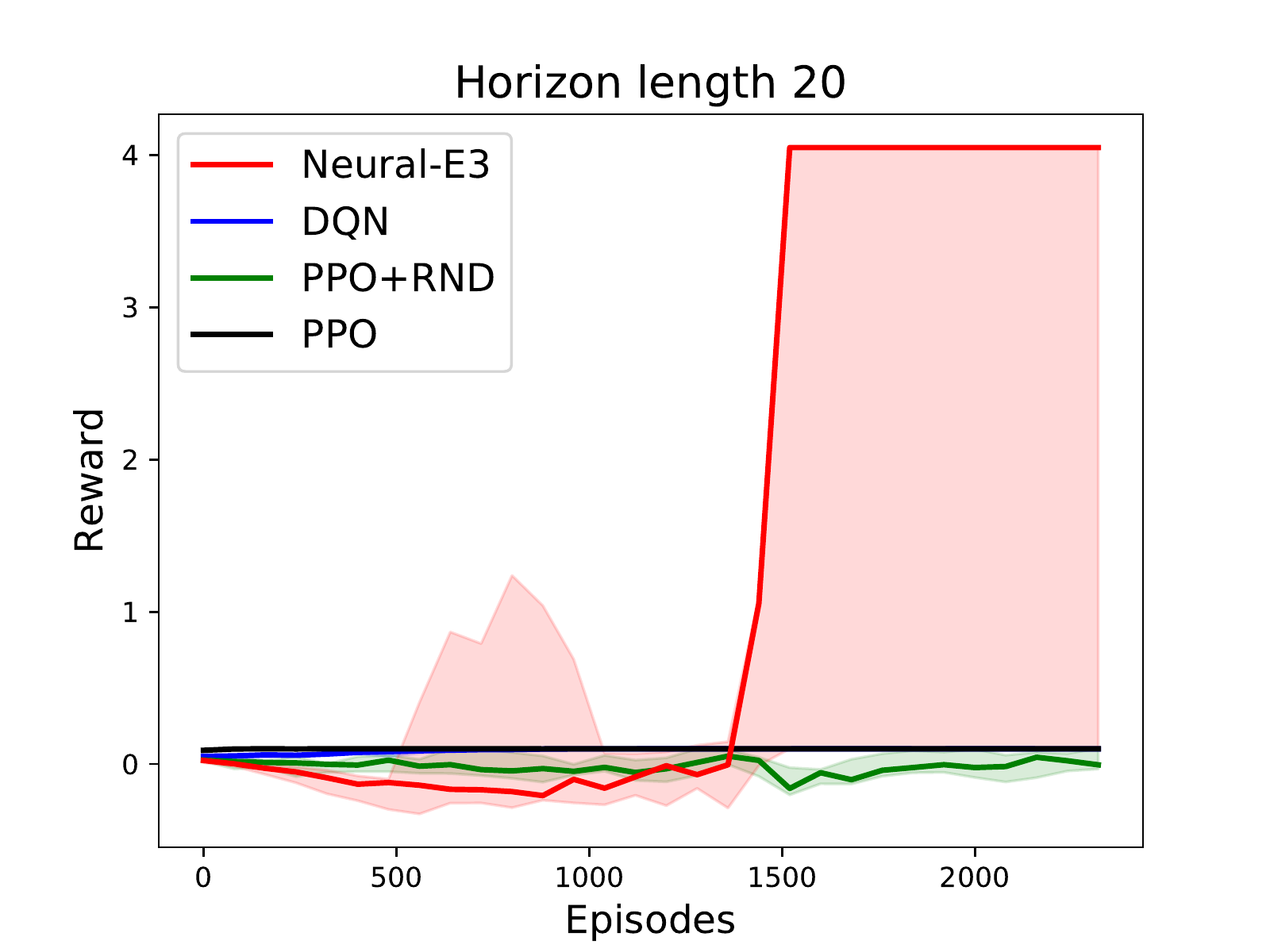}\label{fig:combolock-results-antishaped}} \\


\subfigure[Maze]{\includegraphics[width=0.19\textwidth]{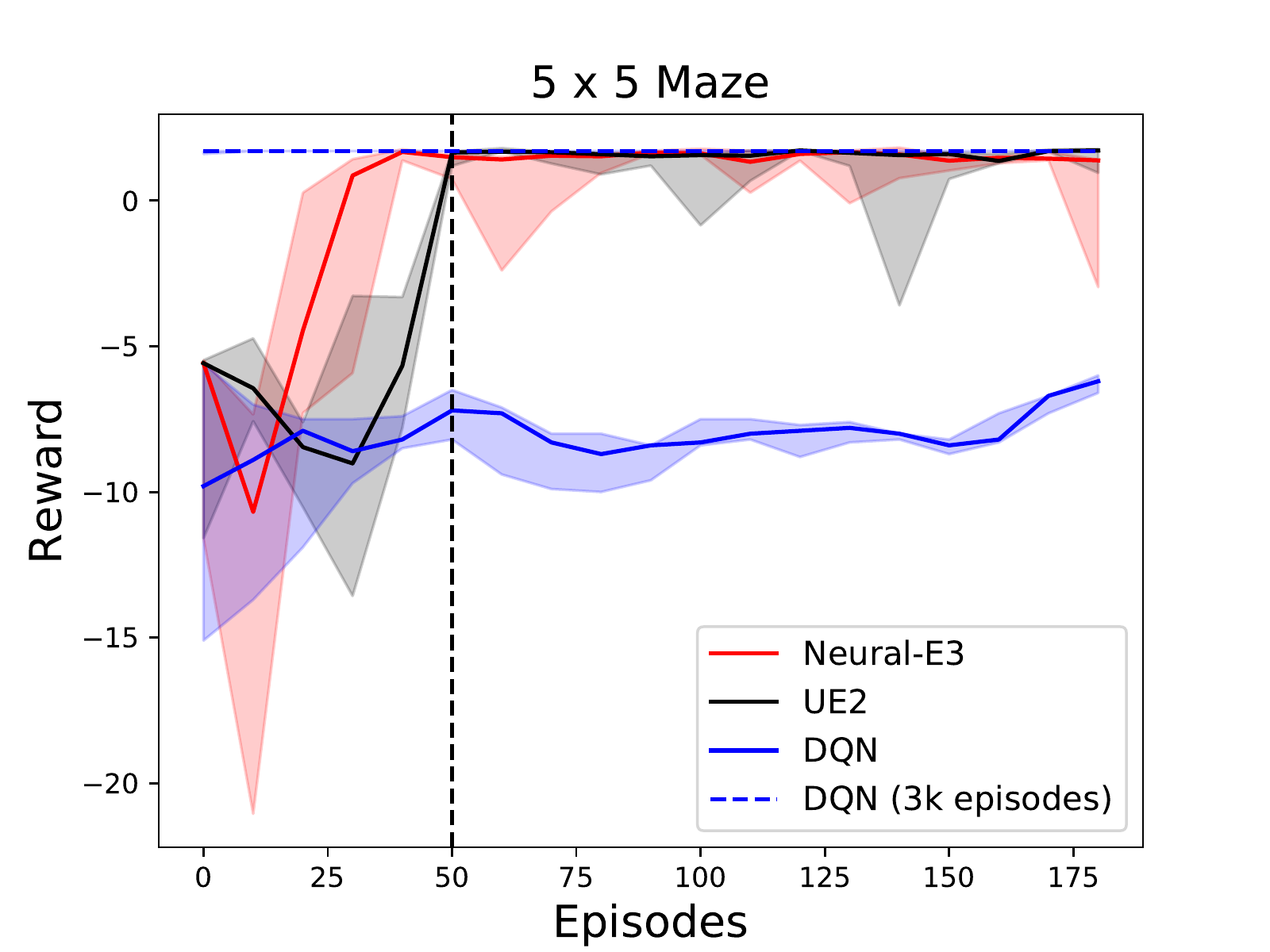}
\includegraphics[width=0.19\textwidth]{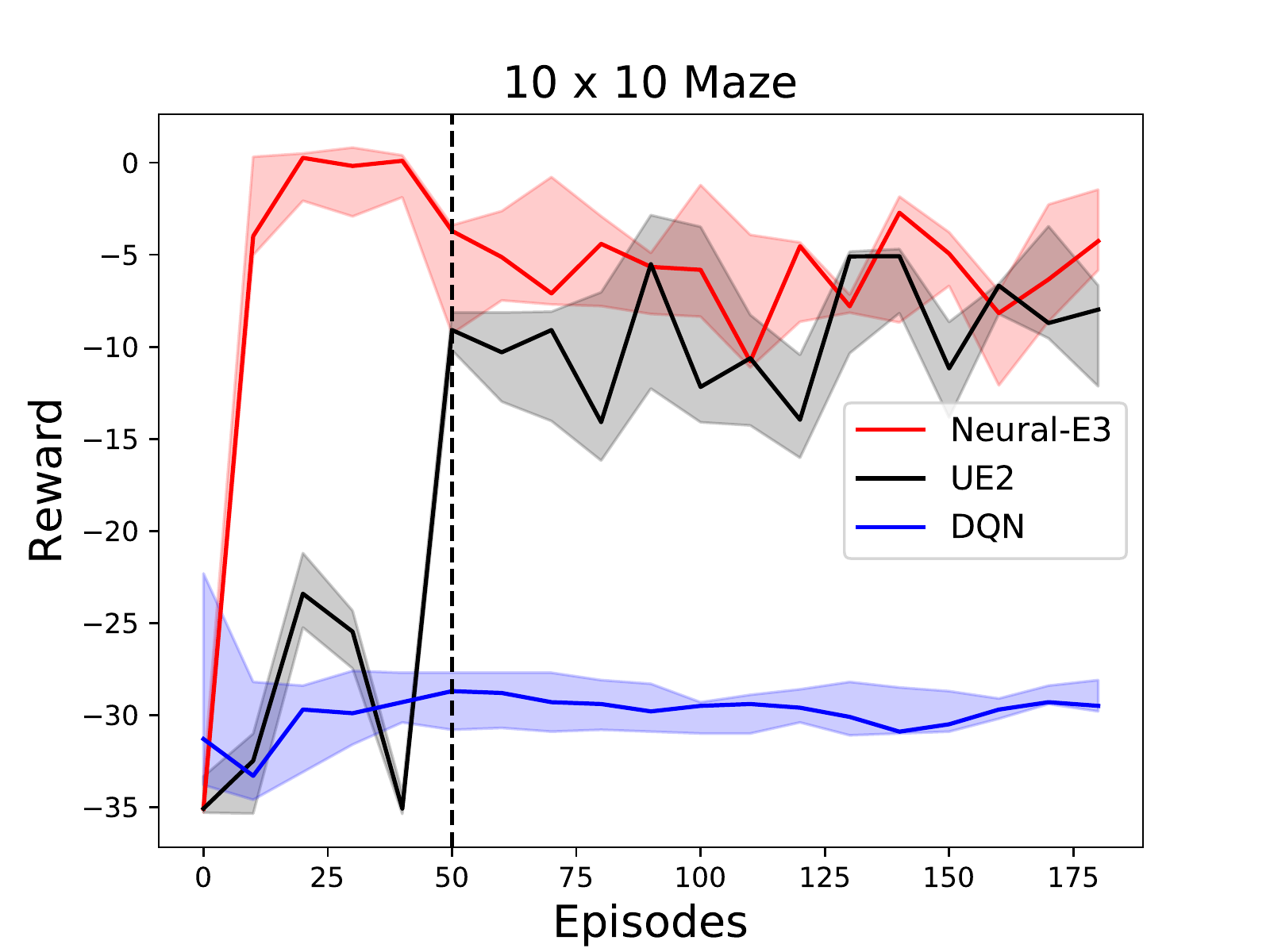}
\includegraphics[width=0.19\textwidth]{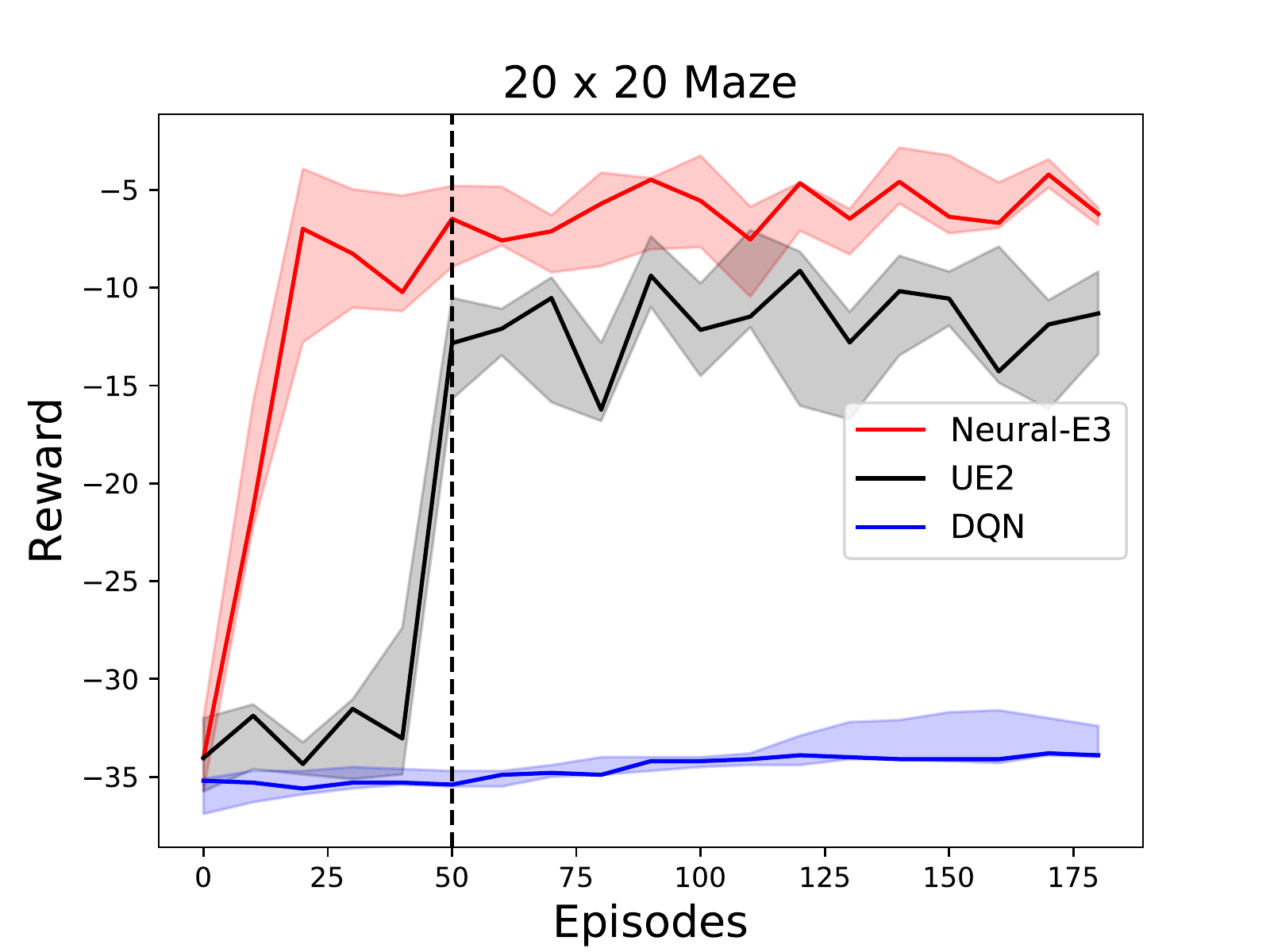}\label{fig:maze-results}}
\rulesep
\subfigure[Classic Control]{\includegraphics[width=0.19\textwidth]{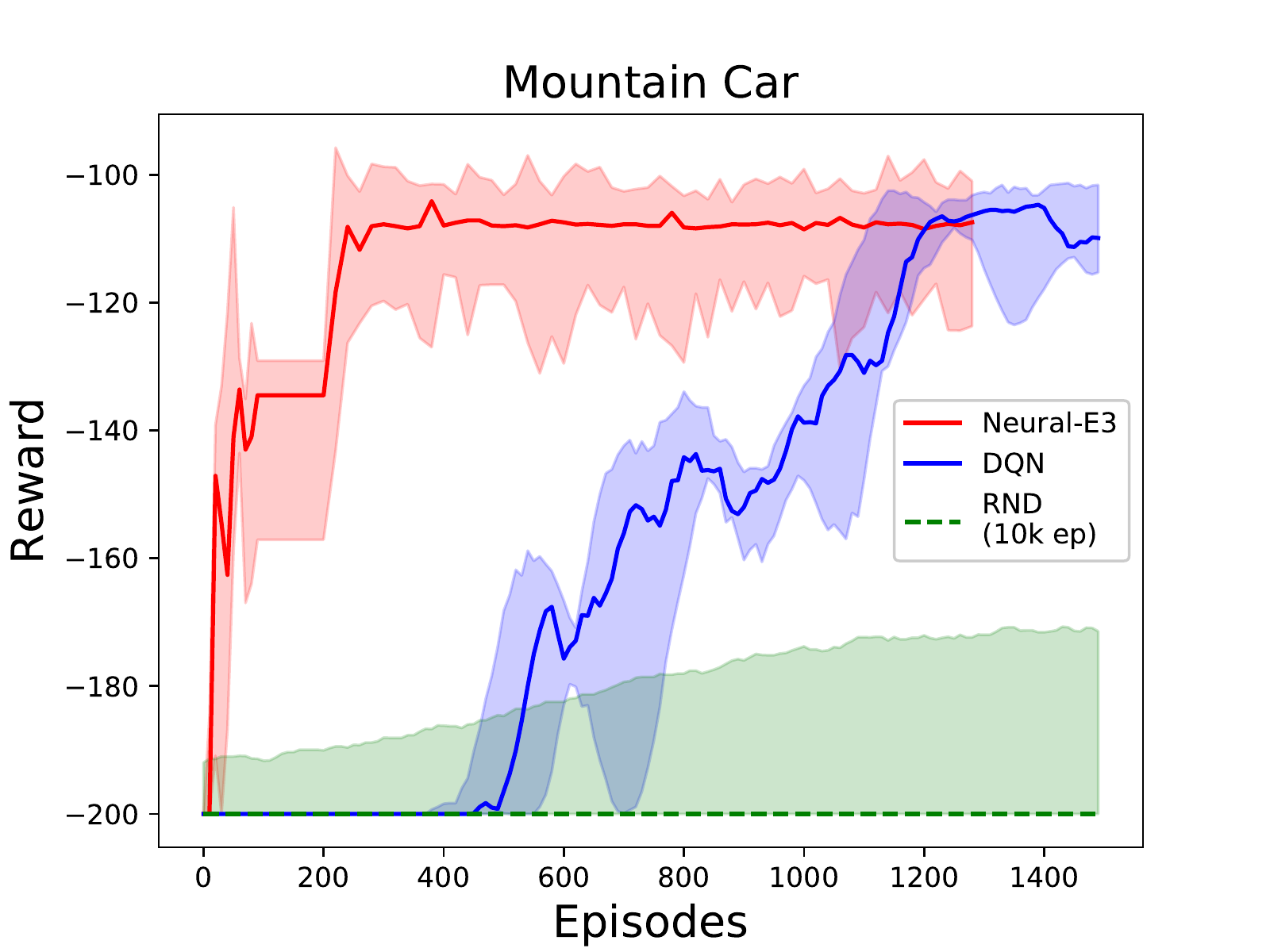}
  \includegraphics[width=0.19\textwidth]{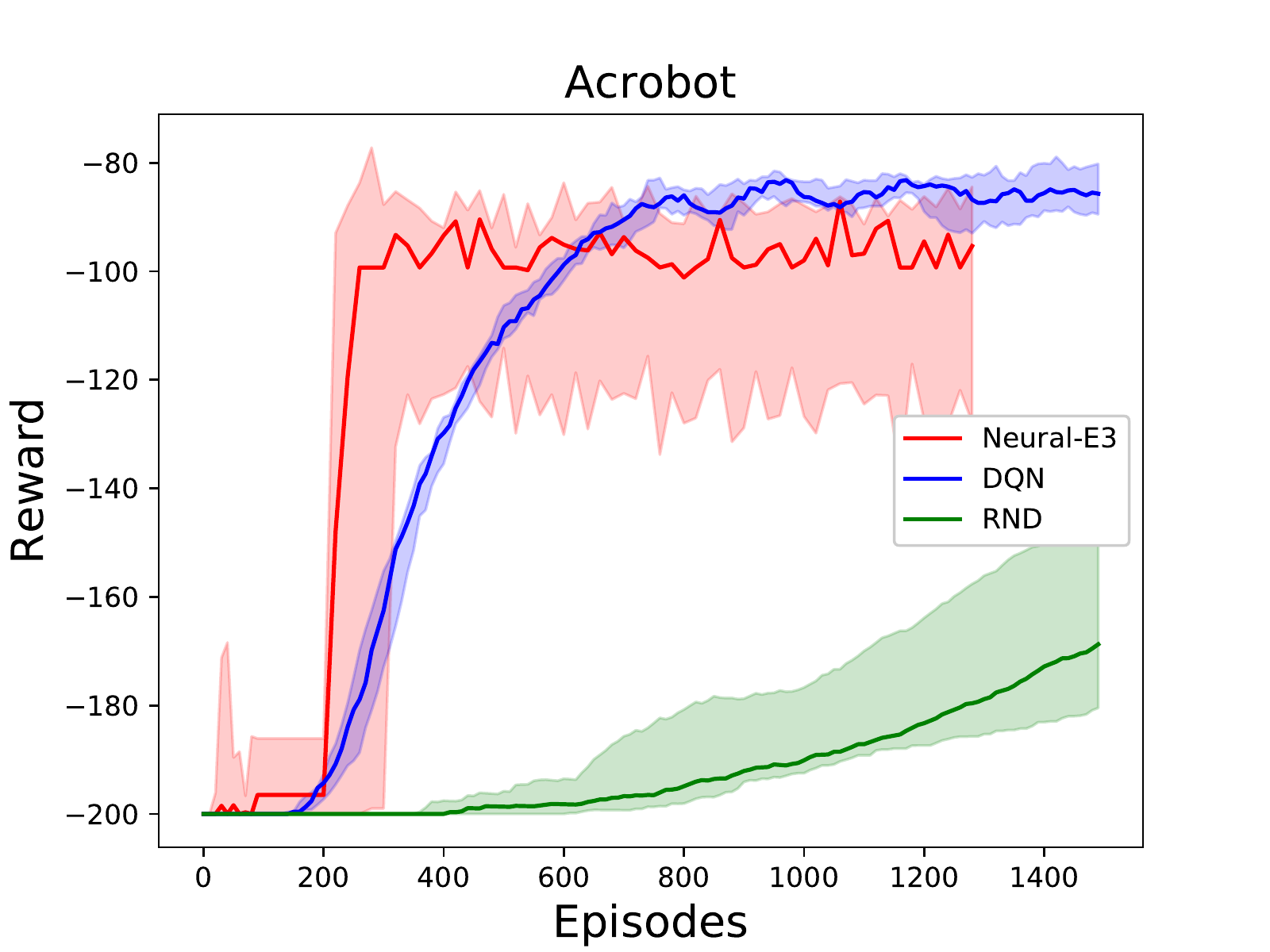}\label{fig:control-results}}

\caption{Comparison of methods across different domains. Solid lines represent median performance across seeds, shaded region represents range between best and worst seeds.}
\label{fig:all-results}
\end{figure}

\subsection{Continuous Control}

We then evaluated our approach on two continuous control domains, shown in Figure \ref{fig:control}. MountainCar \cite{Moore90} is an environment with simple non-linear dynamics and continuous state space $(\mathcal{S} \subseteq \mathbb{R}^2)$ where the agent must drive an underpowered car up a steep hill, which requires building momentum by first driving up the opposite end of the hill.
The agent only receives reward at the top of the hill, hence this requires non-trivial exploration.
Acrobot \cite{Acrobot} requires swinging a simple under-actuated robot above a given height, also with a continuous state space $(\mathcal{S} \subseteq \mathbb{R}^6)$. Both tasks have discrete action spaces with $|\mathcal{A}|=3$.

We found that even planning with a perfect model was computationally impractical due to the sparsity of rewards, hence we used the method described in section 4.3, where we trained a DQN offline using the data collected during exploration.
Results for Neural-E$^3$, DQN and RND agents across 5 random seeds are shown in Figure \ref{fig:control-results}. For Mountain Car, the DQN is able to solve the task but requires around 1200 episodes to do so. Neural-E$^3$ is able to quickly explore, and solves the task to a similar degree of success in under 300 episodes. The RND agent only starts to collect reward after 10K episodes. For the Acrobot task, Neural-E$^3$ also explores quickly, although its increase in sample efficiency is less pronounced compared to the DQN.
The RND agent is also able to make quicker progress on this task, which suggests that the exploration problem may not be as difficult.

\section{Conclusion}

This work extends the classic $E^3$ algorithm to operate in large or infinite state spaces.
On the theoretical side, we present a model-elimination based version of the algorithm which provably requires only a polynomial number of samples to learn a near-optimal policy with high probability.
Empirically, we show that this algorithm can be approximated using neural networks and still provide good sample efficiency in practice.
An interesting direction for future work would be combining the exploration and exploitation phases in a unified process, which has been done in the tabular setting \cite{Brafman2003}.
Another direction would be to explicitly encourage disagreement between different models in the ensemble for unseen inputs, in order to better approximate the maximal disagreement between models in a version space which we use in our idealized algorithm. Such ideas have been proposed in active learning \cite{Cohn1994} and contextual bandits \cite{Foster2018}, and could potentially be adapted to multi-step RL.

\clearpage

\subsubsection*{Acknowledgments}

I would like to thank Akshay Krishnamurthy, John Langford, Alekh Agarwal and Miro Dudik for helpful discussions and feedback.

\medskip

\bibliographystyle{abbrv}
\bibliography{neurips_2019}

\clearpage

\appendix

\makeatletter
\@addtoreset{theorem}{section}
\@addtoreset{lemma}{section}
\@addtoreset{proposition}{section}
\makeatother

\section{Proofs}
\label{appendix-proofs}

\subsection{Computing Empirical Misfit}

\label{empirical-misfit}

Estimating the misfit $\mathcal{W}(\pi, M, h)$ directly may not be possible when dealing with large state spaces, since a given roll-in state-action pair $(s_{h-1}, a_{h-1})$ may only be observed once and we do not have access to the true model to compute the distribution $P_{M^\star}(\cdot | s_{h-1}, a_{h-1})$. However, we can use an alternate approach based on Integral Probability Metrics (IPMs) \cite{Mueller1997IntegralPM} similar to that described in Appendix B of \cite{Sun2018}.
Let $\mathcal{F} = \{f: \mathcal{S} \times \mathcal{A} \times \mathcal{S} \rightarrow \mathbb{R} : \|f \|_\infty \leq 1\}$.
Using this class of test functions, the total variation distance can be written as:

\begin{align*}
  &\|P_M(\cdot | s_{h-1}, a_{h-1}) - P_{M^\star}(\cdot | s_{h-1}, a_{h-1}) \|_{TV} = \\
  & \max_{f \in \mathcal{F}} \underbrace{\mathbb{E}_{s_h \sim P_M(\cdot | s_{h-1}, a_{h-1})}[f(s_{h-1}, a_{h-1}, s_h)] - \mathbb{E}_{s_h \sim P_{M^\star}(\cdot | s_{h-1}, a_{h-1})}[f(s_{h-1}, a_{h-1}, s_h)]}_{g(M, f, s_{h-1}, a_{h-1})}
\end{align*}

The next lemma shows that $\mathcal{W}(\pi, M, h)$ can be expressed using the IPM definition with the max operator placed outside both expectation operators. This will then allow us to estimate the misfit using a smaller (finite) set of test functions and apply concentration arguments to bound the difference between the true and estimated values.

\begin{techlemma} 
  \begin{align*}
    &\mathcal{W}(\pi, M, h) = \\
    &\max_{f \in \mathcal{F}} \mathbb{E}_{s_{h-1} \sim P_{M^\star}^{\pi, h-1}, a_{h-1} \sim U(\mathcal{A})} \Big[ \mathbb{E}_{s_h \sim P_M(\cdot | s_{h-1}, a_{h-1})}[f(s_{h-1}, a_{h-1}, s_h)] - \mathbb{E}_{s_h \sim P_{M^\star}(\cdot | s_{h-1}, a_{h-1})}[f(s_{h-1}, a_{h-1}, s_h)] \Big]
  \end{align*}
\end{techlemma}

\begin{proof}


Define $f^\mathrm{max}_{s, a, M} = \mbox{argmax}_{f \in \mathcal{F}} g(M, f, s, a)$ and $f^\mathrm{max}_M: \mathcal{S} \times \mathcal{A} \times \mathcal{S} \rightarrow \mathbb{R}$ by $f^\mathrm{max}_M(s, a, s') = f^\mathrm{max}_{s, a, M}(s, a, s')$. Note that $\| f^\mathrm{max}_M \|_\infty \leq 1$ so $f^\mathrm{max}_M \in \mathcal{F}$.

We can then write:

\begin{align*}
  \mathcal{W}(\pi, M, h)
  &= \mathbb{E}_{s_{h-1} \sim P_{M^\star}^{\pi, h-1}, a_{h-1} \sim U(\mathcal{A})}\big[\max_{f \in \mathcal{F}} g(M, f, s_{h-1}, a_{h-1})\big] \\
  &= \mathbb{E}_{s_{h-1} \sim P_{M^\star}^{\pi, h-1}, a_{h-1} \sim U(\mathcal{A})}\big[g(M, f^\mathrm{max}_{s_{h-1}, a_{h-1}, M}, s_{h-1}, a_{h-1})\big] \\
  &= \mathbb{E}_{s_{h-1} \sim P_{M^\star}^{\pi, h-1}, a_{h-1} \sim U(\mathcal{A})}\big[g(M, f^\mathrm{max}_M, s_{h-1}, a_{h-1})\big] \\
  &\leq \max_{f \in \mathcal{F}} \mathbb{E}_{s_{h-1} \sim P_{M^\star}^{\pi, h-1}, a_{h-1} \sim U(\mathcal{A})}\big[g(M, f, s_{h-1}, a_{h-1})\big] \\
\end{align*}

Now let $f^\star$ be the function which maximizes the last quantity. We then have:

\begin{align*}
  &\max_{f \in \mathcal{F}} \mathbb{E}_{s_{h-1} \sim P_{M^\star}^{\pi, h-1}, a_{h-1} \sim U(\mathcal{A})}\big[g(M, f, s_{h-1}, a_{h-1})\big] \\
  &=\mathbb{E}_{s_{h-1} \sim P_{M^\star}^{\pi, h-1}, a_{h-1} \sim U(\mathcal{A})}\big[g(M, f^\star, s_{h-1}, a_{h-1})\big] \\
  &\leq\mathbb{E}_{s_{h-1} \sim P_{M^\star}^{\pi, h-1}, a_{h-1} \sim U(\mathcal{A})}\big[\max_{f \in \mathcal{F}} g(M, f, s_{h-1}, a_{h-1})\big] \\
  &= \mathcal{W}(\pi, M, h)
\end{align*}

Combining the two inequalities gives the result.

\end{proof}

We next define a new set of functions $\widetilde{\mathcal{F}}$ as follows. Let
\begin{equation*}
  f_{\pi, M, h} = \mbox{argmax}_{f \in \mathcal{F}} \mathbb{E}_{s_{h-1} \sim P_{M^\star}^{\pi, h-1}, a_{h-1} \sim U(\mathcal{A})} \Big[ g(M, f, s_{h-1}, a_{h-1}) \Big]
\end{equation*}

and $\widetilde{\mathcal{F}} = \{ \pm f_{\pi, M, h}: \pi \in \Pi, M \in \mathcal{M}, h \in [H]\}$. We then have:

\begin{align*}
  \mathcal{W}(\pi, M, h) &= \max_{f \in \mathcal{F}} \mathbb{E}_{s_{h-1} \sim P_{M^\star}^{\pi, h-1}, a_{h-1} \sim U(\mathcal{A})} \Big[ g(M, f, s_{h-1}, a_{h-1}) \Big] \\
  &= \max_{f \in \widetilde{\mathcal{F}}} \mathbb{E}_{s_{h-1} \sim P_{M^\star}^{\pi, h-1}, a_{h-1} \sim U(\mathcal{A})} \Big[ g(M, f, s_{h-1}, a_{h-1}) \Big]
\end{align*}

The misfit can thus be computed using a smaller (finite) set of test functions $\widetilde{\mathcal{F}}$, with size $|\widetilde{\mathcal{F}}| \leq |\Pi|\cdot |\mathcal{M}|\cdot H$.

Given a dataset $\mathcal{R}_\pi = \{(s_{h-1}^{(i)}, a_{h-1}^{(i)}, s_h^{(i)} \}_{n=1}^n$ generated by following policy $\pi$, we estimate the empirical misfit for a model $M$ at time step $h$ using $\widetilde{\mathcal{F}}$ as follows:

\begin{align*}
  &\widetilde{\mathcal{W}}(\pi, M, h) = \max_{f \in \widetilde{\mathcal{F}}} \frac{1}{n} \sum_{i=1}^n \Big[ \mathbb{E}_{s_h \sim P_M(\cdot | s_{h-1}, a_{h-1})}[f(s_{h-1}, a_{h-1}, s_h)] - f(s_{h-1}^{(i)}, a_{h-1}^{(i)}, s_h^{(i)})] \Big]
\end{align*}

\begin{techlemma}
  \label{misfit-deviation}
  (Deviation Bound for $\widetilde{\mathcal{W}}(\pi, M, h)$).
  Fix $h$ and $\pi \in \Pi$. Sample a dataset $\Big\{(s_{h-1}^{(i)}, a_{h-1}^{(i)}, s_{h}^{(i)})\Big\}_{i=1}^n$ of size $n$ with:
  \begin{equation*}
    s_{h-1}^{(i)} \sim P_{M^\star}^{\pi, h-1}, a_{h-1}^{(i)} \sim U(\mathcal{A}), s_{h}^{(i)} \sim P_{M^\star}(\cdot | s_{h-1}^{(i)}, a_{h-1}^{(i)})
  \end{equation*}

  Then with probability at least $1-\delta$, we have for all $M \in \mathcal{M}$:
  \begin{equation*}
    \Big| \widetilde{\mathcal{W}}(\pi, M, h) - \mathcal{W}(\pi, M, h)  \Big| \leq \frac{4\log(2|\mathcal{M}||\Pi|H/\delta)}{3n}  + 4 \sqrt{\frac{2\log(2|\mathcal{M}||\Pi|H/\delta)}{n}}
  \end{equation*}
\end{techlemma}

\begin{proof}
  Fix $M \in \mathcal{M}$ and $f \in \widetilde{\mathcal{F}}$. Define the random variable $z_i(M, f)$ as

  \begin{equation*}
    z_i(M, f) = \mathbb{E}_{s_h \sim P_M(\cdot | s_{h-1}^{(i)}, a_{h-1}^{(i)})} f(s_{h-1}^{(i)}, a_{h-1}^{(i)}, s_h) - f(s_{h-1}^{(i)}, a_{h-1}^{(i)}, s_h^{(i)})
  \end{equation*}

  The expectation is given by:

  \begin{equation*}
    \mathbb{E}[z_i(M, f)] = \mathbb{E}_{s_{h-1} \sim P_{M^\star}^{\pi, h}, a_{h-1} \sim U} \Big[\mathbb{E}_{s_h \sim P_M(\cdot | s_{h-1}, a_{h-1})}[f(s_{h-1}, a_{h-1}, s_h)] - \mathbb{E}_{s_h \sim P_{M^\star}(\cdot | s_{h-1}, a_{h-1})}[f(s_{h-1}, a_{h-1}, s_h)] \Big]
  \end{equation*}

  Note that $|z_i(M, f)| \leq 2$ and $\mathrm{Var}(z_i(M, f)) \leq 2$. Therefore we can apply Bernstein's inequality which states that for any $\epsilon$:

  \begin{align*}
    P\Big[ \Big| \sum_{i=1}^n (z_i(M, f) - \mathbb{E}[z_i(M, f)) \Big| > \epsilon \Big] &\leq 2\mathrm{exp} \Big( - \frac{\epsilon^2/2}{\sum_{i=1}^n \mathbb{E}[(z_i(M, f) - \mathbb{E}[z_i(M, f)])^2] + 2\epsilon/3}  \Big) \\
      &\leq 2\mathrm{exp} \Big( - \frac{\epsilon^2/2}{2n + 2\epsilon/3}  \Big) \triangleq \delta \\
  \end{align*}

  Solving for $\epsilon$ in terms of $\delta$, we get: $\frac{\epsilon^2/2}{2n + 2\epsilon/3} = \log(2/\delta) \implies \epsilon^2 - 4n \log(2/\delta) - \frac{2}{3} \epsilon \log(2/\delta) = 0$. Applying the quadratic formula then gives us:

  \begin{align*}
    \epsilon &= \frac{1}{3} \log(2/\delta) + \frac{1}{2} \sqrt{(\frac{2}{3} \log(2/\delta))^2 + 16n \log(2/\delta)} \\
     &\leq \frac{1}{3} \log(2/\delta) + \frac{1}{2} \sqrt{(\frac{2}{3} \log(2/\delta))^2} + \sqrt{16n \log(2/\delta)} \\
     &= \frac{2}{3} \log(2/\delta) + 4 \sqrt{n \log(2/\delta)} \\
  \end{align*}

  Therefore with probability at least $1-\delta$ we have:

  \begin{align*}
    \Big| \sum_{i=1}^n (z_i(M, f) - \mathbb{E}[z_i(M, f)) \Big| < \epsilon \leq \frac{2}{3} \log(2/\delta) + 4 \sqrt{n \log(2/\delta)} \\
  \end{align*}

  And therefore:

  \begin{align*}
    \Big| \Big[ \frac{1}{n} \sum_{i=1}^n (z_i(M, f) \Big] - \mathbb{E}[z_i(M, f)] \Big| \leq \frac{2\log(2/\delta)}{3n}  + 4 \frac{\sqrt{n \log(2/\delta)}}{n} = \frac{2\log(2/\delta)}{3n}  + 4 \sqrt{\frac{\log(2/\delta)}{n}} \\
  \end{align*}

  Via a union bound over $\mathcal{M}$ and $\widetilde{\mathcal{F}}$, we have that for all pairs $M \in \mathcal{F}$ and $f \in \widetilde{\mathcal{F}}$, with probability at least $1-\delta$:
  
  \begin{align*}
    \Big| \Big[ \frac{1}{n} \sum_{i=1}^n (z_i(M, f) \Big] - \mathbb{E}[z_i(M, f)] \Big|
    &\leq \frac{2\log(2|\mathcal{M}||\widetilde{\mathcal{F}}|/\delta)}{3n}  + 4 \sqrt{\frac{\log(2|\mathcal{M}||\widetilde{\mathcal{F}}|/\delta)}{n}} \\
    &\leq \frac{2\log(2|\mathcal{M}|^2|\Pi|H/\delta)}{3n}  + 4 \sqrt{\frac{\log(2|\mathcal{M}|^2|\Pi|H/\delta)}{n}} \\
    &\leq \frac{4\log(2|\mathcal{M}||\Pi|H/\delta)}{3n}  + 4 \sqrt{\frac{2\log(2|\mathcal{M}||\Pi|H/\delta)}{n}} \\
  \end{align*}

  Note that $\mathcal{W}(\pi, M, h) = \max_{f \in \widetilde{\mathcal{F}}} \mathbb{E}[z_i(M, f)]$ and $\widetilde{\mathcal{W}}(\pi, M, h) = \max_{f \in \widetilde{\mathcal{F}}} \frac{1}{n}\sum_{i=1}^n z_i(M, f)$. For a fixed $M$, we have shown uniform convergence over $\widetilde{\mathcal{F}}$ which implies that the empirical and population maxima must be similarly close, which yields the result.
  
\end{proof}


\subsection{Main Results}

\begin{proposition}
Assume $|\mathcal{S}|$ is finite and let $A_h$ be the matrix defined above. Then $rank(A_h) \leq |\mathcal{S}|$.
\end{proposition}
\begin{proof}
This proposition is a special case of Proposition 2 so the proof carries over. It can also be shown with a direct argument as follows.
Define the matrix $U_h \in \mathbb{R}^{|\Pi| \times |\mathcal{S}|}$ by $U_h(\pi, s) = P_{M^\star}^{\pi, h-1}(s)$
and the matrix $V_h \in \mathbb{R}^{|\mathcal{M}| \times |\mathcal{S}|}$ by $V_h(M, s) = \mathbb{E}_{a \sim U(\mathcal{A})}[\|P_{M}(\cdot | s, a) - P_{M^\star}(\cdot | s, a) \|_{TV}]$.

Then we can write:
\begin{align*}
  A_h(\pi, M) = \mathcal{W}(\pi, M, h) &= \mathbb{E}_{s \sim P_{M^\star}^{\pi, h-1}, a \sim U(\mathcal{A})} \big[ ||P_M(\cdot | s, a) - P_{M^\star}(\cdot | s, a) ||_{TV} \big] \\
  &= \sum_s P_{M^\star}^{\pi, h-1}(s) \mathbb{E}_{a \sim U(\mathcal{A})}[||P_M(\cdot | s, a) - P_{M^\star}(\cdot | s, a) ||_{TV}] \\
  &= \sum_s U_h(\pi, s) V_h(M, s) \\
\end{align*}

Therefore we have $A_h = U_hV_h^\top$ and so $rank(A_h) \leq |\mathcal{S}|$.
\end{proof}

\begin{proposition}
  Let $\Gamma$ denote the true transition matrix of size $|\mathcal{S}| \times |\mathcal{S} \times \mathcal{A}|$, with $\Gamma(s', (s, a)) = P_{M^\star}(s'|s, a)$.
  Assume that there exist two matrices $\Gamma_1, \Gamma_2$ with sizes $|\mathcal{S}| \times K$ and $K \times |\mathcal{S} \times \mathcal{A}|$ such that $\Gamma=\Gamma_1\Gamma_2$.
  Then $rank(A_h) \leq K$.
\end{proposition}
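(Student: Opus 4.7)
The plan is to reuse the factorization $A_h = U_h V_h^\top$ derived in the proof of Proposition 1, where $U_h \in \mathbb{R}^{|\Pi| \times |\mathcal{S}|}$ is defined by $U_h(\pi, s) = P_{M^\star}^{\pi, h-1}(s)$ and $V_h \in \mathbb{R}^{|\mathcal{M}| \times |\mathcal{S}|}$ captures the per-state misfit of each model. Since rank is submultiplicative, it suffices to show that $\mathrm{rank}(U_h) \leq K$; the rank of $V_h$ plays no role here. So the task reduces to showing that the low-rank structure of $\Gamma$ forces the induced occupancy matrix $U_h$ to have rank at most $K$, uniformly in $\pi$ and $h$.

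To do this, I would expand the occupancy recursively. For $h \geq 2$, writing out one step of the dynamics gives
\begin{equation*}
P_{M^\star}^{\pi, h-1}(s) = \sum_{s', a'} P_{M^\star}(s \mid s', a') \, \pi(a' \mid s') \, P_{M^\star}^{\pi, h-2}(s').
\end{equation*}
Substituting the hypothesized factorization $P_{M^\star}(s \mid s', a') = \sum_{k=1}^{K} \Gamma_1(s, k)\, \Gamma_2(k, (s', a'))$ and interchanging sums yields
\begin{equation*}
P_{M^\star}^{\pi, h-1}(s) = \sum_{k=1}^{K} \Gamma_1(s, k) \, c_k(\pi, h), \quad c_k(\pi, h) := \sum_{s', a'} \Gamma_2(k, (s', a')) \, \pi(a' \mid s') \, P_{M^\star}^{\pi, h-2}(s').
\end{equation*}
Setting $C \in \mathbb{R}^{|\Pi| \times K}$ by $C(\pi, k) = c_k(\pi, h)$, this is exactly $U_h = C \Gamma_1^\top$, a product with inner dimension $K$, so $\mathrm{rank}(U_h) \leq K$. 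The base case $h = 1$ is even easier: $P_{M^\star}^{\pi, 0}$ is the initial state distribution and does not depend on $\pi$, so $U_1$ has identical rows and rank at most $1 \leq K$.

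Combining, $\mathrm{rank}(A_h) = \mathrm{rank}(U_h V_h^\top) \leq \mathrm{rank}(U_h) \leq K$, as claimed. The only step that requires a little care is the interchange of summations and the observation that the dependence of $U_h(\pi, s)$ on $s$ passes entirely through $\Gamma_1(s, \cdot)$, with all of the policy- and history-dependence absorbed into the coefficients $c_k(\pi, h)$; this is where the assumed factorization of $\Gamma$ is used, and I do not expect any further technical obstacle.
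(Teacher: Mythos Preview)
Your proof is correct and is essentially the same argument as the paper's: both rest on the identity $P_{M^\star}^{\pi,h-1}(s)=\sum_{k}\Gamma_1(s,k)\,c_k(\pi,h)$, which is exactly the paper's $z^{\pi,h-1}_k$. The only presentational difference is that you route through the Proposition~1 factorization $A_h=U_hV_h^\top$ (inner dimension $|\mathcal{S}|$) and then bound $\mathrm{rank}(U_h)\le K$ via $U_h=C\Gamma_1^\top$, whereas the paper absorbs $\Gamma_1$ into the model-side factor and writes down the rank-$K$ factorization $A_h=C\,(V_h\Gamma_1)^\top$ directly; unwinding, the two factorizations are identical, and your explicit handling of the base case $h=1$ is a small bonus the paper leaves implicit.
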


\begin{proof}
  We first define the vectors $z^{\pi, h}$ of size $K$ as follows:
  \begin{align*}
    z^{\pi, h}_k &= \sum_{s_{h-1}} \sum_{a_{h-1}} P_{M^\star}^{\pi, h-1}(s_{h-1}) \pi(a_{h-1}|s_{h-1}) \Gamma_2(k , (s_{h-1}, a_{h-1})) \\
  \end{align*}
  This allows us to rewrite:
  \begin{align*}
    P_{M^\star}^{\pi, h}(s_h) &= \sum_{s_{h-1}} \sum_{a_{h-1}} P_{M^\star}^{\pi, h-1}(s_{h-1}) \pi(a_{h-1}|s_{h-1})P_{M^\star}(s_h | s_{h-1}, a_{h-1}) \\
    &= \sum_{s_{h-1}} \sum_{a_{h-1}}\sum_{k=1}^K P_{M^\star}^{\pi, h-1}(s_{h-1}) \pi(a_{h-1}|s_{h-1}) \Gamma_2(k , (s_{h-1}, a_{h-1})) \Gamma_1(s_h , k) \\
    &= \sum_{k=1}^K z^{\pi, h}_k \Gamma_1(s_h, k)
  \end{align*}

  We can now rewrite the witnessed model misfit as follows:

  \begin{align*}
    \mathcal{W}(\pi, M, h) &= \mathbb{E}_{s_{h-1} \sim P_{M^\star}^{\pi, h-1}, a_{h-1} \sim U(\mathcal{A})} \big[ \|P_M(\cdot | s_{h-1}, a_{h-1}) - P_{M^\star}(\cdot | s_{h-1}, a_{h-1}) \|_{TV}\big] \\
    &= \sum_{s_{h-1}} \sum_{a_{h-1}} P_{M^\star}^{\pi, h-1}(s_{h-1}) \frac{1}{|\mathcal{A}|} \big[ \|P_M(\cdot | s_{h-1}, a_{h-1}) - P_{M^\star}(\cdot | s_{h-1}, a_{h-1}) \|_{TV}\big] \\
    &= \sum_{s_{h-1}} \sum_{a_{h-1}} \sum_{k=1}^K z^{\pi, h-1}_k \Gamma_1(s_{h-1}, k) \frac{1}{|\mathcal{A}|} \big[ \|P_M(\cdot | s_{h-1}, a_{h-1}) - P_{M^\star}(\cdot | s_{h-1}, a_{h-1}) \|_{TV}\big] \\
    &= \sum_{k=1}^K z^{\pi, h-1}_k \sum_{s_{h-1}} \sum_{a_{h-1}}  \Gamma_1(s_{h-1}, k) \frac{1}{|\mathcal{A}|} \big[ \|P_M(\cdot | s_{h-1}, a_{h-1}) - P_{M^\star}(\cdot | s_{h-1}, a_{h-1}) \|_{TV}\big] \\
  \end{align*}

  Define the matrices $U_h$ and $V_h$ of size $|\Pi| \times K$ and $|\mathcal{M}| \times K$ by:
  \begin{align*}
    U_h(\pi, k) &= z^{\pi, h-1}_k \\
    V_h(M, k) &= \sum_{s_{h-1}} \sum_{a_{h-1}}  \Gamma_1(s_{h-1}, k) \frac{1}{|\mathcal{A}|} \big[ \|P_M(\cdot | s_{h-1}, a_{h-1}) - P_{M^\star}(\cdot | s_{h-1}, a_{h-1}) \|_{TV}\big]
  \end{align*}

  We then have $A_h = U_hV_h^\top$, which proves the desired result.
\end{proof}

\begin{lemma}
  Let $\mathcal{M}$ be a set of models and $\Pi$ a set of policies.
  If there exist $M, M' \in \mathcal{M}, \pi \in \Pi$ and $h \leq H$ such that $\mathcal{D}(\pi, M, M', h) > \alpha$, then there exists $h' \leq h$ such that $\mathcal{W}(\pi, M, h') > \frac{\alpha}{4|\mathcal{A}|\cdot H}$ or $\mathcal{W}(\pi, M', h') > \frac{\alpha}{4|\mathcal{A}|\cdot H}$ (or both).
\end{lemma}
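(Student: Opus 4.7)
The plan is to convert disagreement between two models into misfit against the true model via the triangle inequality, and then to telescope that quantity across time steps by a standard simulation-lemma argument.

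First I would observe that $\mathcal{D}(\pi,\cdot,\cdot,h)$ is just the $L^1$ distance between the joint distributions of $(s_{h-1},a_{h-1},s_h)$ induced by rolling out $\pi$ (with the last action drawn uniformly) under each of the two models, and hence satisfies the triangle inequality. Inserting $M^\star$ as an intermediate gives $\mathcal{D}(\pi,M,M',h) \leq \mathcal{D}(\pi,M,M^\star,h) + \mathcal{D}(\pi,M',M^\star,h)$, so without loss of generality $\mathcal{D}(\pi,M,M^\star,h) > \alpha/2$. It remains to upper bound this by a weighted sum of the misfits $\mathcal{W}(\pi,M,h')$.

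Next I would split $\mathcal{D}(\pi,M,M^\star,h)$ by introducing the intermediate distribution that uses $M^\star$ to reach $s_{h-1}$ and $M$ for the final transition. Triangle inequality and the fact that the action at step $h-1$ is uniform give two pieces: (i) $\|P_M^{\pi,h-1} - P_{M^\star}^{\pi,h-1}\|_1$ from the difference in the rollout marginals, and (ii) $\mathbb{E}_{s_{h-1}\sim P_{M^\star}^{\pi,h-1},\,a_{h-1}\sim U}[\|P_M(\cdot|s,a)-P_{M^\star}(\cdot|s,a)\|_1] = 2\,\mathcal{W}(\pi,M,h)$. For (i), I would apply the standard simulation-lemma recursion $\|P_M^{\pi,h'} - P_{M^\star}^{\pi,h'}\|_1 \leq \|P_M^{\pi,h'-1} - P_{M^\star}^{\pi,h'-1}\|_1 + \mathbb{E}_{s_{h'-1}\sim P_{M^\star}^{\pi,h'-1},\,a_{h'-1}\sim\pi}[\|P_M(\cdot|s,a)-P_{M^\star}(\cdot|s,a)\|_1]$, iterated back to step $1$, where the two rollouts share a common initial distribution so the base case vanishes.

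Since $\mathcal{W}$ is defined with uniform actions whereas the telescoping naturally involves actions drawn from $\pi$, the key step is the importance-ratio bound $\pi(a|s) \leq |\mathcal{A}|\cdot U(a)$, which converts each per-step expectation into $2|\mathcal{A}|\,\mathcal{W}(\pi,M,h')$. Combining with piece (ii) yields $\mathcal{D}(\pi,M,M^\star,h) \leq 2|\mathcal{A}|\sum_{h'=1}^{h}\mathcal{W}(\pi,M,h')$. Using $\mathcal{D}(\pi,M,M^\star,h) > \alpha/2$ and pigeonholing over the at most $h \leq H$ summands produces some $h' \leq h$ with $\mathcal{W}(\pi,M,h') > \alpha/(4|\mathcal{A}|H)$, as required.

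The main obstacle is simply bookkeeping: the factor $2$ relating $L^1$ to total variation, the factor $|\mathcal{A}|$ from the importance-weight change of measure, and the factor $h \leq H$ from pigeonhole — these multiply to the constant $4|\mathcal{A}|H$ in the statement. No subtlety arises from $M^\star \notin \mathcal{M}$, because the triangle inequality is a property of the underlying distance on distributions rather than of the model set.
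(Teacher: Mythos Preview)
Your proposal is correct and follows essentially the same approach as the paper: both use the triangle inequality to route through $M^\star$, then apply the simulation-lemma telescoping with the $|\mathcal{A}|$ importance-weight factor to bound rollout-marginal differences by sums of misfits. The only cosmetic difference is that the paper keeps $M$ and $M'$ together in a single four-term decomposition and argues by case analysis (assuming all earlier misfits are small), whereas you first reduce to a single model via the triangle inequality on $\mathcal{D}$ and then pigeonhole over the sum; the ingredients and constants are identical.
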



\begin{proof}
  If there exists $h' \leq h-1$ such that either $\mathcal{W}(\pi, M, h') > \frac{\alpha}{4|\mathcal{A}|\cdot H}$ or $\mathcal{W}(\pi, M', h') > \frac{\alpha}{4|\mathcal{A}|\cdot H}$ then we are done.
  Therefore assume that $\mathcal{W}(\pi, M, h'), \mathcal{W}(\pi, M', h') \leq \frac{\alpha}{4|\mathcal{A}|\cdot H}$ for all $h' \in [H-1]$.
  To keep notation light, for the following we will use the following abbreviations:
  \begin{align*}
    P_M^h &:= P_M(s_h | s_{h-1}, a_{h-1}) \\
    P_M^{\pi, h-1} &:= P_M^{\pi, h-1}(s_{h-1}) \\
  \end{align*}

  We now write:

  \begin{align*}
    &\mathcal{D}(\pi, M, M', h) \\
    &= \frac{1}{|\mathcal{A}|}\sum_{s_{h-1}} \sum_{a_{h-1}} \sum_{s_h} |P_{M}^h P_{M}^{\pi, h-1} - P_{M'}^h P_{M'}^{\pi, h-1}| \\
    &=\frac{1}{|\mathcal{A}|}\sum_{s_{h-1}} \sum_{a_{h-1}} \sum_{s_h} |P_{M}^h P_{M}^{\pi, h-1} - P_{M'}^h P_{M'}^{\pi, h-1} - P_{M}^h P_{M^\star}^{\pi, h-1} + P_{M}^h P_{M^\star}^{\pi, h-1} + P_{M'}^h P_{M^\star}^{\pi, h-1} - P_{M'}^h P_{M^\star}^{\pi, h-1} | \\
    &=\frac{1}{|\mathcal{A}|}\sum_{s_{h-1}} \sum_{a_{h-1}} \sum_{s_h} |P_{M}^h(P_{M}^{\pi, h-1} - P_{M^\star}^{\pi, h-1}) + P_{M'}^h (P_{M^\star}^{\pi, h-1} - P_{M'}^{\pi, h-1}) + (P_{M}^h - P_{M'}^h) P_{M^\star}^{\pi, h-1} | \\
    &\leq\frac{1}{|\mathcal{A}|}\sum_{s_{h-1}} \sum_{a_{h-1}} \sum_{s_h} P_{M}^h|P_{M}^{\pi, h-1} - P_{M^\star}^{\pi, h-1}| + P_{M'}^h |P_{M^\star}^{\pi, h-1} - P_{M'}^{\pi, h-1}| + |P_{M}^h - P_{M'}^h| P_{M^\star}^{\pi, h-1} \\
    &\leq\frac{1}{|\mathcal{A}|}\sum_{s_{h-1}} \sum_{a_{h-1}} \sum_{s_h} P_{M}^h|P_{M}^{\pi, h-1} - P_{M^\star}^{\pi, h-1}| + P_{M'}^h|P_{M^\star}^{\pi, h-1} - P_{M'}^{\pi, h-1}| + |P_{M}^h - P_{M^\star}^h| P_{M^\star}^{\pi, h-1} + |P_{M'}^h - P_{M^\star}^h| P_{M^\star}^{\pi, h-1} \\
    &=\frac{1}{|\mathcal{A}|}\sum_{s_{h-1}} \sum_{a_{h-1}} \sum_{s_h} P_{M}^h|P_{M}^{\pi, h-1} - P_{M^\star}^{\pi, h-1}| + \frac{1}{|\mathcal{A}|}\sum_{s_{h-1}} \sum_{a_{h-1}} \sum_{s_h} P_{M'}^h |P_{M^\star}^{\pi, h-1} - P_{M'}^{\pi, h-1}|  \\
    & \phantom{========} + \frac{1}{|\mathcal{A}|}\sum_{s_{h-1}} \sum_{a_{h-1}} \sum_{s_h} |P_{M}^h - P_{M^\star}^h| P_{M^\star}^{\pi, h-1} + \frac{1}{|\mathcal{A}|}\sum_{s_{h-1}} \sum_{a_{h-1}} \sum_{s_h} |P_{M'}^h - P_{M^\star}^h| P_{M^\star}^{\pi, h-1} \\
    &=\frac{1}{|\mathcal{A}|}\sum_{s_{h-1}} \sum_{a_{h-1}} \sum_{s_h} P_{M}^h|P_{M}^{\pi, h-1} - P_{M^\star}^{\pi, h-1}| + \frac{1}{|\mathcal{A}|}\sum_{s_{h-1}} \sum_{a_{h-1}} \sum_{s_h} P_{M'}^h |P_{M^\star}^{\pi, h-1} - P_{M'}^{\pi, h-1}| + \mathcal{W}(\pi, M, h) + \mathcal{W}(\pi, M', h)
  \end{align*}

  We now bound the first term in this sum:

  \begin{align*}
    &\frac{1}{|\mathcal{A}|} \sum_{s_{h-1}} \sum_{a_{h-1}}\sum_{s_h} P_M^h\Big|P_M^{\pi, h-1} - P_{M^\star}^{\pi, h-1}\Big| \\
    &= \sum_{s_{h-1}} \Big|P_M^{\pi, h-1} - P_{M^\star}^{\pi, h-1}\Big| \\
    &= \sum_{s_{h-1}} \Big| \sum_{s_{h-2}} \sum_{a_{h-2}} \pi(a_{h-2} | s_{h-2}) (P_M^{h-1}P_M^{\pi, h-2} - P_{M^\star}^{h-1}P_{M^\star}^{\pi, h-2} ) \Big| \\
    &= \sum_{s_{h-1}} \sum_{s_{h-2}} \sum_{a_{h-2}} \pi(a_{h-2} | s_{h-2}) \Big|P_M^{h-1}P_M^{\pi, h-2} - P_{M^\star}^{h-1}P_{M^\star}^{\pi, h-2} - P_M^{h-1}P_{M^\star}^{\pi, h-2} + P_M^{h-1}P_{M^\star}^{\pi, h-2}\Big|  \\
    &= \sum_{s_{h-1}} \sum_{s_{h-2}} \sum_{a_{h-2}} \pi(a_{h-2} | s_{h-2}) \Big|P_M^{h-1}(P_M^{\pi, h-2} - P_{M^\star}^{\pi, h-2} )  + (P_M^{h-1}- P_{M^\star}^{h-1}) P_{M^\star}^{\pi, h-2} \Big| \\
    &\leq \sum_{s_{h-1}} \sum_{s_{h-2}} \sum_{a_{h-2}} \pi(a_{h-2} | s_{h-2}) \Big|P_M^{h-1}(P_M^{\pi, h-2} - P_{M^\star}^{\pi, h-2} ) \Big | + \sum_{s_{h-1}} \sum_{s_{h-2}} \sum_{a_{h-2}} \pi(a_{h-2}| s_{h-2}) \Big|(P_M^{h-1}- P_{M^\star}^{h-1}) P_{M^\star}^{\pi, h-2} \Big| \\
    &\leq \sum_{s_{h-1}} \sum_{s_{h-2}} \sum_{a_{h-2}} \pi(a_{h-2} | s_{h-2}) \Big|P_M^{h-1}(P_M^{\pi, h-2} - P_{M^\star}^{\pi, h-2} ) \Big | + \sum_{s_{h-1}} \sum_{s_{h-2}} \sum_{a_{h-2}} \Big|(P_M^{h-1}- P_{M^\star}^{h-1}) P_{M^\star}^{\pi, h-2} \Big| \\
    &= \sum_{s_{h-2}} \Big[\sum_{s_{h-1}} \Big[\sum_{a_{h-2}} \pi(a_{h-2} | s_{h-2}) \Big] P_M^{h-1} \Big] \Big|P_M^{\pi, h-2} - P_{M^\star}^{\pi, h-2} \Big| + |\mathcal{A}| \cdot \mathcal{W}(\pi, M, h-1) \\
    &= \sum_{s_{h-2}} \Big|P_M^{\pi, h-2} - P_{M^\star}^{\pi, h-2} \Big| + |\mathcal{A}| \cdot \mathcal{W}(\pi, M, h-1) \\
    &\leq \sum_{s_{h-2}} \Big|P_M^{\pi, h-2} - P_{M^\star}^{\pi, h-2} \Big| + |\mathcal{A}| \cdot \frac{\alpha}{4|\mathcal{A}|\cdot H} \\
    &= \sum_{s_{h-2}} \Big|P_M^{\pi, h-2} - P_{M^\star}^{\pi, h-2} \Big| + \frac{\alpha}{4H} \\
  \end{align*}

  By induction on $h$, we have

  \begin{align*}
    \frac{1}{|\mathcal{A}|}\sum_{s_{h-1}} \sum_{a_{h-1}} \sum_{s_h} P_{M}^h|P_{M}^{\pi, h-1} - P_{M^\star}^{\pi, h-1}| = \sum_{s_{h-1}} \Big|P_M^{\pi, h-1} - P_{M^\star}^{\pi, h-1}\Big| \leq h \cdot \frac{\alpha}{4H} \leq \frac{\alpha}{4}  \\
  \end{align*}

  An analogous argument shows that

  \begin{align*}
    \frac{1}{|\mathcal{A}|}\sum_{s_{h-1}} \sum_{a_{h-1}} \sum_{s_h} |P_{M'}^h(P_{M'}^{\pi, h-1} - P_{M^\star}^{\pi, h-1})| \leq \frac{\alpha}{4} \\
  \end{align*}

  Putting these together we have:

  \begin{align*}
    \alpha \leq \mathcal{D}(\pi, M, M', h) &\leq \frac{1}{|\mathcal{A}|}\sum_{s_{h-1}} \sum_{a_{h-1}} \sum_{s_h} P_{M}^h|P_{M}^{\pi, h-1} - P_{M^\star}^{\pi, h-1}| \\
    & + \frac{1}{|\mathcal{A}|}\sum_{s_{h-1}} \sum_{a_{h-1}} \sum_{s_h} P_{M'}^h |P_{M^\star}^{\pi, h-1} - P_{M'}^{\pi, h-1}| \mathcal{W}(\pi, M, h) + \mathcal{W}(\pi, M', h) \\
    &\leq \frac{\alpha}{4} + \frac{\alpha}{4} + \mathcal{W}(\pi, M, h) + \mathcal{W}(\pi, M', h) \\
    &= \frac{\alpha}{2} + \mathcal{W}(\pi, M, h) + \mathcal{W}(\pi, M', h)
  \end{align*}

  Therefore $\mathcal{W}(\pi, M, h) + \mathcal{W}(\pi, M', h) \geq \alpha/2$ and since $\mathcal{W}(\pi, M, h), \mathcal{W}(\pi, M', h) \geq 0$ we have either $\mathcal{W}(\pi, M, h) \geq \frac{\alpha}{4} \geq \frac{\alpha}{4|\mathcal{A}|H}$ or $\mathcal{W}(\pi, M', h) \geq \frac{\alpha}{4} \geq \frac{\alpha}{4|\mathcal{A}|H}$, as desired.
\end{proof}

\begin{lemma}
  \label{terminate-explore}
  (Explore or Exploit)
  Suppose the true model $M^\star$ is never eliminated.
  At iteration $t$, one of the following two conditions must hold: either there exists $M \in \mathcal{M}_{t}, h_t \leq H$ such that $\mathcal{W}(\pi_{\mathrm{explore}}^t, M, h_t) > \frac{\epsilon}{4H^2|\mathcal{A}|^2}$, or the algorithm returns $\pi_\mathrm{exploit}$ such that $v_{\pi_{\mathrm{exploit}}} > v_{\pi^\star} - \epsilon$.
\end{lemma}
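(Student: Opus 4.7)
The plan is to split on the algorithm's branch condition at iteration $t$, namely whether $v_{\mathrm{explore}}(\pi_{\mathrm{explore}}^t, \mathcal{M}_t) > \epsilon/|\mathcal{A}|$ or not, and show that the first case forces the misfit conclusion while the second forces the near-optimality conclusion.

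\textbf{Explore branch.} Suppose $v_{\mathrm{explore}}(\pi_{\mathrm{explore}}^t, \mathcal{M}_t) > \epsilon/|\mathcal{A}|$. Let $M, M' \in \mathcal{M}_t$ be a pair achieving the max in the definition of $v_{\mathrm{explore}}$ (which exist by Assumption 2), so $\sum_{h=1}^H \mathcal{D}(\pi_{\mathrm{explore}}^t, M, M', h) > \epsilon/|\mathcal{A}|$. By pigeonhole, some index $h \leq H$ satisfies $\mathcal{D}(\pi_{\mathrm{explore}}^t, M, M', h) > \epsilon/(H|\mathcal{A}|)$. Applying Lemma~\ref{disagreement-error} with $\alpha = \epsilon/(H|\mathcal{A}|)$ then yields some $h' \leq h$ and a model in $\{M,M'\} \subseteq \mathcal{M}_t$ whose misfit strictly exceeds $\alpha/(4|\mathcal{A}|H) = \epsilon/(4H^2|\mathcal{A}|^2)$, which is the first disjunct.

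\textbf{Exploit branch.} Otherwise $v_{\mathrm{explore}}(\pi_{\mathrm{explore}}^t, \mathcal{M}_t) \leq \epsilon/|\mathcal{A}|$, and since $\pi_{\mathrm{explore}}^t$ is the exact maximizer over $\Pi$ (Assumption 2), this bound holds for every $\pi \in \Pi$. Because $M^\star \in \mathcal{M}_t$ by hypothesis, in particular $\sum_h \mathcal{D}(\pi, M^\star, \tilde{M}, h) \leq \epsilon/|\mathcal{A}|$ for any $\pi \in \Pi$. I would then bound the value gap under the two models: since $R^\star \in [0,1]$,
\begin{equation*}
|v^{M^\star}_\pi - v^{\tilde M}_\pi| \;\leq\; \sum_{h=1}^H \|P^{\pi,h}_{M^\star} - P^{\pi,h}_{\tilde M}\|_1,
\end{equation*}
and a one-step unfolding of $P^{\pi,h}_M$ together with the elementary inequality $\pi(a|s) \leq 1 = |\mathcal{A}| \cdot U(a)$ gives
$\|P^{\pi,h}_{M^\star} - P^{\pi,h}_{\tilde M}\|_1 \leq |\mathcal{A}| \cdot \mathcal{D}(\pi, M^\star, \tilde M, h)$. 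Summing over $h$ yields $|v^{M^\star}_\pi - v^{\tilde M}_\pi| \leq |\mathcal{A}| \cdot v_{\mathrm{explore}}(\pi, \mathcal{M}_t) \leq \epsilon$. Applying this bound to both $\pi^\star$ and $\pi_{\mathrm{exploit}}$ (both in $\Pi$ by Assumption 1) and using that $\pi_{\mathrm{exploit}}$ exactly maximizes $v_{\mathrm{exploit}}(\cdot, \tilde M) = v^{\tilde M}_\cdot$, I chain
\begin{equation*}
v^{M^\star}_{\pi_{\mathrm{exploit}}} \;\geq\; v^{\tilde M}_{\pi_{\mathrm{exploit}}} - \epsilon \;\geq\; v^{\tilde M}_{\pi^\star} - \epsilon \;\geq\; v^{M^\star}_{\pi^\star} - 2\epsilon,
\end{equation*}
which gives the second disjunct (the factor of two relative to the stated $\epsilon$ can be absorbed by a redefinition of $\epsilon$, or by adjusting the branch threshold from $\epsilon/|\mathcal{A}|$ to $\epsilon/(2|\mathcal{A}|)$).

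\textbf{Main obstacle.} The delicate step is the exploit case, specifically relating the policy-induced value gap to $\mathcal{D}$, because $\mathcal{D}$ uses a uniform last action while $P^{\pi,h}_M$ uses $\pi$ throughout. The conversion $\pi(a|s) \leq |\mathcal{A}| U(a)$ costs a factor of $|\mathcal{A}|$, which is precisely why the algorithm's stopping threshold is $\epsilon/|\mathcal{A}|$ and why the misfit floor ends up with $|\mathcal{A}|^2$ after combining with the $|\mathcal{A}|$ from Lemma~\ref{disagreement-error}.
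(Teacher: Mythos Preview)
Your proposal is correct and follows essentially the same route as the paper's proof: split on the branch condition, invoke Lemma~\ref{disagreement-error} after a pigeonhole step in the explore case, and in the exploit case bound $|v^{M^\star}_\pi - v^{\tilde M}_\pi|$ by $|\mathcal{A}|\sum_h \mathcal{D}(\pi,\tilde M,M^\star,h)$ using the $\pi(a\mid s)\le 1=|\mathcal{A}|U(a)$ conversion, then appeal to optimality of $\pi_{\mathrm{explore}}^t$ and $\pi_{\mathrm{exploit}}$. Your observation about the factor of two is well-founded: the paper only explicitly bounds the value gap for $\pi^\star$ and its concluding line contains a typo, whereas your chain correctly applies the bound to both $\pi^\star$ and $\pi_{\mathrm{exploit}}$, which indeed yields $2\epsilon$ rather than $\epsilon$ absent the threshold adjustment you mention.
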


\begin{proof}
  First consider the case where $v_{\mathrm{explore}}(\pi_\mathrm{explore}^t, \mathcal{M}_t) > \frac{\epsilon}{|\mathcal{A}|}$. Then by definition of $v_{\mathrm{explore}}$ there exists some $M, M'$ and $h \in [H]$ such that $\mathcal{D}(\pi_\mathrm{explore}^t, M, M', h) > \frac{\epsilon}{H|\mathcal{A}|}$.
  By Lemma 1 we also have $\mathcal{W}(\pi_\mathrm{explore}^t, M, h_t) > \frac{\epsilon}{4H^2|\mathcal{A}|^2}$ or $\mathcal{W}(\pi_\mathrm{explore}^t, M', h_t) > \frac{\epsilon}{4H^2|\mathcal{A}|^2}$ for some $h_t \leq h$.

  Now consider the case where $v_{\mathrm{explore}}(\pi_\mathrm{explore}^t, \mathcal{M}_t) \leq \frac{\epsilon}{|\mathcal{A}|}$.
  Since $\pi_{\mathrm{exploit}}$ is the optimal policy for $\tilde{M}$, we have $v_{\tilde{M}}^{\pi_{\mathrm{exploit}}} \geq v_{\tilde{M}}^{\pi^\star}$.

  We will now bound $|v_{\tilde{M}}^{\pi^\star} - v_{M^\star}^{\pi^\star}|$ :

  \begin{align*}
    |v_{\tilde{M}}^{\pi^\star} - v_{M^\star}^{\pi^\star}| &= \Big| \sum_{h=1}^H \sum_{s_h} P_{\tilde{M}}^{\pi^\star, h}(s_h) R^\star(s_h) - \sum_{h=1}^H \sum_{s_h} P_{M^\star}^{\pi^\star, h}(s_h) R^\star(s_h) \Big|  \\
    &= \Big| \sum_{h=1}^H \sum_{s_h} (P_{\tilde{M}}^{\pi^\star, h}(s_h) - P_{M^\star}^{\pi^\star, h}(s_h)) R^\star(s_h) \Big| \\
    &\leq \sum_{h=1}^H \sum_{s_h} |P_{\tilde{M}}^{\pi^\star, h}(s_h) - P_{M^\star}^{\pi^\star, h}(s_h)| \\
  \end{align*}

  where we have used the fact that the per-timestep rewards are bounded by 1. Expanding further we get:

  \begin{align*}
    |v_{\tilde{M}}^{\pi^\star} - v_{M^\star}^{\pi^\star}|
    &\leq  \sum_{h=1}^H \sum_{s_h} |P_{\tilde{M}}^{\pi^\star, h} - P_{M^\star}^{\pi^\star, h}(s_h)| \\
    &= \sum_{h=1}^H \sum_{s_h} \Big|\sum_{s_{h-1}}\sum_{a_{h-1}} P_{\tilde{M}}^{\pi^\star, h-1}\pi^\star(a_{h-1}|s_{h-1})P_{\tilde{M}}^h - P_{M^\star}^{\pi^\star, h-1}\pi^\star(a_{h-1}|s_{h-1})P_{M^\star}^h \Big| \\
    &\leq \sum_{h=1}^H \sum_{s_h} \sum_{s_{h-1}}\sum_{a_{h-1}} \Big| P_{\tilde{M}}^{\pi^\star, h-1}\pi^\star(a_{h-1}|s_{h-1})P_{\tilde{M}}^h - P_{M^\star}^{\pi^\star, h-1}\pi^\star(a_{h-1}|s_{h-1})P_{M^\star}^h \Big| \\
   &\leq \sum_{h=1}^H \sum_{s_h} \sum_{s_{h-1}} \Big| P_{\tilde{M}}^{\pi^\star, h-1}P_{\tilde{M}}^h - P_{M^\star}^{\pi^\star, h-1}P_{M^\star}^h \Big| \cdot |\mathcal{A}| \\
   &\leq \sum_{h=1}^H \mathcal{D}(\pi^\star, \tilde{M}, M^\star, h) |\mathcal{A}| \\
  \end{align*}

  Note that $\sum_{h=1}^H \mathcal{D}(\pi^\star, \tilde{M}, M^\star, h) \leq v_{\mathrm{explore}}(\pi^\star, \mathcal{M}_t) \leq v_{\mathrm{explore}}(\pi_\mathrm{explore}^t, \mathcal{M}_t) \leq \frac{\epsilon}{|\mathcal{A}|}$ since $\pi_\mathrm{explore}^t$ is the optimal policy for the exploration MDP. Therefore we have

  \begin{align*}
      |v_{\tilde{M}}^{\pi^\star} - v_{M^\star}^{\pi^\star}|  &\leq \frac{\epsilon}{|\mathcal{A}|} \cdot |\mathcal{A}| = \epsilon
\end{align*}





  Combining this with the fact that $v_{\tilde{M}}^{\pi_{\mathrm{exploit}}} \geq v_{\tilde{M}}^{\pi^\star}$, we get $v_{M^\star}^{\pi_{\mathrm{exploit}}} \geq v_{M^\star}^{\pi_{\mathrm{exploit}}} - \epsilon$.

\end{proof}

The proof for the following lemma can be found in \cite{Sun2018} (Lemma 8).
\begin{techlemma}
  Suppose that $|\widetilde{\mathcal{W}}(\pi_\mathrm{explore}^t, M, h_t) - \mathcal{W}(\pi_\mathrm{explore}^t, M, h_t)| \leq \phi$ holds for all $t, h_t$ and $M \in \mathcal{M}$. Then:
  \begin{enumerate}
  \item $M^\star \in \mathcal{M}_t$ for all $t$.
    \item Denote $\widetilde{\mathcal{M}}_{t+1} = \{ M \in \widetilde{\mathcal{M}}_{t}: A_{h_t}(\pi_\mathrm{explore}^t, M) \leq 2\phi \}$ with $\widetilde{\mathcal{M}}_1 = \mathcal{M}$. We have $\mathcal{M}_t \subseteq \widetilde{\mathcal{M}}_t$ for all $t$.
  \end{enumerate}
\end{techlemma}

\begin{lemma}
  \label{iteration-complexity}
  (Iteration Complexity)
  Let $d=\max_{1 \leq h \leq H} rank(A_h)$ and $\phi=\frac{\epsilon}{24H^2|\mathcal{A}|^2\sqrt{d}}$. Suppose that $|\widetilde{\mathcal{W}}(\pi_\mathrm{explore}^t, M, h) - \mathcal{W}(\pi_\mathrm{explore}^t, M, h)| \leq \phi$ holds for all $t$, $h \leq H$ and $M \in \mathcal{M}$. Then the number of rounds of Algorithm \ref{alg:main} with the \texttt{UpdateModelSet} routine given by Algorithm \ref{alg:update-version-space} is at most $Hd \log(\frac{\beta}{2\phi}) / \log(5/3)$.
\end{lemma}

\begin{proof}
  From Lemma \ref{terminate-explore}, if the algorithm does not terminate then we have $\pi_\mathrm{explore}^t, h_t, M' \in \mathcal{M}_{t}$ such that:

  \begin{align*}
    \mathcal{W}(\pi_\mathrm{explore}^t, M', h_t) > \frac{\epsilon}{4H^2|\mathcal{A}|^2} = 6\sqrt{d}\phi
  \end{align*}

  which can be rewritten as:

  \begin{align*}
    A_{h_t}(\pi_\mathrm{explore}^t, M') = U_{h_t}(\pi_\mathrm{explore}^t)^\top V_{h_t}(M') > 6\sqrt{d}\phi.
  \end{align*}

  For any $h$ and $t$, denote $O_{t}^h$ as the origin-centered minimum volume enclosing ellipsoid (MVEE) of $\{V_h(M): M \in \widetilde{\mathcal{M}}_{t}\}$.
  Also denote $O_{t,+}^{h_t}$ as the origin-centered MVEE of $\{v \in O_{t}^{h_t}: U_{h_t}(\pi_\mathrm{explore}^t))^\top v \leq 2\phi \}$.
  Note that by definition of $\widetilde{\mathcal{M}}_{t+1}$, for all $M \in \widetilde{\mathcal{M}}_{t+1}$ we have $A_{h_t}(\pi_\mathrm{explore}^t, M) = U_{h_t}(\pi_\mathrm{explore}^t)^\top V_{h_t}(M) \leq 2\phi$ and since $O_{t+1}^{h_t} \subseteq O_{t}^{h_t}$ we have $O_{t+1}^{h_t} \subseteq O_{t,+}^{h_t}$ and hence $\mbox{vol}(O_t^{h_{t}}) \leq \mbox{vol}(O_{t, +}^{h_t})$. See Figure \ref{fig:ellipsoids} for an illustration.

\begin{figure}
  \centering
  \includegraphics[width=0.7\textwidth]{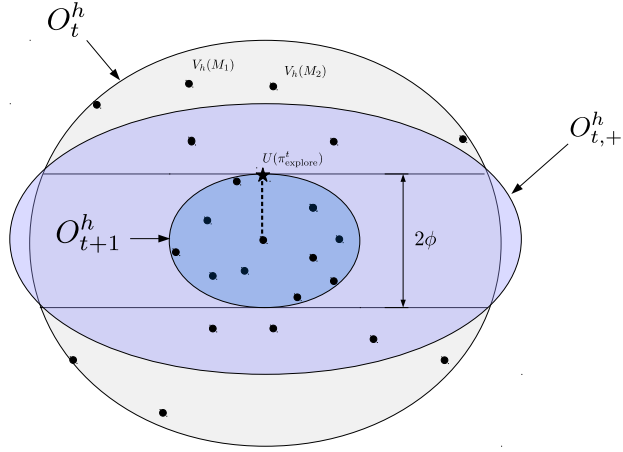}
  \caption{Illustration of geometric argument for $d=2$. Black dots represent embeddings of the models in $\mathcal{M}$, the star represents the embedding of the exploration policy $\pi_\mathrm{explore}^t$.} 
  \label{fig:ellipsoids}
\end{figure}

  We can then apply Lemma 11 in \cite{jiang2017}, (setting $B:= O_{t}^h, p:=U_{h_t}(\pi_\mathrm{explore}^t), v:=V_{h_t}(M'), \kappa:=6\sqrt{d}\phi$), and get:

  \begin{equation*}
    \frac{\mbox{vol}(O_{t+1}^{h_t})}{\mbox{vol}(O_{t}^{h_t})} \leq \frac{\mbox{vol}(O_{t, +}^{h_t})}{\mbox{vol}(O_{t}^{h_t})} \leq 3/5
  \end{equation*}

  This shows that if the algorithm does not terminate, then we shrink the volume of $O_t^{h_t}$ by a constant factor.
  To show that the number of iterations is small, we must now show that the initial volume is not too large and the final volume is not too small.
  Denote $\Phi := \sup_{\pi \in \Pi} \|U_{h_t}(\pi) \|_2$ and $\Psi := \sup_{M \in \mathcal{M}} \|V_{h_t}(M) \|_2$.
  For $O_1^h$, we have that $\mbox{vol}(O_1^h) \leq c_d \Psi^d$ where $c_d$ is the volume of the unit Euclidean ball in $d$ dimensions.
  For any $t$, we have

  \begin{equation*}
    O_t^h \supseteq \{q \in \mathbb{R}^d: \max_{p:\|p\|_2 \leq \Phi} q^\top p \leq 2\phi\} = \{q \in \mathbb{R}^d: \|q\|_2 \leq 2\phi/\Phi\}
  \end{equation*}

  Hence, at termination we must have that $\mbox{vol}(O_T^h) \geq c_d(2\phi/\Phi)^d$.
  Using the volume of $O_1^h$ and the lower bound of the volume of $O_T^h$ and the fact that every round we shrink the volume of $O_t^{h_t}$ by a constant factor, we must have that for any $h \in [H]$ the number of rounds for which $h_t=h$ is at most $d \log(\frac{\Phi \Psi}{2\phi}) / \log(5/3)$.
  Using the definition $\beta \geq \Phi \Psi$, this gives an iteration complexity of $H d \log(\frac{\beta}{2\phi})/\log(5/3)$.

\end{proof}

\begin{theorem}
  \label{sample-complexity}
  Assuming that $M^\star \in \mathcal{M}$, for any $\epsilon, \delta \in (0, 1]$ set $\phi=\frac{\epsilon}{24H^2|\mathcal{A}|^2\sqrt{d}}$ and denote $T=Hd \log(\frac{\beta}{2\phi}) / \log(5/3)$. Run Algorithm \ref{alg:main} with inputs $(\mathcal{M}, n, \phi)$ where $n=\Theta(H^4|\mathcal{A}|^4d \log(T|\mathcal{M}|/\delta)/\epsilon^2)$, and the \texttt{UpdateModelSet} routine given by Algorithm \ref{alg:update-version-space}. Then with probability at least $1-\delta$, Algorithm \ref{alg:main} outputs a policy $\pi_{\mathrm{exploit}}$ such that $v_{\pi_{\mathrm{exploit}}} \geq v^* - \epsilon$.
    The number of trajectories collected is at most $\tilde{O}\Big( \frac{H^5d^2|\mathcal{A}|^4}{\epsilon^2} \log\Big(\frac{T|\mathcal{M}||\Pi|}{\delta} \Big) \Big)$.

\end{theorem}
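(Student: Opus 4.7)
The strategy is to combine the iteration-complexity bound of Lemma \ref{iteration-complexity}, the explore-or-exploit dichotomy of Lemma \ref{terminate-explore}, and the deviation bound of Technical Lemma \ref{misfit-deviation} via a single union bound over all epochs. Lemmas \ref{disagreement-error}--\ref{iteration-complexity} have already decoupled the problem into structural and statistical components, so what remains is to calibrate $n$ so that the empirical misfit $\widetilde{\mathcal{W}}$ stays within $\phi$ of the true misfit $\mathcal{W}$ uniformly across all $T$ rounds.

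First I would establish the good concentration event. Applying Technical Lemma \ref{misfit-deviation} to the $n$ trajectories collected in each epoch and taking a union bound over the at most $T$ epochs together with the $|\Pi|$ possible choices of exploration policy, I obtain that with probability at least $1-\delta$,
\begin{equation*}
\Big|\widetilde{\mathcal{W}}(\pi_\mathrm{explore}^t, M, h) - \mathcal{W}(\pi_\mathrm{explore}^t, M, h)\Big| \leq \frac{4\log(2T|\mathcal{M}||\Pi|H/\delta)}{3n} + 4\sqrt{\frac{2\log(2T|\mathcal{M}||\Pi|H/\delta)}{n}}
\end{equation*}
holds simultaneously for every $t \leq T$, $M \in \mathcal{M}_t$, and $h \leq H$. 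The square-root term dominates, so choosing $n = \Theta(\log(T|\mathcal{M}||\Pi|/\delta)/\phi^2)$ makes the right-hand side at most $\phi$; plugging in $\phi = \epsilon/(24H^2|\mathcal{A}|^2\sqrt{d})$ reproduces the stated $n = \Theta(H^4|\mathcal{A}|^4 d \log(T|\mathcal{M}||\Pi|/\delta)/\epsilon^2)$.

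Next I would close the loop. On this good event, the preliminary technical lemma stated just before Lemma \ref{iteration-complexity} guarantees that $M^\star \in \mathcal{M}_t$ for every $t$, so the hypothesis of Lemma \ref{terminate-explore} is satisfied throughout. Lemma \ref{iteration-complexity} then implies that Algorithm \ref{alg:main} cannot remain in the explore branch for more than $T$ rounds, so at some $t \leq T$ the algorithm must reach line~9 with $v_\mathrm{explore}(\pi_\mathrm{explore}^t, \mathcal{M}_t) \leq \epsilon/|\mathcal{A}|$; applying Lemma \ref{terminate-explore} at this round yields $v_{\pi_\mathrm{exploit}} \geq v_{\pi^\star} - \epsilon$. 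The total trajectory count is $nT = \tilde{O}\!\left( H^5 d^2 |\mathcal{A}|^4 \log(T|\mathcal{M}||\Pi|/\delta) / \epsilon^2 \right)$, matching the stated bound.

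The main obstacle, and essentially the only content beyond bookkeeping, is the parameter calibration: the factor $\sqrt{d}$ in the choice of $\phi$ is exactly what is needed so that the exploration-trigger threshold $\epsilon/(4H^2|\mathcal{A}|^2)$ produced by Lemma \ref{terminate-explore} matches the threshold $6\sqrt{d}\phi$ at which the ellipsoid volume-shrinkage argument underlying Lemma \ref{iteration-complexity} guarantees a constant-factor reduction. This coupling pins down $\phi$, hence $n$, hence $T$, and the union bound then closes the argument.
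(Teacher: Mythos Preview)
Your proposal is correct and follows essentially the same route as the paper: condition on the uniform concentration event $|\widetilde{\mathcal{W}}-\mathcal{W}|\le\phi$, invoke Lemma~\ref{iteration-complexity} for termination in at most $T$ rounds, invoke Lemma~\ref{terminate-explore} for $\epsilon$-optimality at termination (using the technical lemma to keep $M^\star\in\mathcal{M}_t$), and calibrate $n$ from Technical Lemma~\ref{misfit-deviation} plus a union bound over rounds and $h$. The only cosmetic difference is that the paper union-bounds over $h\in[H]$ and $t\in[T]$ (relying on $\pi_\mathrm{explore}^t$ being fixed before the epoch-$t$ data is drawn) whereas you additionally union-bound over $\Pi$; both lead to the same $\tilde O$ bound since $|\Pi|$ already appears inside the logarithm via $|\widetilde{\mathcal{F}}|$.
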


\begin{proof}
  We condition on the event that $|\widetilde{\mathcal{W}}(\pi_\mathrm{explore}^t, M, h) - \mathcal{W}(\pi_\mathrm{explore}^t, M, h)| \leq \phi$ for all $t$ and $h \in [H], M \in \mathcal{M}$.
  Under this condition, by Lemma \ref{iteration-complexity} we know that the algorithm must terminate in at most $Hd \log(\frac{\beta}{2\phi}) / \log(5/3)$ iterations.
  Once the algorithm terminates, we know that we must have an $\epsilon$-optimal policy by Lemma \ref{terminate-explore}.
  Now we show that this condition holds with probability at least $1-\delta$.
  Applying Technical Lemma \ref{misfit-deviation} and performing a union bound over all $h \in \{1,...,H\}$ and $t \in \{1,...,T\}$, we have that with probability at least $1-\delta$:

  \begin{align*}
    \Big| \widetilde{\mathcal{W}}(\pi_\mathrm{explore}^t, M, h) - \mathcal{W}(\pi_\mathrm{explore}^t, M, h)  \Big| &\leq \frac{4\log(4TH|\mathcal{M}||\Pi|/\delta)}{3n} + 4\sqrt{\frac{\log(4TH|\mathcal{M}||\Pi|/\delta)}{n}} \\
    &\leq 8\sqrt{\frac{\log(4TH|\mathcal{M}||\Pi|/\delta)}{n}} 
  \end{align*}

  for all $T$ iterations of the algorithm and $n > 4\log(4TH|\mathcal{M}||\Pi/\delta)/3$.
  Requiring this upper bound to be less than $\phi=\frac{\epsilon}{24H^2|\mathcal{A}|^2\sqrt{d}}$ and solving for $n$, we get:

  \begin{align*}
    8\sqrt{\frac{\log(4TH|\mathcal{M}||\Pi|/\delta)}{n}}  &\leq \frac{\epsilon}{24H^2|\mathcal{A}|^2\sqrt{d}} \\
    64\frac{\log(4TH|\mathcal{M}||\Pi|/\delta)}{n}  &\leq \frac{\epsilon^2}{576H^4|\mathcal{A}|^4d} \\
    \frac{36864H^4|\mathcal{A}|^4d\log(4TH|\mathcal{M}||\Pi|/\delta)}{\epsilon^2}  &\leq n \\
  \end{align*}

  Since we are sampling this number of trajectories at each iteration of the algorithm, the total number of trajectories is therefore $n\cdot T = \tilde{\mathcal{O}}(\frac{H^5d^2|\mathcal{A}|^4}{\epsilon^2}\log(\frac{T|\mathcal{M}||\Pi|}{\delta}))$.
\end{proof}

\subsection{Extension to Unknown $d$}
\label{doubling-trick}

\begin{algorithm}[h]
  \begin{algorithmic}[1]
    \FOR{$i=1, 2, ...$}
    \STATE Set $d_i \leftarrow 2^i$
    \STATE Set $\delta_i \leftarrow \frac{\delta}{i(i+1)}$
    \STATE Set $\phi_i \leftarrow \frac{\epsilon}{24H^2|\mathcal{A}|^2\sqrt{d_i}}$
    \STATE Set $n_i = \frac{36864H^4 |\mathcal{A}|^4 d_i \log(4TH|\mathcal{M}||\Pi|/\delta_i)}{\epsilon^2}$
    \STATE Run $\mathrm{DREEM}(\mathcal{M}, \Pi, n_i, \epsilon, \phi_i)$ until it returns a policy $\pi$ or $t > Hd_i \log(\frac{\beta}{2\phi_i}) / \log(5/3)$
    \IF{a policy $\pi$ was returned}
    \STATE Return $\pi$
    \ENDIF
    \ENDFOR
\end{algorithmic}
\caption{$(\mathcal{M}, \Pi, \epsilon, \delta)$}
\label{alg:doubling}
\end{algorithm}

Algorithm \ref{alg:doubling} shows how a near-optimal policy can be computed without requiring knowledge of the $d$ parameter.
It operates by running DREEM as a subroutine using guesses for $d$ which follow a doubling schedule with adjusted values of the $\delta$ parameter.

First note that since we assign $\frac{\delta}{i(i+1)}$ failure probability to each round of Algorithm \ref{alg:doubling}, the total probability that any of the subroutines returns a suboptimal policy is $\sum_{i=1}^\infty \frac{\delta}{i(i+1)} = \delta \sum_{i=1}^\infty (\frac{1}{i} - \frac{1}{i+1}) = \delta$.
Also note that $M^\star$ is never eliminated. Therefore with probability $1-\delta$, if the algorithm does return a policy, it is near optimal.
It remains to show that Algorithm \ref{alg:doubling} terminates. We know that the subroutine terminates with a near-optimal policy when we reach the first iteration $i$ such that $d \leq d_i=2^i$. Then we must have $d_{i-1} < d \leq d_i \implies 2 d_{i-1} < 2d \leq 2d_i \implies d_i \leq 2d \implies 2^i \leq 2d \implies i \leq \log_2d + 1$, so the algorithm terminates after $\log_2d + 1$ iterations. The sample complexity of each subroutine call is monotonically increasing, and the sample complexity of the last call is $\mathcal{\tilde{O}}(\frac{H^5d_i|\mathcal{A}|^4}{\epsilon^2}\log(\frac{T|\mathcal{M}|}{\delta_i})) = \mathcal{\tilde{O}}(\frac{H^52d|\mathcal{A}|^4}{\epsilon^2}\log(\frac{(\log d)^2T|\mathcal{M}|}{\delta})) = \mathcal{\tilde{O}}(\frac{H^5d|\mathcal{A}|^4}{\epsilon^2}\log(\frac{T|\mathcal{M}|}{\delta}))$, where we have suppressed constant factors and factors which are logarithmic in $d$ at the last step. Combining this with the fact that there are at most $\log_2d+1$ iterations, we see that the sample complexity of Algorithm \ref{alg:doubling} is the same as Algorithm \ref{alg:main} up to factors which are logarithmic in $d$.

\section{Practical Algorithm Details}

\subsection{Model Updates}


For the environments with deterministic dyanamics (Maze and Continuous Control), we found it helpful to train the models to make multi-step rather than single-step predictions.
For a trajectory $\tau=(s_i, a_i, s_{i+1}, a_{i+1}, ...s_{i+K+1}) \in \mathcal{R}$ and model $M$ with parameters $\theta$, the loss is given by:

\begin{align*}
  \mathcal{L}(\theta, \tau) = \sum_{j=1}^K \| s_{i+j+1} - M_\theta(\tilde{s}_{i+j}, a_{i+j}) \|_2^2 \mbox{ such that }
  \tilde{s}_{i+j} =
  \begin{cases}
    s_i &\mbox{ if } j = 0 \\
    M_\theta(\tilde{s}_{i+j-1}, a_{i+j-1}) &\mbox{ else }
  \end{cases}
\end{align*}

Beyond the first step, the model takes as input its prediction from the previous step, and gradients are backpropagated through the model unrolled over $K$ time steps.
This helps the models make more robust predictions over longer timescales, since errors which are magnified over time get penalized and the models are trained on noisy inputs.
We also used a simple form of prioritized experience replay \cite{PER}, where we sample trajectories from the last epoch with higher probability ($p=0.5$) and from all remaining epochs uniformly. This helps the models quickly learn from recent experience.
For the stochastic environment (combination lock), we found that single-step predictions worked well.

\subsection{Planning}
\label{neural-e3-planning}

\subsubsection{Deterministic Dynamics}
\label{planning-deterministic}

Algorithm \ref{alg:search} shows the procedure for searching in a continuous state space when the dynamics are deterministic (note the start state can still be stochastic).
If the state space is discrete, exponential time complexity can be avoided by marking states as visited and only expanding unvisited states, an idea which is used in breadth-first or depth-first search.
Here we generalize this idea for continuous spaces using a priority queue, where expanded states are assigned a priority based on their minimum distance to other states in the currently expanded search tree. If two action sequences lead to nearby states, only one of these states is likely to be expanded given a fixed computational budget as the other will be given low priority due to its proximity to the first. The algorithm returns variable length action sequences, and may be called multiple times within an episode.

\begin{algorithm}[h!]
\begin{algorithmic}[1]
  \STATE \textbf{Input} Set $\mathcal{M} = \{ f_{\theta_1}, ..., f_{\theta_E} \}$ of dynamics models, current state $s$, max graph size $N_{\mathrm{max}}$.
  \STATE Define root node: for $i=1,...,E$ set $v.s_i=s$
  \STATE Set $v.\hat{s} \leftarrow s$
  \STATE Set $v.priority \leftarrow \infty, v.\pi \leftarrow [] $
  \STATE Initialize graph $\mathcal{V} \leftarrow \{ v \}$
\WHILE{$|\mathcal{V}| < N_{\mathrm{max}}$}
\STATE Pick vertex to expand: $v \leftarrow \mbox{argmax}_{v \in \mathcal{V}} \Big [ v.priority \Big ]$
\STATE Set $v.priority \leftarrow -\infty$
\FOR{$a \in \mathcal{A}$}
\IF{\texttt{mode = explore}}
\STATE Utility is maximum disagreement: $u \leftarrow \max_{f_{\theta_i}, f_{\theta_j} \in \mathcal{M}}\|f_{\theta_i}(v.s_i, a) - f_{\theta_j}(v.s_j, a) \|_2^2$
\ELSIF{\texttt{mode = exploit}}
\STATE Utility is average predicted reward: $u \leftarrow \frac{1}{E}\sum_{i=1}^E R^\star(f_{\theta_i}(v.s_i, a))$
\ENDIF
\STATE Define new node $v'$ with $v'.\pi \leftarrow append(v.\pi, a)$
\STATE For $i=1,...,E$, set $v'.s_i \leftarrow f_{\theta_i}(v.s_i, a)$
\STATE Set $v'.\hat{s} \leftarrow \frac{1}{E} \sum_{i=1}^E v'.s_i$
\STATE Set $v'.priority \leftarrow \min_{v \in \mathcal{V}} \| v'.\hat{s} - v.\hat{s} \|_2$
\STATE Set $v'.utility \leftarrow v.utility + u$
\STATE $\mathcal{V} \leftarrow \mathcal{V} \cup \{v'\}$
\ENDFOR
\ENDWHILE
\STATE $v^{\star} \leftarrow \mbox{argmax}_{v \in \mathcal{V}} v.utility / |v.\pi|$
\STATE Return $v^{\star}.\pi$
\end{algorithmic}
\caption{$\texttt{DeterministicPlanner}(s, \mathcal{M}, N_{\mathrm{max}}, \texttt{mode})$}
\label{alg:search}
\end{algorithm}

\subsubsection{Stochastic Dynamics}
\label{planning-stochastic}

When planning in a stochastic environment, we use Monte-Carlo Tree Search where a given node $\nu$ in the tree at depth $h$ corresponding to a fixed action sequence $\pi_A$ (of length $h$) consists of empirical estimates of $P_{M_1}^{\pi_A, h}, ..., P_{M_E}^{\pi_A, h}$ for each model in the ensemble. Concretely, $\nu$ is represented as a tensor of size $|E| \times K \times m$ where $|E|$ is the number of models in the ensemble, $K$ is the number of samples drawn from each model $M$ to estimate its predicted distribution $P_M^{\pi_A, h}$, and $m$ is the dimension of the state vector.
The root node is initialized with the current state $s$, i.e. $S^\mathrm{root}_{e, k} = s$ for all $1 \leq e \leq E, 1 \leq k \leq K$.
Given an action $a \in \mathcal{A}$ applied at a node $\nu$, the next node is computed as follows: $\nu'_{e, k} \sim M_e(\nu_{e, k}, a)$.

The rewards at each node, which are then used to choose which action to execute in the real environment, depend on whether the algorithm is in explore or exploit mode.
In explore mode, the reward is given by $R_\mathrm{explore}(\nu) = \max_{1 \leq e, e' \leq E} \| \hat{P}_{M_e}(\cdot) - \hat{P}_{M_{e'}}(\cdot)  \|_{TV}$, where $\hat{P}_{M_e}(\cdot)$ is the empirical distribution computed using the $K$ samples $\nu_{e, :}$ drawn from model $M_e$.
In exploit mode, the reward is given by $R_\mathrm{exploit}(\nu) = \frac{1}{E \cdot K} \sum_{e=1}^E \sum_{k=1}^K R^\star(\nu_{e, k})$, i.e. the mean reward across all samples and all models in the ensemble.
After a fixed number of playouts, the MCTS procedure returns a sequence of actions which maximizes the expected exploration or exploitation reward. We execute the first action in this sequence, and then replan at every step. See our code release for full details.

\section{Experiment Details}
\label{appendix-experiments}

For Neural-E$^3$, we found that specifying a number of exploration epochs was simpler than tuning the $\epsilon$ parameter in Algorithm 1, which determines when to switch to the exploit phase and which is task-dependent. This is listed in the table of hyperparameters.

\subsection{Stochastic Combination Lock}
\label{appendix:combolock}

The environment consists of $H$ levels with $3$ underlying states per level (denoted $s_{1, h}, s_{2, h}, s_{3, h}$) and 4 possible actions. The states $s_{3, :}$ are dead states from which it is impossible to recover: all actions from $s_{3, h}$ lead to $s_{3, h+1}$ with probability 1.
2 actions lead from each of the states $s_{1, h}$ and $s_{2, h}$ to the dead state $s_{3, h+1}$, and the other two actions lead to one of $s_{1, h+1}$ and $s_{2, h+1}$. Which action leads to which state is randomly determined when the environment is initialized and kept fixed thereafter. This means that simply repeating a single action is unlikely to lead to the reward.
With probability $\alpha=0.1$, the effect of the actions leading to the good states at the next level is flipped.
Therefore, executing a preplanned action sequence without accounting for intermediate observations is likely to lead to the dead states.

\textbf{Standard Reward Variant:} The reward is zero everywhere except at the last two states, where a reward of 5 is given for one of the actions. 

\textbf{Antishaped Reward Variant:} As above, a reward of 5 is given at the last two states for one of the actions. Furthermore, a reward of 0.1 is given for transitioning to any of the dead states (for example, from $s_{1, h}$ to $s_{3, h+1}$), and a negative reward of $-1/H$ is given for transitioning to any state which is not a dead state (for example, from $s_{1, h}$ to $s_{1, h+1}$). This means that until the agent has explored the last states which give high reward, the locally optimal policy appears to be to transition to the dead states as quickly as possible.

\begin{figure}[ht]
  \centering
\subfigure[With standard rewards]{\includegraphics[width=0.24\textwidth]{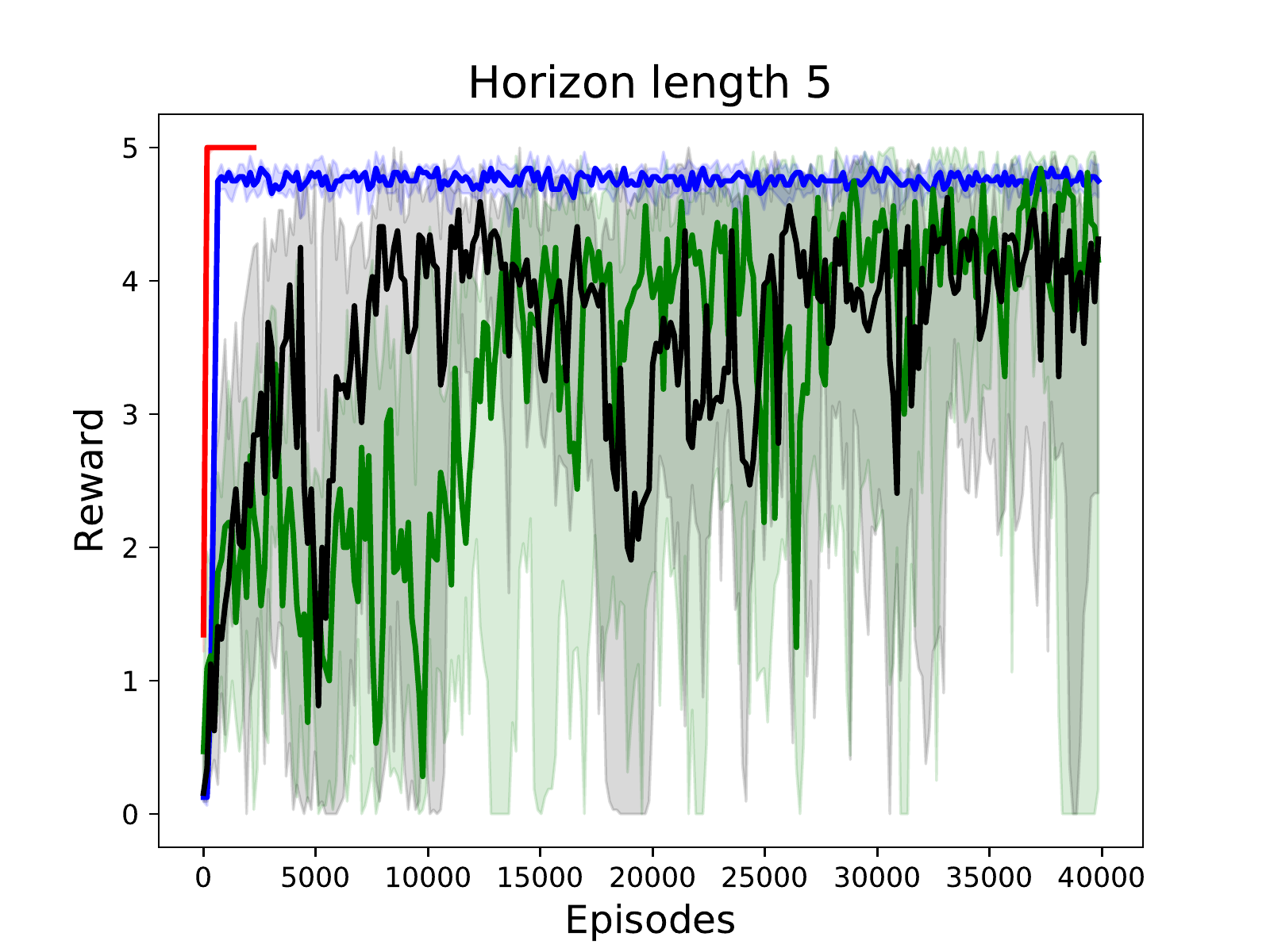}
\includegraphics[width=0.24\textwidth]{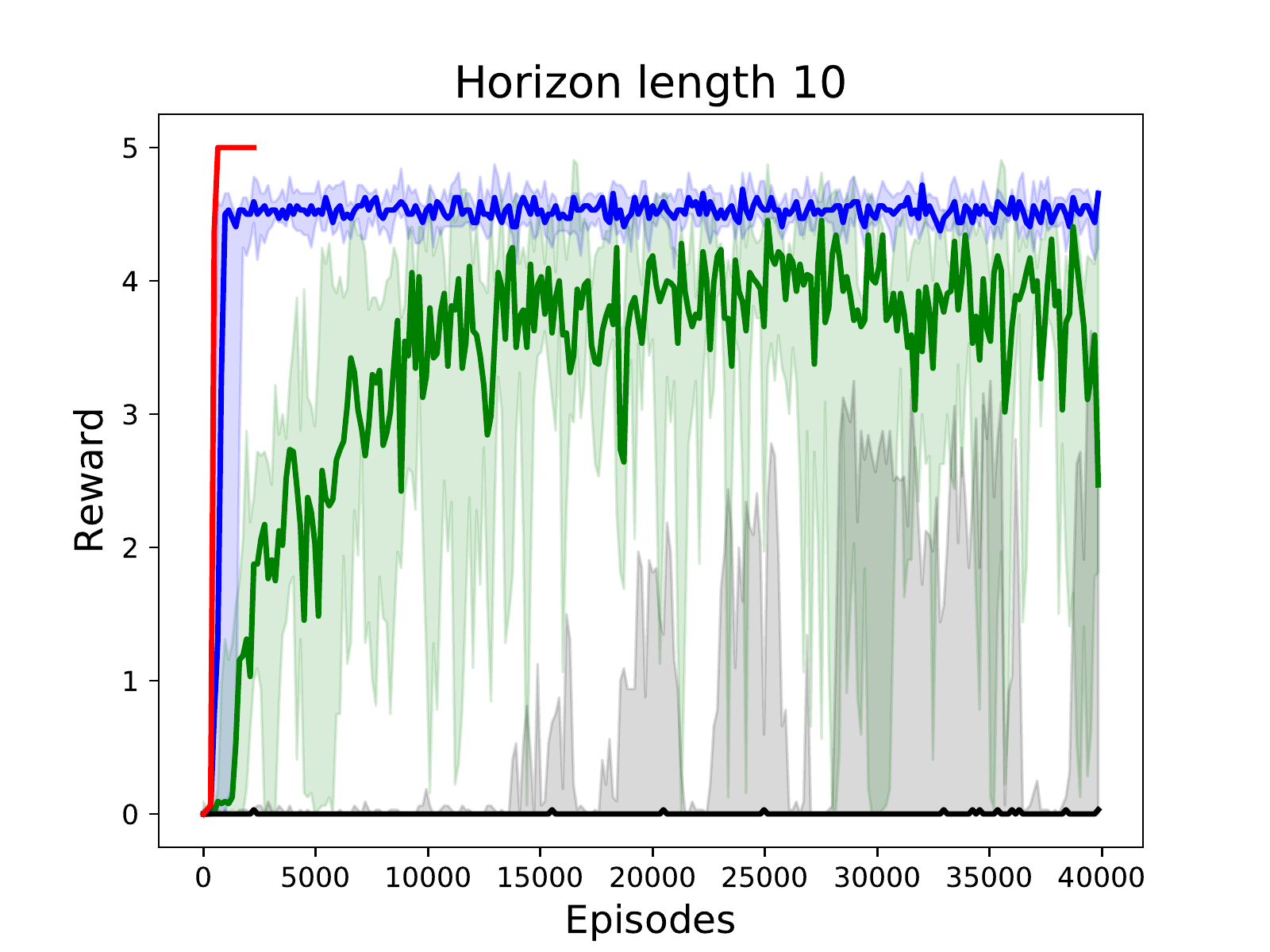}
\includegraphics[width=0.24\textwidth]{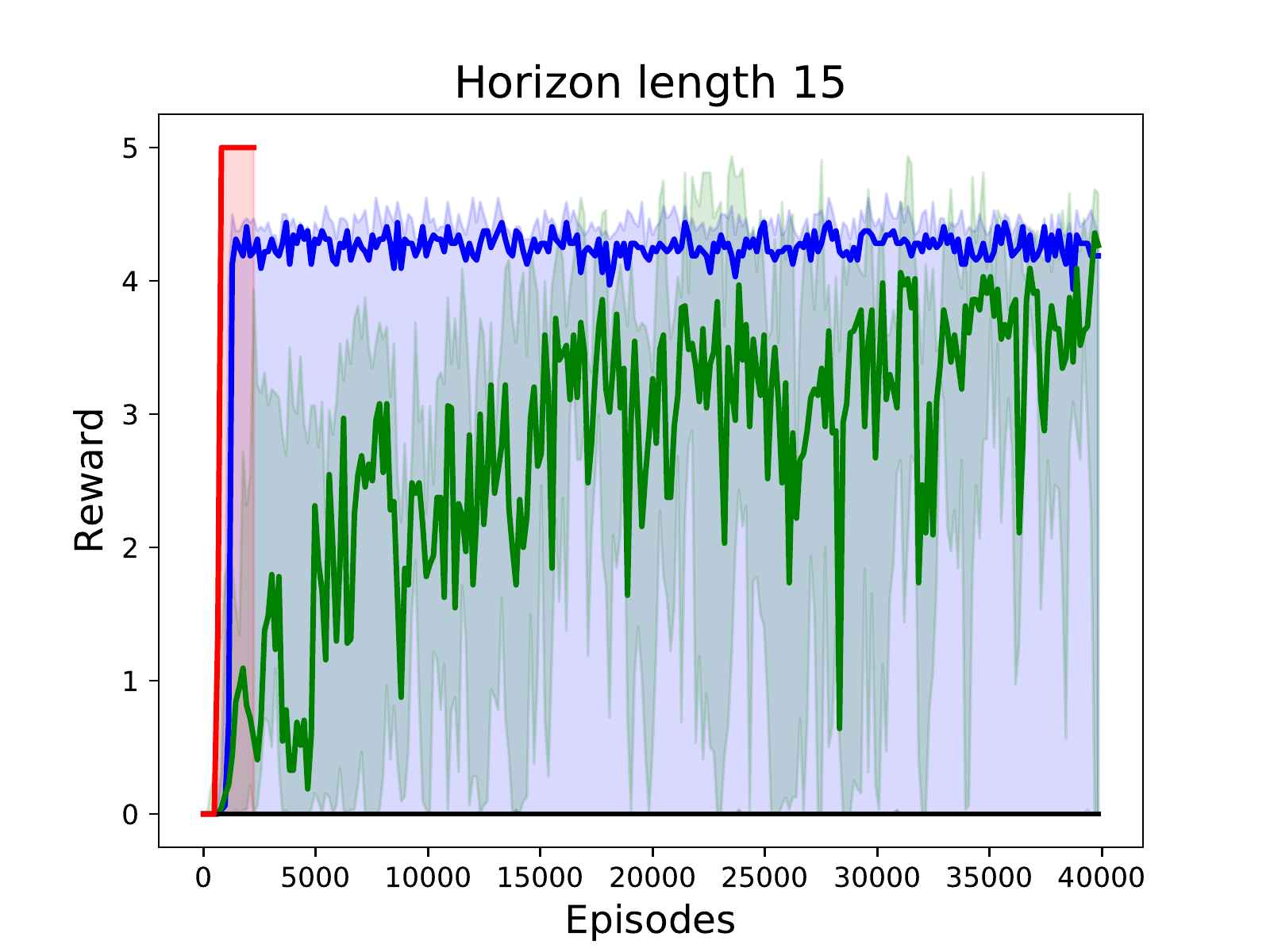}
\includegraphics[width=0.24\textwidth]{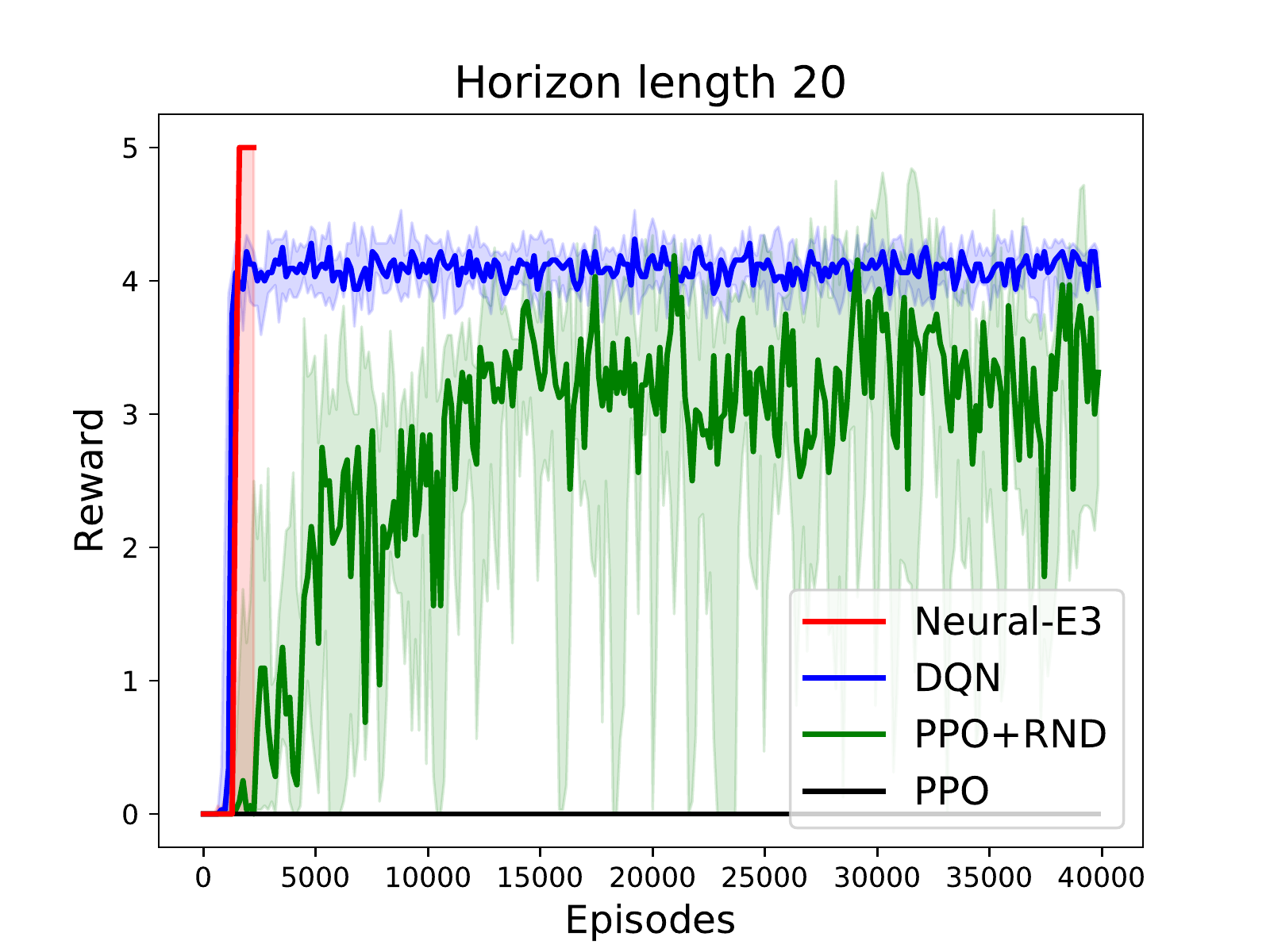}}
  \centering
\subfigure[With antishaped rewards]{\includegraphics[width=0.24\textwidth]{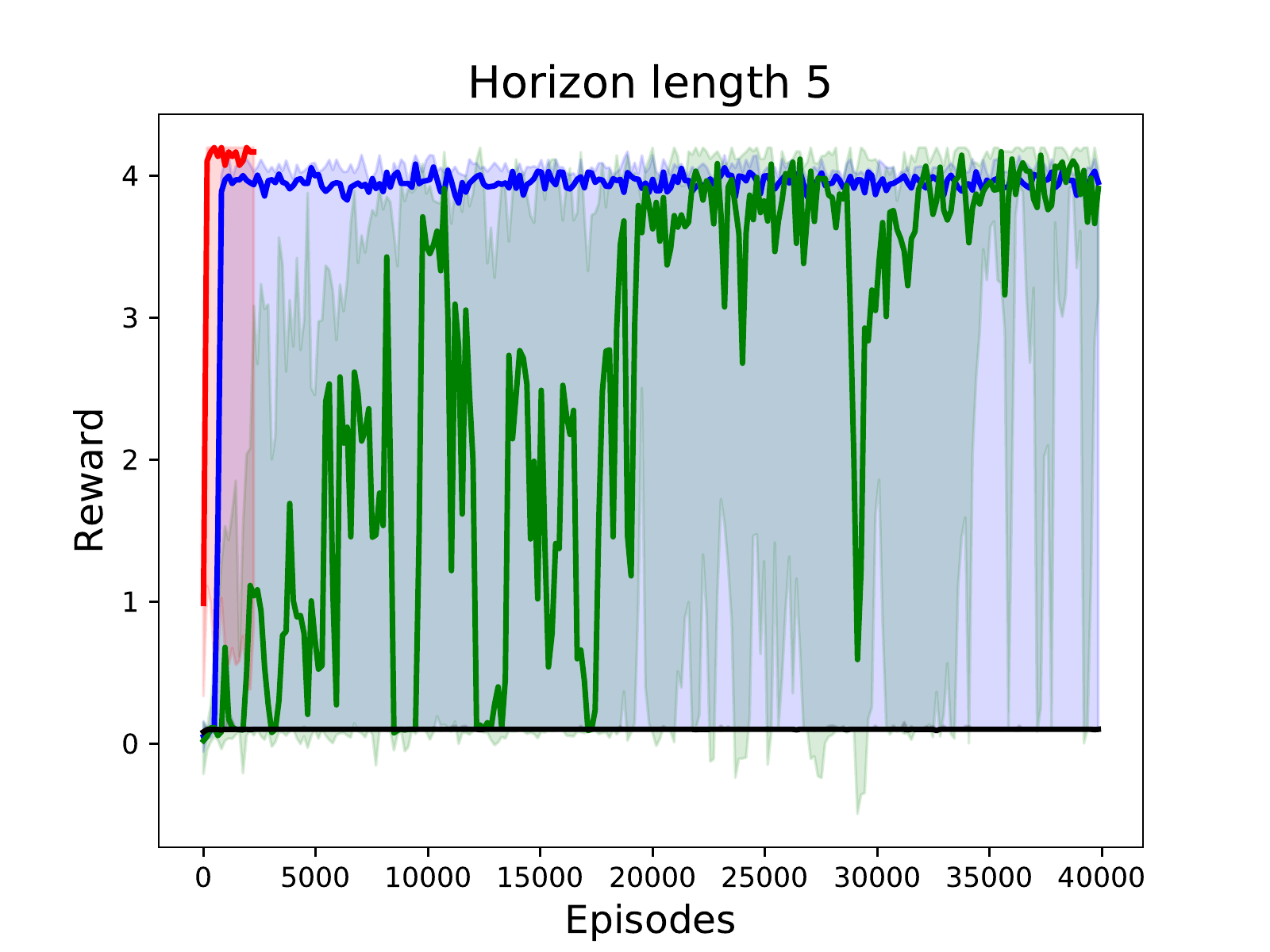}
\includegraphics[width=0.24\textwidth]{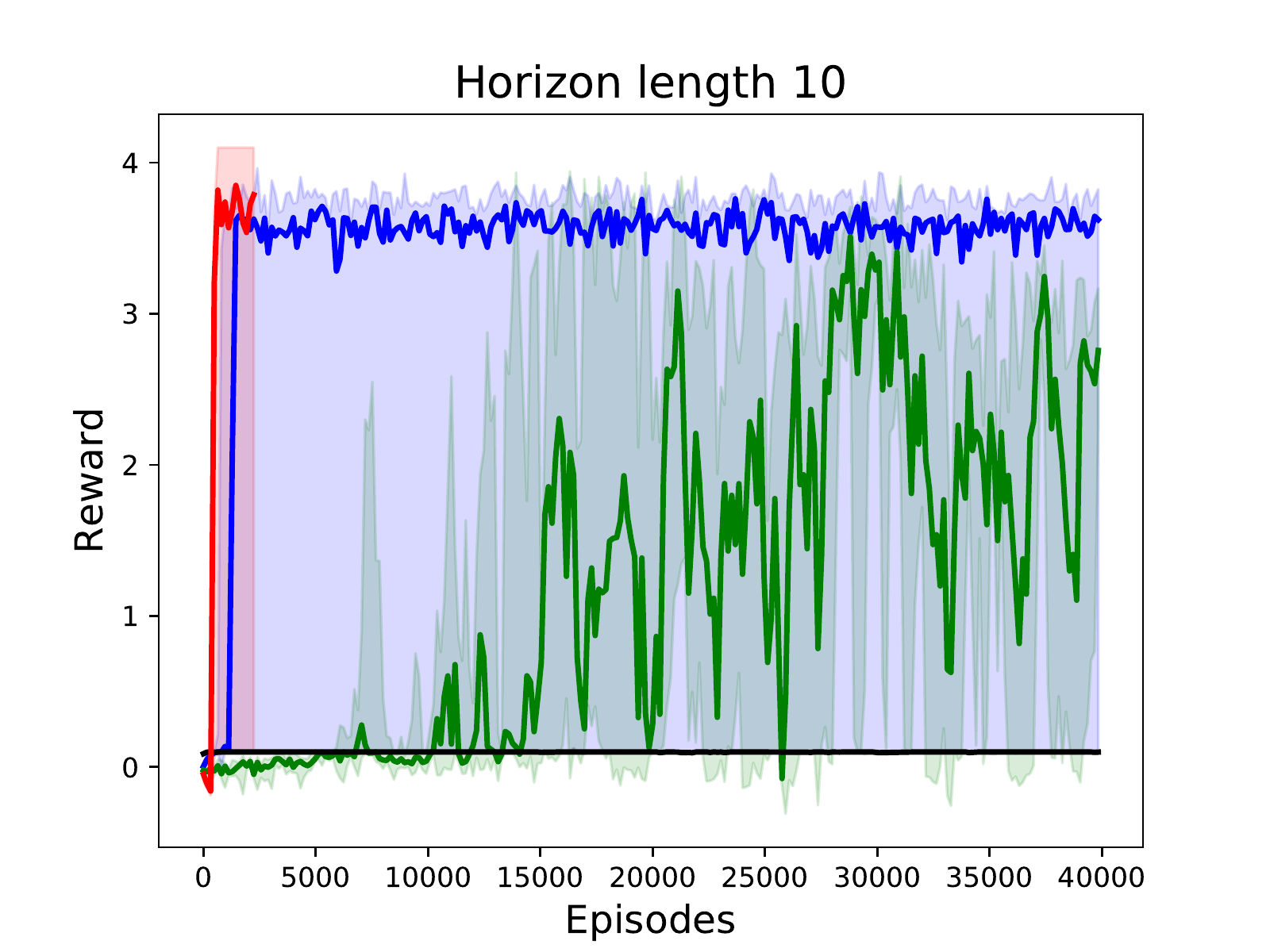}
\includegraphics[width=0.24\textwidth]{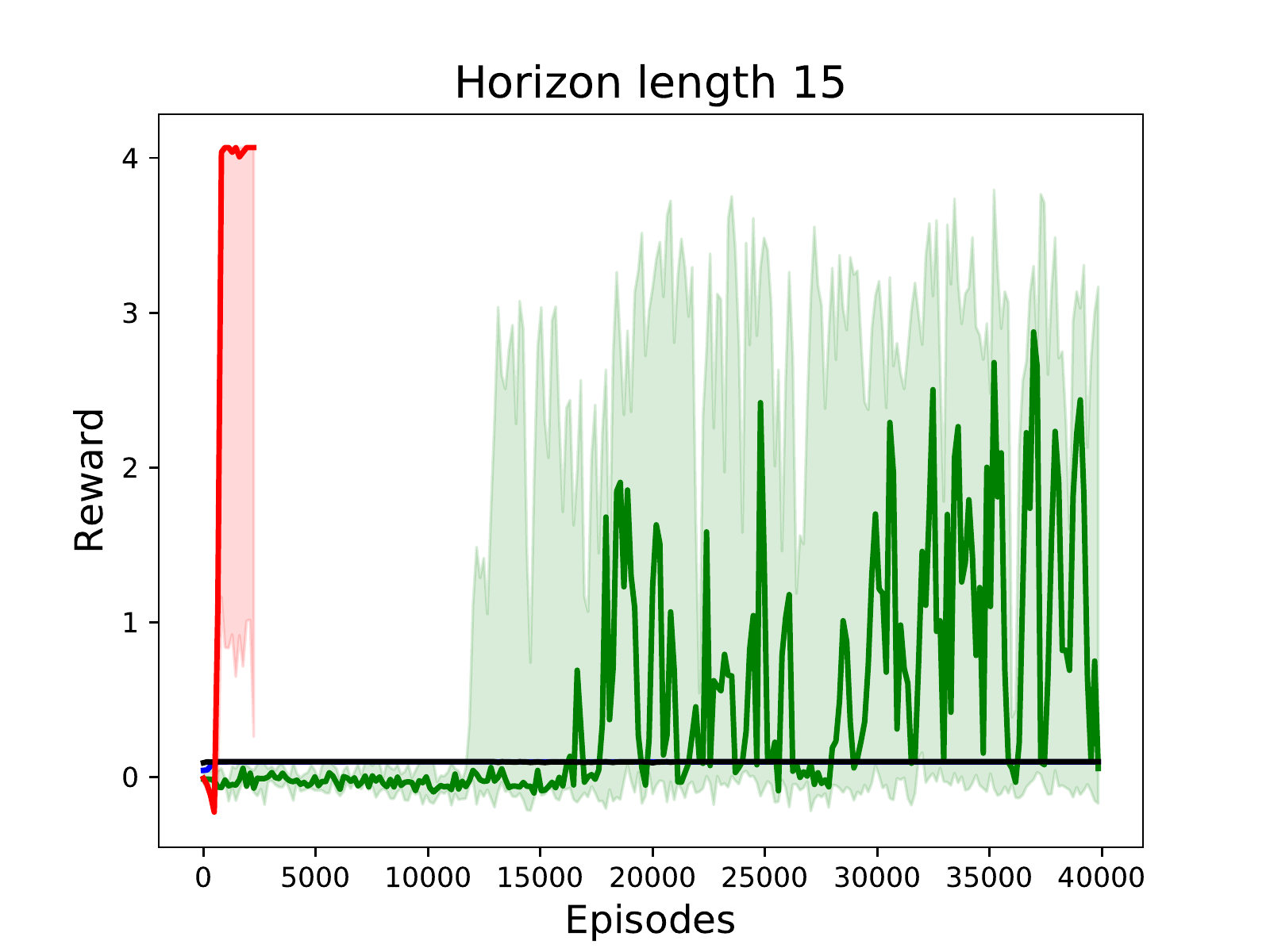}
\includegraphics[width=0.24\textwidth]{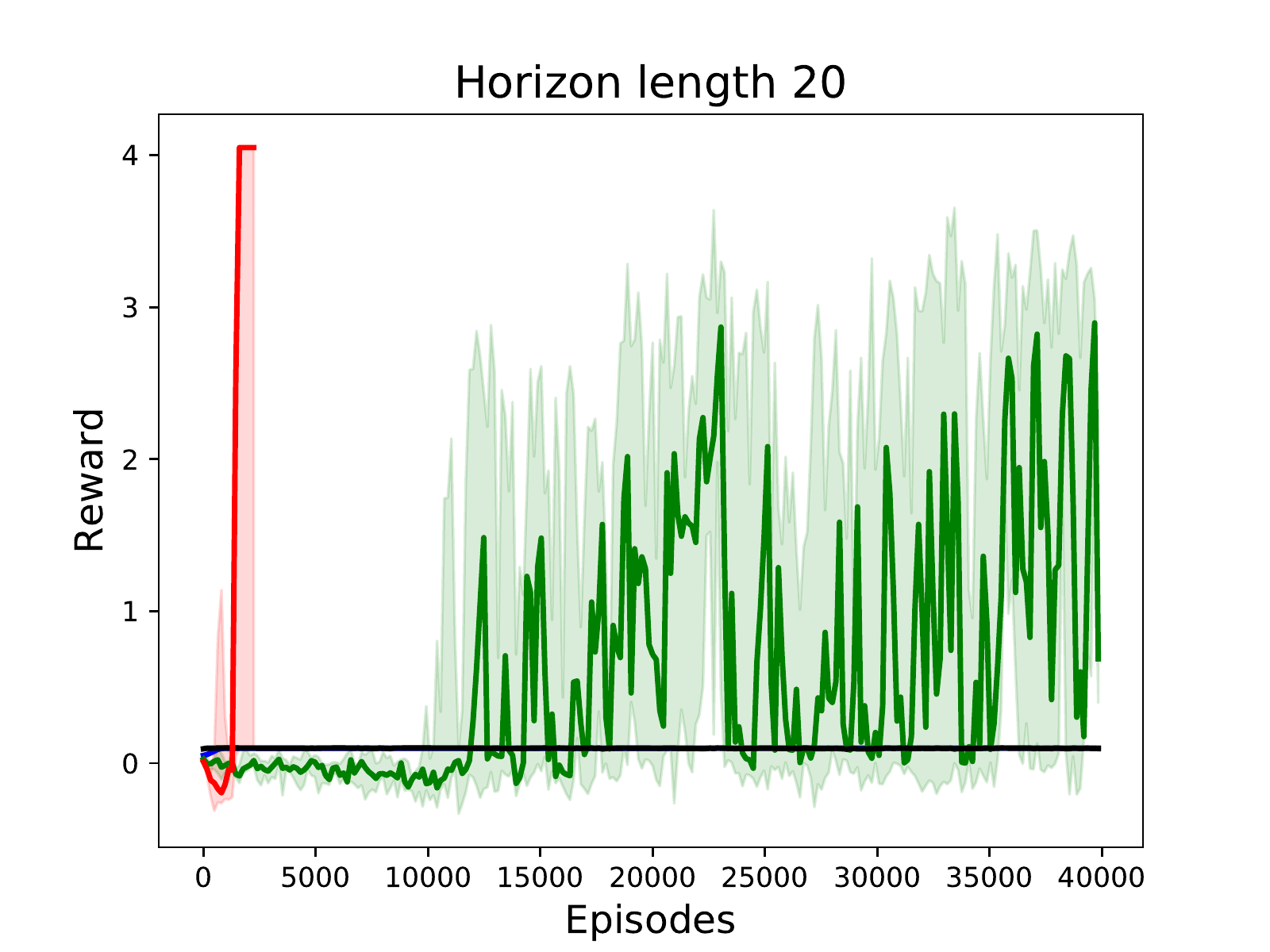}}
\caption{Results on the Stochastic Combination Lock task given more episodes. PPO+RND is able to eventually achieve reasonable performance given enough episodes.}
\label{fig:extra-combolock-results}
\end{figure}

\begin{table}[h!]
  \caption{DQN Hyperparameters}
  \centering
  \begin{tabular}{lll}
    \toprule
    Hyperparameter & Values Considered & Final Value \\
    \hline
    Learning Rate & $0.01, 0.001, 0.0001$ & $0.01$ \\
    Hidden Layer Size     & $64$  & $64$ \\
    Prioritized Replay & \texttt{true}& \texttt{true} \\
    Discount Factor & $0.99$ & $0.99$ \\
    Exploration Fraction (episodes) & $\{0.1, 0.01, 0.001\}$ & $0.001$ for standard rewards \\
    & & $0.01$ for antishaped rewards \\
    \bottomrule
  \end{tabular}
  \label{table:DQN-lock-hyperparams}
\end{table}

Figure \ref{fig:extra-combolock-results} shows results for both variants of the task for larger numbers of episodes. A somewhat surprising result was that for the standard variant of the task, the DQN is still able to achieve good performance for longer horizons, using much fewer samples than PPO+RND. We found that the DQN performed best when the exploration fraction is set to be very low ($0.001$ as shown in Table \ref{table:DQN-lock-hyperparams}), meaning that the DQN agent quickly begins to act greedily. This suggests that acting greedily leads the agent to explore the environment better than uniform exploration. Uniform exploration leads to a vanishingly small chance of reaching the reward ($\approx 10^{-6}$ for $H=20$). One explanation could be that the network happens to be initialized in a manner that gives optimistic estimates for the Q-values. We found that the DQN performance was highly dependent on implementation details, for example, the implementation in \cite{deeprl} gave very poor results, as did removing the prioritized experience replay. 

\begin{table}[h!]
  \caption{PPO+RND Hyperparameters}
  \centering
  \begin{tabular}{lll}
    \toprule
    Hyperparameter & Values Considered & Final Value \\
    \hline
    Learning Rate & $0.01, 0.001, 0.0001$ & $0.001$ \\
    Hidden Layer Size     & $64$  & $64$ \\
    $\gamma_I$     & $0.99$  & $0.99$ \\
    $\gamma_E$      & $0.999$  & $0.999$ \\
    $\lambda$ & 0.95 & 0.95 \\
    Intrinsic Reward coefficient     & $1.0$  & $1.0$ \\
    Extrinsic Reward coefficient     & $2, 100$  & $100$ \\
    \bottomrule
  \end{tabular}
\end{table}

\begin{table}[h!]
  \caption{$E^3$ Hyperparameters}
  \centering
  \begin{tabular}{lll}
    \toprule
    Hyperparameter & Values Considered & Final Value \\
    \hline
    Learning Rate & $0.01, 0.001$ & $0.01$ \\
    Hidden Layer Size     & $50, 100$  & $50$ \\
    Ensemble Size     & $5, 10$  & $5$ \\
    Minibatch Size     & $100$  & $100$ \\
    Number of Exploration Epochs     & $25H, 50H, 75H$  & Horizon-dependent: \\
    & & $H=5: 25H, H=10: 50H$ \\
    & & $H=15: 50H, H=20: 75H$\\
    Exploration Episodes per Epoch     & $1$  & $1$ \\
    Model Updates per Epoch     & $100$  & $100$ \\
    MCTS playouts    & $200$  & $200$ \\
    MCTS samples ($K$)    & $100$  & $100$ \\
    \bottomrule
  \end{tabular}
\end{table}

For Neural-E$^3$, we found that training a DQN offline using the data collected in the replay buffer (as described in Section 4.3) performed better than using MCTS to maximize the reward, especially on the task variant with antishaped rewards.
This is likely because MCTS biases the search tree towards action sequences which accumulate the best reward so far, and so the misleading rewards can lead the search procedure away from action sequences which produce the globally optimal reward. All the Neural-E$^3$ results reported use the DQN exploitation method.

\subsection{Maze Domain}
\label{appendix-maze}

We used the source code for the maze environment provided by the authors \url{https://github.com/junhyukoh/value-prediction-network}, and set the number of goals to 1 and the time limit to 100.
All results are reported using 3 random seeds.

The forward dynamics model architecture is a 3-layer convolutional network (1 convolutional layer followed by 2 deconvolutional layers, all with 16 feature maps).
Actions are embedded to a 16-dimensional vector replicated across all spatial locations and added to the feature maps.
A separate reward head consists of 2 strided convolution layers followed by a fully-connected layer producing a scalar.

\begin{table}[h!]
  \caption{DQN Hyperparameters}
  \centering
  \begin{tabular}{lll}
    \toprule
    Hyperparameter & Values Considered & Final Value \\
    \hline
    Learning Rate & $10^{-3}, 10^{-4}, 10^{-5}$ & $10^{-4}$ \\
    Feature Maps     & $8, 32$  & $8$ \\
    Convolutional Layers     & $1, 2, 3$  & $1$ \\
    Hidden Layer Size     & $64, 256$  & $64$ \\
    Prioritized Replay & \texttt{true}& \texttt{true} \\
    Parameter Noise & \texttt{false} & \texttt{false} \\
    Discount Factor & $0.99$ & $0.99$ \\
    \bottomrule
  \end{tabular}
\end{table}

\begin{table}[h!]
  \caption{$E^3$ Hyperparameters}
  \centering
  \begin{tabular}{lll}
    \toprule
    Hyperparameter & Values Considered & Final Value \\
    \hline
    Learning Rate & $10^{-3}, 10^{-4}$ & $10^{-3}$ \\
    Number of Feature Maps     & $16$  & $16$ \\
    Hidden Layer Size     & $64$  & $64$ \\
    Ensemble Size     & $4, 8$  & $4$ \\
    Minibatch Size     & $64$  & $64$ \\
    Number of exploration Epochs     & $5, 10$  & $5$ \\
    Exploration Episodes per Epoch     & $10$  & $10$ \\
    Model Updates per Epoch     & $10000$  & $10000$ \\
    Unrolling steps ($K$)     & $10, 20$  & $10$ \\
    Maximum Graph Size during Planning ($N_{\mathrm{max}}$)     & $500, 1000, 2000, 5000$  & $2000$ \\
    \bottomrule
  \end{tabular}
\end{table}

\begin{figure}
  \centering
  \includegraphics[width=0.7\textwidth]{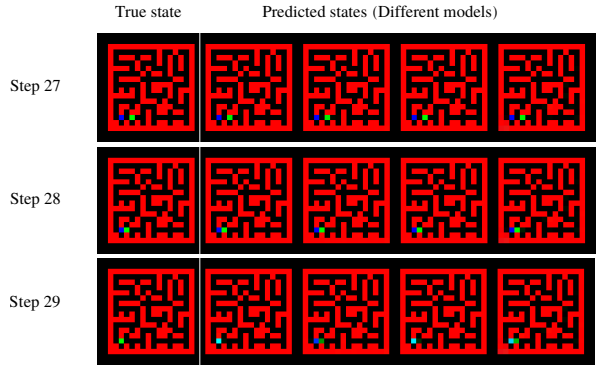}
  \caption{Predictions by the different dynamics models in the ensemble for the Maze task, 29 steps into the future (best viewed in color). The green dot is the agent and the blue dot is the goal. The models all agree in their predictions up to steps 27 and 28, but disagree for step 29 where the agent collects the reward.}
  \label{fig:predictions-crop}
\end{figure}

\subsection{Continuous Control Domains}

We used the environments provided by OpenAI Gym \cite{Gym}, available at: \url{https://gym.openai.com/envs/#classic_control}.
In initial experiments we experimented with adding parameter noise to the DQN, but found that this did not help. 

\begin{table}[h!]
  \caption{DQN Hyperparameters}
  \centering
  \begin{tabular}{lll}
    \toprule
    Hyperparameter & Values Considered & Final Value \\
    \hline
    Learning Rate & $10^{-2}, 10^{-3}, 10^{-4}$ & $10^{-3}$ \\
    Hidden Layer Size     & $64, 256$  & $64$ \\
    Prioritized Replay & \texttt{true}, \texttt{false} & \texttt{true} \\
    Parameter Noise & \texttt{true}, \texttt{false} & \texttt{false} \\
    Discount Factor & $0.99$ & $0.99$ \\
    \bottomrule
  \end{tabular}
\end{table}

\begin{table}[h!]
  \caption{PPO+RND Hyperparameters}
  \centering
  \begin{tabular}{lll}
    \toprule
    Hyperparameter & Values Considered & Final Value \\
    \hline
    Learning Rate & $10^{-3}, 10^{-4}, 10^{-5}$ & $10^{-4}$ \\
    Hidden Layer Size     & $64$  & $64$ \\
    $\gamma_I$     & $0.99$  & $0.99$ \\
    $\gamma_E$      & $0.999$  & $0.999$ \\
    $\lambda$ & 0.95 & 0.95 \\
    Intrinsic Reward coefficient     & $1.0$  & $1.0$ \\
    Extrinsic Reward coefficient     & $2$  & $2$ \\
    \bottomrule
  \end{tabular}
\end{table}

The forward model architecture is a 3-layer MLP with LeakyReLU non-linearities. The action is embedded to a vector of size 64 and multiplied component-wise with the first layer of hidden units. All models are trained using Adam \cite{ADAM}.

\begin{table}[h!]
  \caption{$E^3$ Hyperparameters}
  \centering
  \begin{tabular}{lll}
    \toprule
    Hyperparameter & Values Considered & Final Value \\
    \hline
    Learning Rate & $10^{-3}, 10^{-4}$ & $10^{-4}$ \\
    Hidden Layer Size     & $64$  & $64$ \\
    Ensemble Size     & $8$  & $8$ \\
    Minibatch Size     & $64$  & $64$ \\
    Number of exploration Epochs     & $10$  & $10$ \\
    Exploration Episodes per Epoch     & $\{10, 20\}$  & $10$ \\
    Model Updates per Epoch     & $2000$  & $2000$ \\
    Unrolling steps ($K$)     & $20$  & $20$ \\
    Maximum Graph Size during Planning ($N_{\mathrm{max}}$)     & $2000$  & $2000$ \\
    DQN Learning Rate & $10^{-2}, 3\cdot 10^{-3}, 1\cdot 10^{-3}, 3\cdot 10^{-4}, 1 \cdot 10^{-4}$ & $3\cdot 10^{-4}$ \\
    DQN Updates for Exploit Phase & $\{500000, 750000, 1000000\}$ & $750000$ \\
    DQN Target Network Update Frequency & $5000$ & $5000$ \\
    \bottomrule
  \end{tabular}
\end{table}

For the exploit phase, we initially train a DQN for 750000 updates on the data collected from the replay buffer.
It is then continued to be trained, and if performance begins decreasing, the model is reverted to its best performing set of weights and training is stopped.

\end{document}